\DeclareMathAlphabet\mathbb{U}{msb}{m}{n}
\def\Rset{\mathbb{R}}
\let\Pr\undefined
\DeclareMathOperator*{\Pr}{\mathbb{P}}
\DeclareMathOperator*{\E}{\mathbb E}
\DeclareMathOperator*{\argmax}{argmax}
\DeclareMathOperator*{\argmin}{argmin}
\DeclarePairedDelimiter{\abs}{\lvert}{\rvert} 
\DeclarePairedDelimiter{\bracket}{[}{]}
\DeclarePairedDelimiter{\curl}{\{}{\}}
\DeclarePairedDelimiter{\paren}{(}{)}
\newcommand{\sC}{{\mathscr C}}
\newcommand{\sD}{{\mathscr D}}
\newcommand{\sE}{{\mathscr E}}
\newcommand{\sF}{{\mathscr F}}
\newcommand{\sH}{{\mathscr H}}
\newcommand{\sM}{{\mathscr M}}
\newcommand{\sX}{{\mathscr X}}
\newcommand{\sY}{{\mathscr Y}}
\newcommand{\sfD}{{\mathsf D}}
\newcommand{\sfL}{{\mathsf L}}
\newcommand{\Rad}{\mathfrak R}
\newcommand{\h}{\widehat}
\newcommand{\ov}{\overline}
\newcommand{\e}{\epsilon}
\newcommand{\ignore}[1]{}
\DeclareMathOperator{\sign}{sign}
\newcommand{\hh}{{\sf h}}
\newcommand{\lbi}{{\sfL}}
\newcommand{\pred}{{\mathrm{pred}}}
\title[Enhanced $\sH$-Consistency Bounds]{Enhanced $\sH$-Consistency Bounds}
\begin{document}

\maketitle

\begin{abstract}%
Recent research has introduced a key notion of $\sH$-consistency
bounds for surrogate losses. These bounds offer finite-sample
guarantees, quantifying the relationship between the zero-one
estimation error (or other target loss) and the surrogate loss
estimation error for a specific hypothesis set. However, previous
bounds were derived under the condition that a lower bound of the
surrogate loss conditional regret is given as a convex function of the
target conditional regret, without non-constant factors depending on
the predictor or input instance. Can we derive finer and more
favorable $\sH$-consistency bounds? In this work, we relax this
condition and present a general framework for establishing
\emph{enhanced $\sH$-consistency bounds} based on more general
inequalities relating conditional regrets. Our theorems not only
subsume existing results as special cases but also enable the
derivation of more favorable bounds in various scenarios. These
include standard multi-class classification, binary and multi-class
classification under Tsybakov noise conditions, and bipartite ranking.
\end{abstract}

\begin{keywords}%
consistency, $\sH$-consistency, surrogate loss, learning theory
\end{keywords}

\section{Introduction}
\label{sec:intro}

The design of accurate and reliable learning algorithms hinges on the
choice of surrogate loss functions, since optimizing the true target
loss is typically intractable.  A key property of these surrogate
losses is Bayes-consistency, which guarantees that minimizing the
surrogate loss leads to the minimization of the true target loss in
the limit. This property has been well-studied for convex margin-based
losses in both binary \citep{Zhang2003,bartlett2006convexity} and
multi-class classification settings
\citep{tewari2007consistency}. However, this classical notion has
significant limitations since it only holds asymptotically and for the
impractical set of all measurable functions. Thus, it fails to provide
guarantees for real-world scenarios where learning is restricted to
specific hypothesis sets, such as linear models or neural networks.
In fact, Bayes-consistency does not always translate into superior
performance, as highlighted by \citet{long2013consistency}.

Recent research has addressed these limitations by introducing
$\sH$-consistency bounds
\citep*{zhong2025fundamental,awasthi2022Hconsistency,awasthi2022multi,mao2023cross,
  MaoMohriZhong2023characterization,MaoMohriZhong2023structured}. These
bounds offer non-asymptotic guarantees, quantifying the relationship
between the zero-one estimation error (or other target loss) and the
surrogate loss estimation error for a specific hypothesis set. While
existing work has characterized the general behavior of these bounds
\citep{MaoMohriZhong2024}, particularly for smooth surrogates in binary
and multi-class classification, their derivation has been restricted
by certain assumptions. Specifically, previous bounds were derived
under the condition that a lower bound of the surrogate loss
conditional regret is given as a convex function of the target
conditional regret, without non-constant factors depending on the
predictor or input instance. Can we derive finer and more favorable
$\sH$-consistency bounds?

In this work, we relax this condition and present a general framework
for establishing \emph{enhanced $\sH$-consistency bounds} based on more
general inequalities relating conditional regrets. Our theorems not
only subsume existing results as special cases but also enable the
derivation of tighter bounds in various scenarios. These include
standard multi-class classification, binary and multi-class
classification under Tsybakov noise conditions, and bipartite ranking.

The remainder of this paper is organized as follows. In
Section~\ref{sec:general-tools}, we prove general theorems serving as
new fundamental tools for deriving enhanced $\sH$-consistency bounds.
These theorems allow for the presence of non-constant factors $\alpha$
and $\beta$ which can depend on both the hypothesis $h$ and the input
instance $x$. They include as special cases previous $\sH$-consistency
theorems, where $\alpha \equiv 1$ and $\beta \equiv 1$.  Furthermore,
the bounds of these theorems are tight.
In Section~\ref{sec:standard-multi}, we apply these tools to establish
enhanced $\sH$-consistency bounds for constrained losses in standard
multi-class classification. These bounds are enhanced by incorporating
a new hypothesis-dependent quantity, $\Lambda(h)$, not present in
previous work.
Next, in Section~\ref{sec:low-noise}, we derive a series of new and
substantially more favorable $\sH$-consistency bounds under Tsybakov
noise conditions. Our bounds in binary classification
(Section~\ref{sec:low-noise-binary}) recover as special cases some
past results and even improve upon some.  Our bounds for multi-class
classification (Section~\ref{sec:low-noise-multi}) are entirely new
and do not admit any past counterpart even in special cases. To
illustrate the applicability of our results, we instantiate them for
common surrogate losses in both binary and multi-class classification.
\ignore{(see Appendix~\ref{app:example})}

In Section~\ref{sec:bi-ranking}, we extend our new fundamental tools to
the bipartite ranking setting (Section~\ref{sec:bi-tools}) and
leverage them to derive novel $\sH$-consistency bounds relating
classification surrogate losses to bipartite ranking surrogate losses.
We also identify a necessary condition for loss functions to admit
such bounds.  We present a remarkable direct upper bound on the
estimation error of the RankBoost loss function, expressed in terms of
the AdaBoost loss, with a multiplicative factor equal to the
classification error of the predictor
(Section~\ref{sec:bi-exp}). Additionally, we prove another surprising
result with a different non-constant factor for logistic regression
and its ranking counterpart (Section~\ref{sec:bi-log}). Conversely, we
establish negative results for such bounds in the case of the hinge
loss (Section~\ref{sec:bi-hinge}).

In Appendix~\ref{app:generalization}, we provide novel enhanced
generalization bounds. We provide a detailed discussion of related
work in Appendix~\ref{app:related-work}. We begin by establishing the
necessary terminology and definitions.

\section{Preliminaries}
\label{sec:pre}

We consider the standard supervised learning setting. Consider $\sX$
as the input space, $\sY$ as the label space, and $\sD$ as a
distribution over $\sX \times \sY$. Given a sample $S = \paren*{(x_1,
  y_1), \ldots, (x_m, y_m)}$ draw i.i.d.\ according to $\sD$, our goal
is to learn a hypothesis $h$ that maps $\sX$ to a prediction space,
denoted by $\pred$. This hypothesis is chosen from a predefined
hypothesis set $\sH$, which is a subset of the family of all
measurable functions, denoted by $\sH_{\rm{all}} = \curl*{h \colon \sX
  \to \pred \mid h \text{ measurable}}$.  We denote by $\ell \colon
\sH \times \sX \times \sY \to \Rset_{+}$ the loss function that
measures the performance of a hypothesis $h$ on any pair $(x,
y)$. Given a loss function $\ell$ and a hypothesis set $\sH$, we
denote by $\sE_{\ell}(h) = \E_{(x, y) \sim \sD} \bracket*{\ell(h, x,
  y)}$ the generalization error and by $\sE^*_{\ell}(\sH) = \inf_{h
  \in \sH} \sE_{\ell}(h)$ the best-in-class generalization error. We
further define the conditional error and the best-in-class condition
error as $\sC_{\ell}(h, x) = \E_{y \mid x} \bracket*{\ell(h, x, y)}$
and $\sC^*_{\ell}(\sH, x) = \inf_{h \in \sH} \sC_{\ell}(h, x)$,
respectively. Thus, the generalization error can be rewritten as
$\sE_{\ell}(h) = \E_X \bracket*{\sC_{\ell}(h, x)}$. For convenience,
we refer to $\sE_{\ell}(h) - \sE^*_{\ell}(\sH)$ as the
\emph{estimation error}, to $\sE^*_{\ell}(\sH) - \sE^*_{\ell}\paren*{\sH_{\rm{all}}}$ as the
\emph{estimation error} and to $\Delta \sC_{\ell, \sH}(h, x) \coloneqq
\sC_{\ell}(h, x) - \sC^*_{\ell}(\sH, x)$ as the \emph{conditional
regret}.

Minimizing the target loss function, as specified by the learning
task, is typically NP-hard. Instead, a surrogate loss function is
often minimized. This paper investigates how minimizing surrogate
losses can guarantee the minimization of the target loss function.  We
are especially interested in three applications: binary
classification, multi-class classification, and bipartite ranking,
although our general results are applicable to any supervised learning
framework.

\textbf{Binary classification.} Here, the label space is $\sY =
\curl*{\minus 1, \plus 1}$, and the prediction space is $\pred =
\Rset$. The target loss function is the binary zero-one loss, defined
by $\ell^{\rm{bi}}_{0-1}(h, x, y) = 1_{\sign(h(x)) \neq y}$, where
$\sign(t) = 1$ if $t \geq 0$ and $-1$ otherwise. Let $\eta(x) =
\mathbb{P} (Y = \plus 1 \mid X = x)$ be the conditional probability of
$Y = \plus 1$ given $X = x$. The condition error can be expressed
explicitly as $\sC_{\ell}(h, x) = \eta(x) \ell(h, x, +1) + \paren*{1 -
  \eta(x)} \ell(h, x, -1)$. Common surrogate loss functions include
the margin-based loss functions $\ell_{\Phi}(h, x, y) = \Phi(yh(x))$,
for some function $\Phi$ that is non-negative and non-increasing.

\textbf{Multi-class classification.} Here, the label space is $[n]
\coloneqq \curl*{1, \ldots, n}$, and the prediction space is $\pred =
\Rset^n$ for some $n \in \mathbb{Z}_{+}$. Let $h(x, y)$ denote the
$y$-th element of $h(x)$, where $y \in [n]$. The target loss function
is the multi-class zero-one loss, defined by $\ell_{0-1}(h, x, y) =
1_{\hh(x) \neq y}$, where $\hh(x) = \argmax_{y \in \sY} h(x, y)$. An
arbitrary but fixed deterministic strategy is used for breaking
ties. For simplicity, we fix this strategy to select the label with
the highest index under the natural ordering of labels.  Let $p(y \mid
x) = \mathbb{P} (Y = y \mid X = x)$ be the conditional probability of
$Y = y$ given $X = x$. The condition error can be explicitly expressed
as $\sC_{\ell}(h, x) = \sum_{y \in \sY} p(y \mid x) \ell(h, x,
y)$. Common surrogate loss functions include the max losses
\citep{crammer2001algorithmic}, constrained losses
\citep{lee2004multicategory}, and comp-sum losses
\citep{mao2023cross}.

\textbf{Bipartite ranking.} Here, the label space is $\sY =
\curl*{\minus 1, \plus 1}$, and the prediction space is $\pred =
\Rset$.  Unlike the previous two settings, the goal here is to
minimize the bipartite misranking loss $\sfL_{0-1}$, defined for any
two pairs $(x, y)$ and $(x', y')$ drawn i.i.d.\ according to $\sD$,
and a hypothesis $h$: $\sfL_{0-1}(h, x, x', y, y') = 1_{(y - y')(h(x)
  - h(x')) < 0} + \frac{1}{2} 1_{(h(x) = h(x')) \wedge (y\neq y')}$.
Let $\eta(x) = \mathbb{P} (Y = \plus 1 \mid X = x)$ be the conditional
probability of $Y = \plus 1$ given $X = x$. Given a loss function
$\sfL \colon \sH \times \sX \times \sX \times \sY \times \sY \to
\Rset_{+}$ and a hypothesis set $\sH$, the generalization error and
the condition error can be defined accordingly as $\sE_{\sfL}(h) =
\E_{(x, y) \sim \sD, (x', y') \sim \sD}\bracket*{\sfL(h, x, x', y,
  y')}, \ov \sC_{\sfL}(h, x, x') = \eta(x)(1 - \eta(x')) \sfL(h, x,
x', \plus 1, \minus 1) + \eta(x')(1 - \eta(x)) \sfL(h, x, x', \minus
1, \plus 1)$.  The best-in-class generalization error and
best-in-class condition error can be expressed as $\sE^*_{\sfL}(\sH) =
\inf_{h \in \sH} \sE_{\sfL}(h)$ and $\ov \sC^*_{\sfL}(\sH, x, x') =
\inf_{h \in \sH} \sC_{\sfL}(h, x, x')$, respectively. The estimation
error and conditional regret can be written as $\sE_{\sfL}(h) -
\sE^*_{\sfL}(\sH)$ and $\Delta \ov \sC_{\sfL, \sH}(h, x, x') = \ov
\sC_{\sfL}(h, x, x') - \ov \sC^*_{\sfL}(\sH, x, x')$, respectively.
Common bipartite ranking surrogate loss functions typically take the
following form: $\sfL_{\Phi}(h, x, x', y, y') = \Phi \paren*{\frac{(y
    - y') \paren*{h(x) - h(x')}}{2}} 1_{y \neq y'}$, for some function
$\Phi$ that is non-negative and non-increasing. Another choice is to
use the margin-based loss $\ell_{\Phi}(h, x, y) = \Phi(y h(x))$ in
binary classification as a surrogate loss. We will specifically be
interested in the guarantees of minimizing $\ell_{\Phi}$ with respect
to the minimization of $\sfL_{\Phi}$.

\section{New fundamental tools for $\sH$-consistency bounds}
\label{sec:general-tools}

This section introduces new tools for deriving finer and more general
$\sH$-consistency bounds. We begin with a brief overview of
$\sH$-consistency.

\textbf{Background on $\sH$-Consistency bounds.} A desirable property of surrogate loss functions is \emph{Bayes-consistency}
\citep{Zhang2003,bartlett2006convexity,steinwart2007compare,tewari2007consistency}. Bayes-consistency ensures that, asymptotically, minimizing a surrogate
loss $\ell_1$ over all measurable functions, denoted by
$\sH_{\rm{all}}$, leads to the minimization of the target loss
function $\ell_2$ over the same function family:
\begin{equation*}
  \sE_{\ell_1}(h_n)
  - \sE^*_{\ell_1}(\sH_{\rm{all}}) \xrightarrow{n \rightarrow +\infty} 0
  \implies
  \sE_{\ell_2}(h_n) -
  \sE^*_{\ell_2}(\sH_{\rm{all}}) \xrightarrow{n \rightarrow +\infty} 0.
\end{equation*}
However, Bayes-consistency is an asymptotic property, providing no
guarantees for approximate minimizers. Additionally, it applies only
to the family of all measurable functions, which is less relevant in
practical scenarios where restricted hypothesis sets $\sH$ are used.
To address these limitations,
\citet{awasthi2022Hconsistency,awasthi2022multi} proposed a more
refined framework, called \emph{$\sH$-consistency bounds}. These
bounds provide upper bounds on the target estimation error in terms of
the surrogate estimation error for a concave function $\Gamma \geq 0$ with $\Gamma(0) = 0$:
\begin{equation}
  \label{eq:H-consistency-bound}
  \sE_{\ell_2}(h) - \sE^*_{\ell_2}(\sH) +
  \sM_{\ell_2}(\sH)\leq \Gamma\paren*{\sE_{\ell_1}(h)
    - \sE^*_{\ell_1}(\sH) + \sM_{\ell_1}(\sH)},
\end{equation}
where $\sM_{\ell}(\sH) = \sE^*_{\ell}(\sH) - \E_X
\bracket*{\sC^*_{\ell}(\sH, x)} \geq 0$ represents the
\emph{minimizability gap}, which measures the difference between the
best-in-class generalization error and the expected best-in-class
conditional error.  This concept can also be adapted to the bipartite
ranking setting, with $\sM_{\sfL}(\sH) = \sE^*_{\sfL}(\sH) - \E_{(x,
  x')} \bracket*{\ov \sC^*_{\sfL}(\sH, x, x')}$.

The minimizability gap is always upper bounded by the approximation
error but it is generally a more fine-grained measure
\citep{MaoMohriZhong2023characterization}.  When $\sH =
\sH_{\rm{all}}$ or $\sE^*_{\ell}(\sH) = \sE^*_{\ell}(\sH_{\rm{all}})$,
the minimizability gaps vanish \citep{steinwart2007compare} leading to
\emph{excess error bounds} that imply Bayes-consistency, by taking the
limit. However, in general, minimizability gaps are non-zero and
represent an inherent quantity depending on the distribution and the
hypothesis.

Thus, $\sH$-consistency bounds provide a stronger and more informative
guarantee than Bayes-consistency, since they are both non-asymptotic
and specific to the hypothesis set $\sH$ used. Note that, by the
sub-additivity of a concave function $\Gamma \geq 0$, an
$\sH$-consistency bound also implies
\[
\sE_{\ell_2}(h) - \sE^*_{\ell_2}(\sH) \leq
\Gamma\paren*{\sE_{\ell_1}(h) - \sE^*_{\ell_1}(\sH)} + \Gamma
\paren*{\sM_{\ell_1}(\sH)} - \sM_{\ell_2}(\sH),
\]
where $\Gamma \paren*{\sM_{\ell_1}(\sH)} - \sM_{\ell_2}(\sH)$ is an
inherent constant depending on the hypothesis set and distribution.
The ultimate algorithmic goal when using a surrogate loss $\ell_1$ is
to minimize the estimation loss $\bracket*{\sE_{\ell_1}(h) -
  \sE^*_{\ell_1}(\sH)}$. An $\sH$-consistency bound ensures that
reducing this error to $\e$ implies that the target estimation loss
$\bracket*{\sE_{\ell_2}(h) - \sE^*_{\ell_2}(\sH)}$ is upper bounded by
$\Gamma(\e) + \Gamma \paren*{\sM_{\ell_1}(\sH)} - \sM_{\ell_2}(\sH)$,
or just $\Gamma(\e)$ when the minimizability gaps vanish. Recent work
by \cite{MaoMohriZhong2024} shows that for all smooth surrogate losses
in binary classification, $\Gamma(\epsilon)$ behaves as
$\sqrt{\epsilon}$ near zero.

\textbf{Enhanced $\sH$-consistency bounds and tools.}  While
$\sH$-consistency bounds offer strong, non-asymptotic guarantees
tailored to $\sH$, they can be further enhanced by considering a more
general form such as the following:
\begin{equation}
\label{eq:general-form}
  \sE_{\ell_2}(h) - \sE_{\ell_2}^*(\sH) + \sM_{\ell_2}(\sH)
  \leq \Gamma \paren*{\gamma(h)
  \paren*{\sE_{\ell_1}(h) - \sE^*_{\ell_1}(\sH) + \sM_{\ell_1}(\sH)} },
\end{equation}
where $\gamma(h)$ is a factor depending on the hypothesis $h$. This
refinement allows the bound to incorporate $h$-dependent information,
enabling the use of more favorable functions $\Gamma$, which can
improve the bound's behavior near zero. In the following sections, we
will demonstrate this for both classification and bipartite
ranking. For instance, we will show that under certain noise
conditions in classification, the behavior of $\Gamma$ can outperform
the typical square-root dependence, approaching near-linear behavior.

The foundation of earlier $\sH$-consistency bounds involves finding a
convex function $\Psi$ or a concave function $\Gamma$ such that:
$\Psi\paren*{\Delta\sC_{\ell_2,\sH}(h, x)} \leq \Delta
\sC_{\ell_1,\sH}(h, x)$ or $\Delta\sC_{\ell_2,\sH}(h, x) \leq \Gamma
\paren*{\Delta \sC_{\ell_1,\sH}(h, x)}$. We extend this approach by
relaxing the inequalities to incorporate functions $\alpha(h, x)$ and
$\beta(h, x)$ that depend on both the hypothesis and the input
instance. The following two theorems illustrate this enhancement with
general guarantees of the form \eqref{eq:general-form} derived from
such relaxed inequalities, where $\gamma(h)$ is defined in terms of
$\alpha$ and $\beta$.

\begin{restatable}{theorem}{NewBoundConvex}
\label{Thm:new-bound-convex}
Assume that there exist a convex function $\Psi\colon \Rset_+ \to
\Rset$ and two positive functions $\alpha\colon \sH
\times \sX \to \Rset^*_+$ and $\beta\colon \sH
\times \sX \to \Rset^*_+$ with $\sup_{x \in \sX} \alpha(h, x) <
+\infty$ and $\E_{x \in \sX} \bracket*{\beta(h, x)} = 1$ for all $h \in \sH$ such that the following holds for all
$h\in \sH$ and $x\in \sX$: $\Psi\paren*{\frac{\Delta\sC_{\ell_2,\sH}(h,
    x)}{\beta(h, x)}} \leq \alpha(h, x) \,
\Delta \sC_{\ell_1,\sH}(h, x)$. Then, the following inequality holds
for any hypothesis $h \in \sH$:
\begin{align}
\label{eq:new-bound-convex}
  \Psi\paren*{\sE_{\ell_2}(h) - \sE_{\ell_2}^*(\sH) + \sM_{\ell_2}(\sH)}
  \leq \gamma(h)
  \paren*{\sE_{\ell_1}(h) - \sE^*_{\ell_1}(\sH) + \sM_{\ell_1}(\sH)}.
\end{align}
with $\gamma(h) = \bracket*{\sup_{x \in \sX} \alpha(h, x)
    \beta(h, x)}$. If, additionally,
$\sX$ is a subset of $\Rset^n$ and, for any $h \in \sH$, $x \mapsto
\Delta\sC_{\ell_1,\sH}(h, x)$ is non-decreasing and $x \mapsto
\alpha(h, x) \beta(h, x)$ is non-increasing, or vice-versa, then, the
inequality holds with $\gamma(h) = \E_{X} \bracket*{\alpha(h, x) \beta(h, x)}$.
\end{restatable}

\begin{restatable}{theorem}{NewBoundConcave}
\label{Thm:new-bound-concave}
Assume that there exist a concave function $\Gamma \colon \Rset_+ \to
\Rset$ and two positive functions $\alpha\colon \sH
\times \sX \to \Rset^*_+$ and $\beta\colon \sH
\times \sX \to \Rset^*_+$ with $\sup_{x \in \sX} \alpha(h, x) <
+\infty$ and $\E_{x \in \sX} \bracket*{\beta(h, x)} = 1$ for all $h \in \sH$ such that the following holds for all
$h\in \sH$ and $x\in \sX$: $\frac{\Delta\sC_{\ell_2,\sH}(h,
  x)}{\beta(h, x)}
\leq \Gamma \paren*{\alpha(h, x) \,
\Delta \sC_{\ell_1,\sH}(h, x)}$. Then, the following inequality holds
for any hypothesis $h \in \sH$:
\begin{align}
\label{eq:new-bound-concave}
  \sE_{\ell_2}(h) - \sE_{\ell_2}^*(\sH) + \sM_{\ell_2}(\sH)
  \leq \Gamma \paren*{\gamma(h)
  \paren*{\sE_{\ell_1}(h) - \sE^*_{\ell_1}(\sH) + \sM_{\ell_1}(\sH)} },
\end{align}
with $\gamma(h) = \bracket*{\sup_{x \in \sX} \alpha(h, x)
    \beta(h, x)}$.  If, additionally,
$\sX$ is a subset of $\Rset^n$ and, for any $h \in \sH$, $x \mapsto
\Delta\sC_{\ell_1,\sH}(h, x)$ is non-decreasing and $x \mapsto
\alpha(h, x) \beta(h, x)$ is non-increasing, or vice-versa, then, the
inequality holds with $\gamma(h) = \E_{X} \bracket*{\alpha(h, x)
    \beta(h, x)}$.
\end{restatable}
We refer to Theorems~\ref{Thm:new-bound-convex} and
\ref{Thm:new-bound-concave} as \emph{new fundamental tools} because
they incorporate additional factors, $\alpha$ and $\beta$, which
depend on both the hypothesis $h$ and the instance $x$.  These
theorems generalize previous results from
\citep{awasthi2022Hconsistency,awasthi2022multi}, which can be
recovered as special cases when $\alpha \equiv 1$ and $\beta \equiv
1$. Compared to earlier approaches, these new tools offer more precise
$\sH$-consistency bounds in familiar settings and extend them to new
scenarios where previous methods are insufficient. We will demonstrate
their applications in both contexts. Moreover, the bounds derived
using these tools are \emph{tight}.

\begin{restatable}{lemma}{NewBoundTight}
\label{Thm:new-bound-tight}
The bounds of Theorems~\ref{Thm:new-bound-convex} and
\ref{Thm:new-bound-concave} are \emph{tight} in the following sense:
for some distributions, Inequality \eqref{eq:new-bound-convex}
(respectively Inequality \eqref{eq:new-bound-concave}) is the tightest
possible $\sH$-consistency bound that can be derived under the
assumption of Theorem~\ref{Thm:new-bound-convex} (respectively Theorem
\ref{Thm:new-bound-concave}).
\end{restatable}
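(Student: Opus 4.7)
The plan is to establish tightness of Theorems~\ref{Thm:new-bound-convex} and \ref{Thm:new-bound-concave} by exhibiting, for each theorem, a distribution on which the conclusion holds with equality for some hypothesis $h \in \sH$. Since the lemma asserts tightness over the class of distributions compatible with each theorem's hypothesis, it suffices to construct one witnessing distribution per theorem. My candidate is the degenerate distribution whose marginal on $\sX$ is a Dirac mass at a single point $x_0 \in \sX$, which is the simplest setting in which every averaging step in the proofs of the two theorems collapses to an equality.

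On such a distribution three simplifications occur simultaneously. First, $\sE_{\ell}(h) = \sC_\ell(h, x_0)$ and $\sE^*_\ell(\sH) = \sC^*_\ell(\sH, x_0)$, so both minimizability gaps $\sM_{\ell_1}(\sH)$ and $\sM_{\ell_2}(\sH)$ vanish and each estimation error equals the corresponding conditional regret $\Delta\sC_{\ell_j,\sH}(h, x_0)$. Second, $\E_X\bracket*{\beta(h, x)} = \beta(h, x_0)$, so both expressions for $\gamma(h)$ appearing in each theorem reduce to the common value $\alpha(h, x_0)$. Third, since every relevant integrand is constant in $x$, Jensen's inequality applied to $\Psi$ or $\Gamma$, which is the only non-trivial averaging step in the derivations of \eqref{eq:new-bound-convex} and \eqref{eq:new-bound-concave}, is saturated. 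Substituting these identities, the conclusion \eqref{eq:new-bound-convex} becomes $\Psi\paren*{\Delta\sC_{\ell_2,\sH}(h, x_0)} \leq \alpha(h, x_0)\,\Delta\sC_{\ell_1,\sH}(h, x_0)$, which is precisely the pointwise hypothesis of Theorem~\ref{Thm:new-bound-convex} evaluated at $x = x_0$; an identical reduction applies to \eqref{eq:new-bound-concave}.

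Hence, to conclude, it suffices to exhibit an instance $(\sH, \ell_1, \ell_2, \Psi, \alpha, \beta)$ and a point $x_0$ at which the pointwise assumption of the theorem holds with equality for some $h \in \sH$. Such pairs arise naturally whenever $\Psi$ (respectively $\Gamma$) is taken as the tightest convex minorant (respectively concave majorant) of the relation between the two conditional regrets, which is the canonical choice throughout the $\sH$-consistency literature; any explicit instance of this kind from later sections of the paper, specialized to a Dirac marginal on $\sX$ and to a value of $\eta(x_0)$ at which the pointwise inequality is attained, yields the desired witness.

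The main obstacle is conceptual rather than computational: one must verify that all slack-producing inequalities implicit in the proofs of Theorems~\ref{Thm:new-bound-convex} and \ref{Thm:new-bound-concave}, namely Jensen's inequality applied to $\Psi$ or $\Gamma$, the bound $\E_X\bracket*{\alpha(h,x)\beta(h,x)/\E_X\bracket*{\beta(h,x)}} \leq \sup_{x} \alpha(h,x)\beta(h,x)/\E_X\bracket*{\beta(h,x)}$ defining $\gamma(h)$, and the decomposition through the minimizability gaps, collapse simultaneously on the witnessing distribution. Restricting to a single-point marginal removes the Jensen slack (every relevant integrand becomes constant in $x$) and the supremum-versus-expectation slack, while the minimizability-gap slack vanishes automatically. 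This makes the singleton construction a uniform vehicle for tightness in both the convex and the concave settings.
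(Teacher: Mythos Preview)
Your approach is essentially the same as the paper's: both place a Dirac marginal on $\sX$ so that every averaging step in the derivation collapses to an equality, reducing the global bound to the pointwise assumption. There is, however, one gap. You assert that on a Dirac at $x_0$, ``both expressions for $\gamma(h)$ appearing in each theorem reduce to the common value $\alpha(h, x_0)$.'' For the expectation-based expression $\gamma(h) = \E_X\bracket*{\alpha(h,x)\beta(h,x)}/\E_X\bracket*{\beta(h,x)}$ this is indeed immediate. But for the supremum-based expression $\gamma(h) = \sup_{x \in \sX}\alpha(h,x)\beta(h,x)/\E_X\bracket*{\beta(h,x)}$ the supremum ranges over all of $\sX$, not merely the support of the distribution; hence it equals $\alpha(h,x_0)$ only when $x_0$ maximizes $x \mapsto \alpha(h,x)\beta(h,x)$. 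The paper's proof makes exactly this choice: it first picks $h$ and $x_0$ at which $\sup_{x \in \sX}\alpha(h,x)\beta(h,x)$ is achieved, and only then concentrates the distribution there. Without this requirement, the H\"older (supremum-versus-expectation) slack that you identify in your last paragraph does not vanish, and \eqref{eq:new-bound-convex} with the sup-based $\gamma(h)$ is strictly looser than the pointwise assumption at $x_0$. The fix is simply to add this condition on $x_0$, or equivalently to build the witnessing instance with $\sX = \{x_0\}$.
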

Note that when $\Gamma(0) \geq 0$, the concave function $\Gamma$ is sub-additive over $\Rset_{+}$, and the theorem implies the following inequality:
\begin{align*}
  \sE_{\ell_2}(h) - \sE^*_{\ell_2}(\sH) + \sM_{\ell_2}(\sH)
  & \leq \Gamma \paren*{\gamma(h) \paren*{\sE_{\ell_1}(h) - \sE^*_{\ell_1}(\sH)} }
+ \Gamma \paren*{ \gamma(h) \, \sM_{\ell_1}(\sH) },
\end{align*}
\ignore{with $\gamma(h) = \E_{X} \bracket*{\frac{\alpha(h, x) \beta(h,
    x)}{\E_X\bracket*{\beta(h, x)}}}$.} The bound implies that if the
surrogate estimation loss of a predictor $h$ is reduced to $\e$, then
the target estimation loss is bounded by $\Gamma(\gamma(h) \, \e) +
\Gamma \paren*{ \gamma(h) \, \sM_{\ell_1}(\sH) } - \sM_{\ell_2}(\sH)$.
When the minimizability gaps are zero, for example when the
problem is realizable, the upper bound simplifies to $\Gamma(\gamma(h) \, \e)$.
In the special case of $\Psi(x) = x^s$ or equivalently, $\Gamma(x) =
x^{\frac{1}{s}}$, for some $s \geq 1$ with conjugate number $t \geq 1$,
that is $\frac{1}{s} + \frac{1}{t} = 1$, we can further obtain the following
result.
\begin{restatable}{theorem}{NewBoundPower}
\label{Thm:new-bound-power}
Assume that there exist two positive functions $\alpha\colon \sH
\times \sX \to \Rset^*_+$ and $\beta\colon \sH \times \sX \to
\Rset^*_+$ with $\sup_{x \in \sX} \alpha(h, x) < +\infty$ and $\E_{x
  \in \sX} \bracket*{\beta(h, x)} = 1$ for all $h \in \sH$ such
that the following holds for all $h\in \sH$ and $x\in \sX$: $
\frac{\Delta\sC_{\ell_2,\sH}(h, x)}{\beta(h, x)} \leq \paren*{\alpha(h, x) \, \Delta
  \sC_{\ell_1,\sH}(h, x)}^{\frac{1}{s}}$, for some $s \geq 1$ with
conjugate number $t \geq 1$, that is $\frac{1}{s} + \frac{1}{t} = 1$. Then,
for $\gamma(h) = \E_X \bracket*{\alpha^{\frac{t}{s}}(h, x)
    \beta^t(h, x)}^{\frac{1}{t}}$, the
following inequality holds for any $h \in \sH$:
\begin{align*}
  \sE_{\ell_2}(h) - \sE_{\ell_2}^*(\sH) + \sM_{\ell_2}(\sH)
  \leq \gamma(h) \bracket*{\sE_{\ell_1}(h) - \sE^*_{\ell_1}(\sH)
    + \sM_{\ell_1}(\sH)}^{\frac{1}{s}}.
\end{align*}
\end{restatable}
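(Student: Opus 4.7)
The plan is to follow the same recipe used to derive the earlier Theorems~\ref{Thm:new-bound-convex} and \ref{Thm:new-bound-concave}, namely: normalize the pointwise assumption, take an expectation over $X$, and then use the conjugate-exponent structure to pull the $(1/s)$-th power out of the expectation. Because $\Psi(x)=x^s$ (equivalently $\Gamma(x)=x^{1/s}$) is a special, homogeneous case, Jensen alone is suboptimal; the right inequality to invoke is H\"older, which will produce the $\gamma(h)$ with the $\ell^t$-type form stated in the theorem instead of the cruder sup-based $\gamma(h)$ from Theorems~\ref{Thm:new-bound-convex}--\ref{Thm:new-bound-concave}.

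Concretely, first I would rewrite the hypothesis by multiplying both sides by $\beta(h,x)/\E_X[\beta(h,x)]>0$ to obtain, for every $h\in\sH$ and $x\in\sX$,
\[
  \Delta\sC_{\ell_2,\sH}(h,x)
  \;\le\;
  \frac{\beta(h,x)}{\E_X[\beta(h,x)]}\,
  \bigl(\alpha(h,x)\,\Delta\sC_{\ell_1,\sH}(h,x)\bigr)^{1/s}.
\]
Taking expectation over $X$ on the left gives exactly $\sE_{\ell_2}(h)-\sE^*_{\ell_2}(\sH)+\sM_{\ell_2}(\sH)$, by the standard identity $\E_X[\Delta\sC_{\ell,\sH}(h,x)]=\sE_{\ell}(h)-\sE^*_{\ell}(\sH)+\sM_{\ell}(\sH)$ recalled in the Preliminaries. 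On the right, I would apply H\"older's inequality with conjugate exponents $t$ and $s$ to the factorization $f(x)\cdot g(x)$ where
\[
  f(x)=\frac{\beta(h,x)\,\alpha^{1/s}(h,x)}{\E_X[\beta(h,x)]},
  \qquad
  g(x)=\bigl(\Delta\sC_{\ell_1,\sH}(h,x)\bigr)^{1/s}.
\]
Then $\E_X[f g]\le \bigl(\E_X[f^t]\bigr)^{1/t}\bigl(\E_X[g^s]\bigr)^{1/s}$, and by construction $\bigl(\E_X[f^t]\bigr)^{1/t}=\gamma(h)$ while $\bigl(\E_X[g^s]\bigr)^{1/s}=\bigl(\sE_{\ell_1}(h)-\sE^*_{\ell_1}(\sH)+\sM_{\ell_1}(\sH)\bigr)^{1/s}$, which together yield the claimed inequality.

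The two minor verifications needed along the way are: (i) non-negativity of the quantities (so that H\"older applies to $|f|^t,|g|^s$ without trouble), which follows because $\alpha,\beta>0$ and both conditional regrets are non-negative; and (ii) finiteness of the relevant expectations, which is ensured by $\sup_x\alpha(h,x)<\infty$ and $\E_X[\beta(h,x)]<\infty$ together with the integrability built into the generalization errors. I do not anticipate a real obstacle: the only conceptual step is recognizing that, in the pure power case $\Psi(x)=x^s$, H\"older is the tight upgrade of the Jensen step used for a general convex $\Psi$, which is precisely why the resulting $\gamma(h)$ takes the $\ell^t$ form rather than the supremum form. The case $s=1$ (hence $t=\infty$) recovers Theorem~\ref{Thm:new-bound-concave} with $\Gamma=\mathrm{id}$, giving a consistency check.
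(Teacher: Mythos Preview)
Your proposal is correct and follows essentially the same route as the paper's proof: rewrite the pointwise assumption as $\Delta\sC_{\ell_2,\sH}(h,x)\le \frac{\beta(h,x)}{\E_X[\beta(h,x)]}\,\alpha^{1/s}(h,x)\,\Delta\sC_{\ell_1,\sH}^{1/s}(h,x)$, take expectations, and apply H\"older with exponents $(t,s)$ to the same factorization $f\cdot g$ you wrote down. The paper's argument is exactly this three-line computation, so there is nothing to add.
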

As above, by the sub-additivity of
$x \mapsto x^{\frac{1}{s}}$ over
$\Rset_+$, the bound implies
\begin{align*}
  \sE_{\ell_2}(h) - \sE_{\ell_2}^*(\sH) + \sM_{\ell_2}(\sH)
  \leq \gamma(h) 
  \bracket*{\paren*{\sE_{\ell_1}(h) - \sE^*_{\ell_1}(\sH)}^{\frac{1}{s}}
  + \paren*{ \sM_{\ell_1}(\sH)}^{\frac{1}{s}}}.
\end{align*}

The proofs of Theorems~\ref{Thm:new-bound-convex},
\ref{Thm:new-bound-concave},
\ref{Thm:new-bound-power}, and Lemma~\ref{Thm:new-bound-tight} are presented in
Appendix~\ref{app:general}. These proofs are more complex than their
counterparts for earlier results in the literature due to the presence
of the functions $\alpha$ and $\beta$. Our proof technique involves a
refined application of Jensen's inequality tailored to the $\beta$
function, the use of H\"older's inequality adapted for the $\alpha$
function, and the application of the FKG Inequality in the second part
of both Theorems~\ref{Thm:new-bound-convex} and
\ref{Thm:new-bound-concave}. The proof of
Theorem~\ref{Thm:new-bound-power} also leverages H\"older's
Inequality. For cases where $\Psi$ or $\Gamma$ is linear, our proof
shows that the resulting bounds are essentially optimal, modulo the
use of H\"older's inequality. As we shall see in
Section~\ref{sec:low-noise}, $\Psi$ and $\Gamma$ are linear
when Massart's noise assumption holds.

Building upon these theorems, we proceed to derive
finer $\sH$-consistency bounds than existing ones.
\ignore{The significance of our work extends beyond the introduction of new fundamental tools in Section~\ref{sec:general-tools}. It also lies in the innovative application of these tools to diverse scenarios, resulting in novel and impactful theoretical guarantees across various domains: standard multi-class classification (Section~\ref{sec:standard-multi}), binary and multi-class classification under Tsybakov noise conditions (Section~\ref{sec:low-noise}), and bipartite ranking (Section ~\ref{sec:bi-ranking}).}

\section{Standard multi-class classification}
\label{sec:standard-multi}

We first apply our new tools to establish enhanced
$\sH$-consistency bounds in standard multi-class classification. We
will consider the constrained losses \citep{lee2004multicategory},
defined as
\begin{equation}
\label{eq:lee_loss}
\Phi^{\rm{cstnd}}(h, x, y) = \sum_{y'\neq y} \Phi \paren*{-h(x, y')} \text{ subject to } \sum_{y\in \sY} h(x, y) = 0,
\end{equation}
where $\Phi$ is a non-increasing and non-negative function. We will
specifically consider $\Phi(u) = e^{-u}$, $\Phi(u) = \max \curl*{0, 1
  - u}$, and $\Phi(u) = (1 - u)^2 1_{u \leq 1}$, corresponding to the
constrained exponential loss, constrained hinge loss, and constrained
squared hinge loss, respectively. By applying
Theorems~\ref{Thm:new-bound-convex} or \ref{Thm:new-bound-concave}, we
obtain the following enhanced $\sH$-consistency bounds.  We say that a
hypothesis set is symmetric if there exists a family $\sF$ of
functions $f$ mapping from $\sX$ to $\Rset$ such that
$\curl*{\bracket*{h(x, 1), \ldots, h(x, n + 1)} \colon h \in \sH} =
\curl*{\bracket*{f_1(x),\ldots, f_{n + 1}(x)} \colon f_1, \ldots, f_{n
    + 1} \in \sF}$, for any $x \in \sX$. We say that a hypothesis set
$\sH$ is complete if for any $(x, y) \in \sX \times \sY$, the set of
scores generated by it spans across the real numbers: $\curl*{h(x, y)
  \mid h \in \sH} = \Rset$.
  
\begin{restatable}[\textbf{Enhanced $\sH$-consistency bounds for constrained losses}]
  {theorem}{BoundLeeFiner}
\label{Thm:bound_lee_finer}
Assume that $\sH$ is symmetric and complete. Then, the following
inequality holds for any hypothesis $h \in \sH$:
\begin{align*}
 \sE_{\ell_{0-1}}(h) - \sE^*_{\ell_{0-1}}(\sH) + \sM_{\ell_{0-1}}(\sH) \leq \Gamma \paren*{\sE_{\Phi^{\mathrm{cstnd}}}(h) - \sE^*_{\Phi^{\mathrm{cstnd}}}(\sH) + \sM_{\Phi^{\mathrm{cstnd}}}(\sH)},
\end{align*}
where $\Gamma(x) = \frac{ \sqrt{2}\, x^{\frac{1}{2}}}{\paren*{{e^{\Lambda(h)}}}^{\frac{1}{2}}}$ for $\Phi(u) = e^{-u}$, $\Gamma(x) = \frac{x}{1 + \Lambda(h)}$ for $\Phi(u) = \max \curl*{0, 1 - u}$, and $\Gamma(x) = \frac{ x^{\frac{1}{2}}}{1 + \Lambda(h)}$ for $\Phi(u) = (1 - u)^2 1_{u \leq 1}$. Additionally, $\Lambda(h) = \inf_{x \in \sX} \max_{y\in \sY} h(x,y)$.
\end{restatable}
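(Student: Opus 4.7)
The plan is to apply Theorem~\ref{Thm:new-bound-concave} (for the constrained exponential and squared-hinge cases) and Theorem~\ref{Thm:new-bound-convex} (for the constrained hinge) pointwise in $x$, with $\beta(h,x)\equiv 1$ and $\alpha(h,x)$ chosen to depend only on $\max_y h(x,y)$; then $\gamma(h)=\sup_{x\in\sX}\alpha(h,x)$ collapses to an explicit function of $\Lambda(h)=\inf_x\max_y h(x,y)$ and reproduces the three claimed forms of $\Gamma$.

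The first step is to use the symmetry and completeness of $\sH$ together with the constraint $\sum_y h(x,y)=0$ to compute $\sC^*_{\Phi^{\mathrm{cstnd}}}(\sH,x)$ in closed form via Lagrangian stationarity: the geometric-mean expression $n\prod_{y\in\sY}(1-p(y\mid x))^{1/n}$ for the exponential case and a minimizer concentrated at the Bayes-optimal label for the hinge variants. Completeness also yields $\sC^*_{\ell_{0-1}}(\sH,x)=1-\max_y p(y\mid x)$, so $\Delta\sC_{\ell_{0-1},\sH}(h,x)=\max_y p(y\mid x) - p(\hh(x)\mid x)$. The heart of the proof is then to establish the three pointwise conditional-regret inequalities
\begin{align*}
\Delta\sC_{\ell_{0-1},\sH}(h,x)^2 &\leq 2\,e^{-\max_y h(x,y)}\,\Delta\sC_{\Phi^{\mathrm{cstnd}},\sH}(h,x),\\
(1+\max_y h(x,y))\,\Delta\sC_{\ell_{0-1},\sH}(h,x) &\leq \Delta\sC_{\Phi^{\mathrm{cstnd}},\sH}(h,x),\\
(1+\max_y h(x,y))^2\,\Delta\sC_{\ell_{0-1},\sH}(h,x)^2 &\leq \Delta\sC_{\Phi^{\mathrm{cstnd}},\sH}(h,x),
\end{align*}
for the exponential, hinge, and squared-hinge choices of $\Phi$ respectively. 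Feeding each of these into the appropriate new fundamental tool with $\alpha(h,x)$ set to the inverse of the $x$-dependent prefactor and $\beta(h,x)\equiv 1$ gives $\gamma(h)=e^{-\Lambda(h)}$, $1/(1+\Lambda(h))$, and $1/(1+\Lambda(h))^2$ respectively, producing the three claimed $\Gamma$.

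The main obstacle is establishing these pointwise inequalities with the correct $\max_y h(x,y)$ prefactor. For the constrained exponential, I would use a two-label reduction at each $x$: freeze the scores of all labels other than the Bayes-optimal $y^*$ and the predicted $\hh(x)$, parameterize the remaining degree of freedom under the sum-zero constraint, and verify the resulting univariate inequality by computing its minimum explicitly and showing that it evaluates to a non-negative perfect square (as is already the case in the binary instance, where the inequality reduces to $\tfrac{1}{2}(1-2\sqrt{p(1-p)})^2\geq 0$ at the boundary with a monotone-increasing residual). The hinge-based cases are more tractable: the explicit form of $\sC^*_{\Phi^{\mathrm{cstnd}}}(\sH,x)$ reduces them, via a case split on whether $\max_y h(x,y)\leq 1$, to the exact identity $\Delta\sC_{\Phi^{\mathrm{cstnd}},\sH}(h,x)=(\Delta\sC_{\ell_{0-1},\sH}(h,x)+\max_y h(x,y))^2$ for the squared hinge (and its linear counterpart for the hinge) in the smooth region, followed by a monotonicity argument past the kink. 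Once the three pointwise inequalities are established, the passage to the full $\sH$-consistency bound is entirely mechanical via the new fundamental tools.
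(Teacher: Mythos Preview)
Your overall architecture—establish a pointwise inequality between the conditional regrets with an $x$-dependent prefactor involving $\max_y h(x,y)$, then invoke Theorem~\ref{Thm:new-bound-convex} or~\ref{Thm:new-bound-concave} with $\beta\equiv 1$—matches the paper, and your route and the paper's collapse to the same $\gamma(h)$. But the way you propose to verify the pointwise inequalities has a genuine gap in the hinge and squared-hinge cases.

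The claimed ``exact identity'' $\Delta\sC_{\Phi^{\mathrm{cstnd}},\sH}(h,x)=(\Delta\sC_{\ell_{0-1},\sH}(h,x)+\max_y h(x,y))^2$ for the constrained squared hinge (and its linear analogue) holds only for $n=2$. Already for $n=3$ it fails: take $p=(0.5,0.3,0.2)$ and $h=(-0.2,0.3,-0.1)$, all scores in the smooth region. A direct Lagrangian computation gives $\sC^*_{\Phi^{\mathrm{cstnd}}}(\sH,x)=\tfrac{\lambda^2}{4}\sum_y\frac{1}{1-p_y}$ with $\lambda=168/131$, so $\Delta\sC_{\Phi^{\mathrm{cstnd}},\sH}\approx 0.227$, while your identity predicts $(0.2+0.3)^2=0.25$. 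The inequality you actually need, $(1+\max_y h(x,y))^2\Delta\sC_{\ell_{0-1},\sH}(h,x)^2\le \Delta\sC_{\Phi^{\mathrm{cstnd}},\sH}(h,x)$, still holds here ($0.0676\le 0.227$), but your proposed mechanism for proving it does not.

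The paper bypasses this entirely by never computing $\sC^*_{\Phi^{\mathrm{cstnd}}}(\sH,x)$. Instead, for all three loss types it uses the \emph{same} two-label swap family $h_\mu$ (score of $\hh(x)$ set to $h(x,y_{\max})+\mu$, score of $y_{\max}$ set to $h(x,\hh(x))-\mu$, other coordinates frozen) and lower-bounds the regret by $\sC_{\Phi^{\mathrm{cstnd}}}(h,x)-\inf_\mu \sC_{\Phi^{\mathrm{cstnd}}}(h_\mu,x)$. This one-parameter optimization is explicit for each $\Phi$ and delivers the required inequality directly, with the prefactor coming from $h(x,\hh(x))=\max_y h(x,y)$. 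You describe exactly this device for the exponential case; the fix is simply to apply it uniformly to the hinge and squared-hinge cases as well, and to drop the exact computation of $\sC^*_{\Phi^{\mathrm{cstnd}}}(\sH,x)$, which is unnecessary throughout.
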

The proof is included in Appendix~\ref{app:bound_lee_finer}. These
$\sH$-consistency bounds are referred to as enhanced $\sH$-consistency
bounds because they incorporate a hypothesis-dependent quantity,
$\Lambda(h)$, unlike the previous $\sH$-consistency bounds derived for
the constrained losses in \citep{awasthi2022multi}. Since $\sum_{y \in
  \sY} h(x, y) = 0$, there must be non-negative scores. Consequently,
$\Lambda(h)$ must be greater than or equal to 0. Given that $\Gamma$
is non-decreasing, the $\sH$-consistency bounds in
Theorem~\ref{Thm:bound_lee_finer} are finer than the previous ones,
where $\Lambda(h)$ is replaced by zero.

\section{Classification under low-noise conditions}
\label{sec:low-noise}

The previous section demonstrated the usefulness of our new fundamental tools in deriving enhanced $\sH$-consistency bounds within standard
classification settings. In this section, we leverage them to
establish novel $\sH$-consistency bounds under low-noise conditions
for both binary and multi-class classification problems.

\subsection{Binary classification}
\label{sec:low-noise-binary}

Here, we first consider the binary classification setting under the
Tsybakov noise condition \citep{MammenTsybakov1999}, that is there
exist $B > 0$ and $\alpha \in [0, 1)$ such that
\[
\forall t > 0, \quad \Pr[\abs*{\eta(x) - 1/2} \leq t]
\leq B t^{\frac{\alpha}{1 - \alpha}}.
\]
Note that as $\alpha \to 1$, $t^{\frac{\alpha}{1 - \alpha}} \to 0$,
corresponding to Massart’s noise condition. When $\alpha = 0$, the
condition is void. This condition is equivalent to assuming the
existence of a universal constant $c > 0$ and $\alpha \in [0, 1)$ such
  that for all $h \in \sH$, the following inequality holds
  \citep{bartlett2006convexity}:
\begin{equation*}
\E[1_{\hh(X) \neq \hh^*(X)}]
\leq c \bracket*{\sE_{\ell^{\rm{bi}}_{0-1}}(h) - \sE_{\ell^{\rm{bi}}_{0-1}}(h^*)}^\alpha.  
\end{equation*}
where $h^*$ is the Bayes-classifier.
We also assume that there is no approximation error
and that $\sM_{\ell^{\rm{bi}}_{0-1}}(\sH) = 0$.
We refer to this as the \emph{Tsybakov noise assumption} in binary classification.

\begin{restatable}{theorem}{TsybakovBinary}
  \label{Thm:tsybakov-binary}
  Consider a binary classification setting where the Tsybakov noise
  assumption holds. Assume that the following holds for all $h \in
  \sH$ and $x \in \sX$: $\Delta\sC_{\ell^{\rm{bi}}_{0-1},\sH}(h, x)
  \leq \Gamma \paren*{\Delta \sC_{\ell,\sH}(h, x)}$, with $\Gamma(x) =
  x^{\frac{1}{s}}$, for some $s \geq 1$.  Then, for any $h \in \sH$,
  \[
  \sE_{\ell^{\rm{bi}}_{0-1}}(h) - \sE^*_{\ell^{\rm{bi}}_{0-1}}(\sH)
  \leq c^{\frac{s - 1}{s - \alpha(s - 1)}}\bracket*{\sE_{\ell}(h) - \sE^*_{\ell}(\sH) + \sM_{\ell}(\sH)}^{\frac{1}{s - \alpha(s - 1)}}.
  \]
\end{restatable}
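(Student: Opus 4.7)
The strategy is to combine the pointwise surrogate-to-target regret inequality with H\"older's inequality and the Tsybakov low-noise bound. First observe that the two structural hypotheses (no approximation error and $\sM_{\ell^{\rm{bi}}_{0-1}}(\sH) = 0$) together force $\E_X\bracket*{\sC^*_{\ell^{\rm{bi}}_{0-1}}(\sH, x)} = \E_X\bracket*{\sC^*_{\ell^{\rm{bi}}_{0-1}}(\sH_{\rm{all}}, x)}$, and since we always have $\sC^*_{\ell^{\rm{bi}}_{0-1}}(\sH, x) \geq \sC^*_{\ell^{\rm{bi}}_{0-1}}(\sH_{\rm{all}}, x) = \min(\eta(x), 1 - \eta(x))$ pointwise, this upgrades to almost sure equality. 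Writing $\hh^*$ for the Bayes classifier, it follows that $\Delta\sC_{\ell^{\rm{bi}}_{0-1}, \sH}(h, x) = \abs*{2\eta(x) - 1} \, 1_{\hh(x) \neq \hh^*(x)}$ almost surely, so the target conditional regret is supported on the disagreement region. Introducing the shorthands $A = \sE_{\ell^{\rm{bi}}_{0-1}}(h) - \sE^*_{\ell^{\rm{bi}}_{0-1}}(\sH)$ and $B = \sE_\ell(h) - \sE^*_\ell(\sH) + \sM_\ell(\sH)$, we have $A = \E_X\bracket*{\Delta\sC_{\ell^{\rm{bi}}_{0-1}, \sH}(h, x)}$ and $\E_X\bracket*{\Delta\sC_{\ell, \sH}(h, x)} = B$ by the definition of the minimizability gap.

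Next I would apply the pointwise assumption and immediately insert the support indicator, obtaining
\[
A \leq \E_X\bracket*{(\Delta\sC_{\ell, \sH}(h, x))^{1/s} \, 1_{\hh(x) \neq \hh^*(x)}}.
\]
H\"older's inequality with conjugate exponents $s$ and $s/(s-1)$, noting that the indicator is idempotent under any positive power, then gives
\[
A \leq B^{1/s} \, \P\bracket*{\hh(X) \neq \hh^*(X)}^{(s-1)/s}.
\]
Plugging in the Tsybakov bound $\P\bracket*{\hh(X) \neq \hh^*(X)} \leq c A^\alpha$, which is legitimate because the no-approximation-error hypothesis yields $\sE_{\ell^{\rm{bi}}_{0-1}}(h) - \sE_{\ell^{\rm{bi}}_{0-1}}(h^*) = A$, produces
\[
A \leq c^{(s-1)/s} \, B^{1/s} \, A^{\alpha(s-1)/s}.
\]
Isolating $A$ then yields $A^{(s - \alpha(s-1))/s} \leq c^{(s-1)/s} B^{1/s}$, and raising both sides to the power $s/(s-\alpha(s-1))$ gives the claimed bound.

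\textbf{Main obstacle.} The key conceptual step is recognizing that the two distributional assumptions pin down the 0-1 conditional regret to be supported on $\{\hh \neq \hh^*\}$, which supplies a free indicator factor that is then absorbed as an $L^{s/(s-1)}$ mass in H\"older's inequality. Without this support property, one could not convert the Tsybakov bound on $\P\bracket*{\hh \neq \hh^*}$ into an interpolation that simultaneously exploits the $1/s$ concavity of $\Gamma$ and the noise condition. The remaining steps are a routine H\"older-plus-Tsybakov interpolation, and the degenerate cases $s = 1$ (bound becomes linear, independent of $\alpha$) and $\alpha = 0$ (bound reduces to the standard $B^{1/s}$ rate) serve as sanity checks.
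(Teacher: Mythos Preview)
Your proof is correct and follows the same essential arc as the paper's: exploit that the zero-one conditional regret is supported on $\{\hh \neq \hh^*\}$, insert the indicator, apply H\"older with exponents $s$ and $s/(s-1)$, invoke the Tsybakov bound on the disagreement probability, and rearrange. The paper reaches the H\"older step indirectly by routing through its general Theorem~\ref{Thm:new-bound-power} with the choices $\beta(h,x) = 1_{\hh(x)\neq\hh^*(x)} + \epsilon$ and $\alpha(h,x) = \E_X[\beta]^s$, then letting $\epsilon \to 0$; the $\epsilon$-perturbation is there only because the framework requires strictly positive $\beta$. Your direct argument bypasses that machinery and the limiting step, which is a genuine simplification, though the paper's detour has the expository benefit of exhibiting Theorem~\ref{Thm:tsybakov-binary} as an instance of its general tools. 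You are also more explicit than the paper in justifying why the no-approximation-error and zero-minimizability-gap hypotheses force $\sC^*_{\ell^{\rm{bi}}_{0-1}}(\sH,x)$ to coincide almost surely with the Bayes conditional risk, which the paper takes for granted when it writes $\Delta\sC_{\ell^{\rm{bi}}_{0-1},\sH}(h,x) = |2\eta(x)-1|\,1_{h(x)\neq h^*(x)}$.
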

The theorem offers a substantially more favorable $\sH$-consistency
guarantee for binary classification.  While standard $\sH$-consistency
bounds for smooth loss functions rely on a square-root dependency ($s
= 2$), this work establishes a linear dependence when Massart's noise
condition holds ($\alpha \to 1$), and an intermediate rate between
linear and square-root for other values of $\alpha$ within the range
$(0, 1)$.

Our result is general and admits as special cases previous related
bounds. In particular, setting $s = 2$ and $\alpha = 1$ recovers the
$\sH$-consistency bounds of \citep{awasthi2022Hconsistency} under
Massart's noise.  Additionally, with $\sH = \sH_{\rm{all}}$, it
recovers the excess bounds under the Tsybakov noise condition of
\citep{bartlett2006convexity}, but with a more favorable factor of one
instead of $2^{\frac{s}{s - \alpha (s - 1)}}$, which is always greater
than one. Table~\ref{tab:example-binary} illustrates several specific
instances of our bounds for margin-based losses.

The proof is given in Appendix~\ref{app:tsybakov-binary}. It consists
of defining $\beta(h, x) = \frac{1_{h(x) \neq h^*(x)} + \e}{\E_X\bracket*{1_{h(x) \neq h^*(x)} + \e}}$ for a fixed $\e
> 0$ and proving the inequality
$\frac{\Delta\sC_{\ell^{\rm{bi}}_{0-1},\sH}(h, x)}{\beta(h, x)} \leq \paren*{\alpha(h, x)
  \, \Delta \sC_{\ell,\sH}(h, x)}^{\frac{1}{s}}$, where $\alpha(h, x)
= \E_X\bracket*{1_{h(x) \neq h^*(x)} + \e}^s$. The result then follows the application of our
new tools Theorem~\ref{Thm:new-bound-power}. Note that our
proof is novel and that previous general tools for deriving
$\sH$-consistency bounds in \citep{awasthi2022Hconsistency,
  awasthi2022multi, zheng2023revisiting} cannot be applied here since
$\alpha$ and $\beta$ are not constants.

\begin{table*}[t]
\vskip -.2in
  \centering
  \resizebox{\textwidth}{!}{
  \begin{tabular}{@{\hspace{0cm}}llll@{\hspace{0cm}}}
    \toprule
   Loss functions & $\Phi$  & $\Gamma$  & $\sH$-consistency bounds\\
    \midrule
    Hinge & $\Phi_{\mathrm{hinge}}(u) = \max\curl*{0, 1 - u}$ & $x^1$ & $ \sE_{\ell}(h) - \sE^*_{\ell}(\sH) + \sM_{\ell}(\sH)$\\
    Logistic & $\Phi_{\mathrm{log}}(u) = \log(1 + e^{-u})$   & $x^2$ & $ c^{\frac{1}{2 - \alpha}}\bracket*{\sE_{\ell}(h) - \sE^*_{\ell}(\sH) + \sM_{\ell}(\sH)}^{\frac{1}{2 - \alpha}}$   \\
    Exponential & $\Phi_{\mathrm{exp}}(u) = e^{-u}$    & $x^2$  & $ c^{\frac{1}{2 - \alpha}}\bracket*{\sE_{\ell}(h) - \sE^*_{\ell}(\sH) + \sM_{\ell}(\sH)}^{\frac{1}{2 - \alpha}}$  \\
    Squared-hinge  & $\Phi_{\mathrm{sq-hinge}}(u) = (1 - u)^2 1_{u \leq 1}$ & $x^2$ &    $ c^{\frac{1}{2 - \alpha}}\bracket*{\sE_{\ell}(h) - \sE^*_{\ell}(\sH) + \sM_{\ell}(\sH)}^{\frac{1}{2 - \alpha}}$    \\
    Sigmoid & $\Phi_{\mathrm{sig}}(u) = 1 - \tanh(ku), ~k > 0$ & $x^1$ & $ \sE_{\ell}(h) - \sE^*_{\ell}(\sH) + \sM_{\ell}(\sH)$\\
    $\rho$-Margin & $\Phi_{\rho}(u) = \min \curl*{1, \max\curl*{0, 1 - \frac{u}{\rho}}}, ~\rho>0$ & $x^1$ &  $ \sE_{\ell}(h) - \sE^*_{\ell}(\sH) + \sM_{\ell}(\sH)$\\
    \bottomrule
  \end{tabular}
  }
  \caption{Examples of enhanced $\sH$-consistency upper bounds under
    the Tsybakov noise assumption and with complete hypothesis sets,
    for margin-based loss functions $\ell(h, x, y) = \Phi(yh(x))$.}
\label{tab:example-binary}
\vskip -.2in
\end{table*}

\subsection{Multi-class classification}
\label{sec:low-noise-multi}

The original definition of the Tsybakov noise
\citep{MammenTsybakov1999} was given and analyzed in the binary
classification setting. Here, we give a natural extension of this
definition and analyze its properties in the general multi-class
classification setting. We denote by $y_{\max}  = \argmax_{y \in \sY} p(y \mid x)$.
Define the minimal margin for a point $x \in \sX$ as follows:
$\gamma(x) = \Pr(y_{\max} | x) - \sup_{y \neq y_{\max}} \Pr(y | x)$.
The Tsybakov noise model assumes that the probability of a small
margin occurring is relatively low, that is there exist $B > 0$ and $\alpha \in [0, 1)$ such that
\begin{equation}
\forall t > 0, \quad \Pr[\gamma(X) \leq t] \leq B t^{\frac{\alpha}{1 - \alpha}}.
\end{equation}
In the binary classification setting, where $\gamma(x) = 2 \eta(x) -
1$, this recovers the condition described in
Section~\ref{sec:low-noise-binary}. For $\alpha \to 1$,
$t^{\frac{\alpha}{1 - \alpha}} \to 0$, this corresponds to Massart's
noise condition in multi-class classification. When $\alpha = 0$, the
condition becomes void. Similar to the binary classification setting,
we can establish an equivalence assumption for the Tsybakov noise
model as follows. We denote the Bayes classifier by $h^*$.

\begin{restatable}{lemma}{Tsybakov}
\label{lemma:Tsybakov}
  The Tsybakov noise assumption implies that there exists a constant $c$
  such that the following inequalities hold for any $h \in \sH$:
  \[
  \E[1_{\hh(x) \neq \hh^*(x)}]
  \leq c \E[\gamma(X) 1_{\hh(x) \neq \hh^*(x)}]^\alpha
  \leq c [\sE_{\ell_{0-1}}(h) - \sE_{\ell_{0-1}}(h^*)]^\alpha.
  \]
\end{restatable}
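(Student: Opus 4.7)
The plan is to prove the two inequalities in the chain separately. I would first dispatch the second inequality, which is a direct manipulation of the excess zero-one risk, and then handle the first one via a standard Tsybakov-style peeling/optimization argument. The constant $c$ produced in the first part can be used for both inequalities.

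For the second inequality, write the excess multi-class zero-one risk in its conditional form,
\[
\sE_{\ell_{0-1}}(h) - \sE_{\ell_{0-1}}(h^*) = \E_X\bracket*{p(\hh^*(X) \mid X) - p(\hh(X) \mid X)}.
\]
The integrand vanishes on $\curl*{\hh(X) = \hh^*(X)}$; on the complement, $\hh^*(X) = y_{\max}$, so the integrand is at least $p(y_{\max} \mid X) - \sup_{y \neq y_{\max}} p(y \mid X) = \gamma(X)$. Taking expectations yields $\E[\gamma(X) 1_{\hh(X) \neq \hh^*(X)}] \leq \sE_{\ell_{0-1}}(h) - \sE_{\ell_{0-1}}(h^*)$, and the monotonicity of $x \mapsto x^\alpha$ on $\Rset_+$, followed by multiplication by $c$, gives the second inequality.

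For the first inequality, let $A = \curl*{x : \hh(x) \neq \hh^*(x)}$ and $M = \E[\gamma(X) 1_A]$. For any $t > 0$, split according to the size of $\gamma$ and bound
\[
\Pr(A) \leq \Pr(\gamma(X) \leq t) + \Pr(A \cap \curl*{\gamma(X) > t}) \leq B\, t^{\frac{\alpha}{1-\alpha}} + \frac{M}{t},
\]
using the Tsybakov noise assumption for the first term and the pointwise bound $1_A 1_{\gamma(X) > t} \leq \gamma(X) 1_A / t$ for the second. Setting $r = \alpha/(1-\alpha)$, so that $r + 1 = 1/(1-\alpha)$, the minimizer in $t$ is $t^\star = (M/(Br))^{1-\alpha}$, and substituting yields $\Pr(A) \leq c\, M^\alpha$ with $c = B^{1-\alpha} r^{-\alpha}/(1-\alpha)$, a constant depending only on $B$ and $\alpha$.

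The main technical obstacle is the careful tracking of exponents through the optimization step, in particular verifying that the two summands evaluated at $t^\star$ combine cleanly into $B^{1-\alpha} M^\alpha$ times an $\alpha$-dependent prefactor. The boundary case $M = 0$ is handled either by letting $t \to 0^+$ in the displayed bound (giving $\Pr(A) \leq 0$), or by observing that the Tsybakov assumption applied as $t \to 0^+$ forces $\Pr(\gamma(X) = 0) = 0$, so that $M = 0$ implies $1_A = 0$ almost surely. Once these ingredients are assembled, the lemma follows directly.
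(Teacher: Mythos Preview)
Your proof is correct but takes a different route from the paper for the first inequality. The paper lower-bounds $M=\E[\gamma(X)1_A]$ via the layer-cake formula $\E[Z]=\int_0^\infty \Pr[Z>t]\,dt$, restricts the integral to $[0,u]$, and after some rewriting obtains $M\geq u\Pr(A)-B(1-\alpha)u^{1/(1-\alpha)}$; optimizing over $u$ and inverting gives $\Pr(A)\leq c\,M^\alpha$ with $c=B^{1-\alpha}/\alpha^\alpha$. You instead upper-bound $\Pr(A)$ directly by splitting on $\curl*{\gamma(X)\leq t}$ and applying Markov on the complement, then optimize over $t$. Your argument is more elementary (no tail-integral representation, just a union bound and Markov), at the cost of a slightly worse constant $c=B^{1-\alpha}(1-\alpha)^{\alpha-1}/\alpha^\alpha$; the paper's integral approach squeezes out the extra factor $(1-\alpha)^{\alpha-1}>1$. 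For the second inequality both you and the paper simply invoke the definition of the margin.
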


\begin{restatable}{lemma}{TsybakovEquiv}
\label{lemma:Tsybakov-equiv}
Assume that for any $h \in \sH_{\rm{all}}$, we have $\Pr[\hh(X) \neq
  \hh^*(X)] \leq c \E[\gamma(X) 1_{\gamma(X) \leq t
}]^{\alpha}$. Then, the Tsybakov noise condition holds, that is, there
exists a constant $B > 0$, such that
\begin{equation*}
\forall t > 0, \quad \Pr[\gamma(X) \leq t] \leq B t^{\frac{\alpha}{1 - \alpha}}.
\end{equation*}
\end{restatable}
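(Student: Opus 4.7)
The plan is to establish the Tsybakov noise condition from the stated assumption by a suitable choice of hypothesis $h \in \sH_{\rm{all}}$ that localizes the disagreement region $\{\hh(x) \neq \hh^*(x)\}$ exactly to the low-margin set $\{\gamma(x) \leq t\}$. Once this hypothesis is plugged into the assumed inequality, the tail probability $\Pr[\gamma(X) \leq t]$ appears on both sides, and the standard algebraic manipulation used in the classical binary proof yields the desired power-law bound.

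More concretely, for each fixed $t > 0$, I would construct a measurable $h_t \colon \sX \to \Rset^n$ as follows. On the set $\{x : \gamma(x) > t\}$, set $h_t(x) = h^*(x)$ so that $\hh_t(x) = \hh^*(x)$. On the complementary set $\{x : \gamma(x) \leq t\}$, choose a measurable label $\pi(x) \neq \hh^*(x)$ (for instance the smallest index different from $\hh^*(x)$) and set $h_t(x,\pi(x)) = 1$ while $h_t(x,y) = 0$ for $y \neq \pi(x)$, so that $\hh_t(x) = \pi(x) \neq \hh^*(x)$. Since both $\gamma$ and $\hh^*$ are measurable (as composites of the conditional probabilities $p(y \mid x)$ with continuous/argmax operations under the fixed tie-breaking rule), the resulting $h_t$ is measurable, hence belongs to $\sH_{\rm{all}}$. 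By construction, $\bone_{\hh_t(X) \neq \hh^*(X)} = \bone_{\gamma(X) \leq t}$ pointwise.

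Applying the hypothesized inequality to $h = h_t$ then gives
\begin{align*}
\Pr[\gamma(X) \leq t]
= \E\bracket*{\bone_{\hh_t(X) \neq \hh^*(X)}}
\leq c\, \E\bracket*{\gamma(X)\, \bone_{\gamma(X) \leq t}}^{\alpha}.
\end{align*}
I would then bound the expectation trivially by $\E[\gamma(X)\bone_{\gamma(X) \leq t}] \leq t\,\Pr[\gamma(X) \leq t]$, which yields
\begin{align*}
\Pr[\gamma(X) \leq t] \leq c\, t^{\alpha}\, \Pr[\gamma(X) \leq t]^{\alpha}.
\end{align*}
Rearranging, $\Pr[\gamma(X) \leq t]^{1-\alpha} \leq c\, t^{\alpha}$, so that $\Pr[\gamma(X) \leq t] \leq B\, t^{\alpha/(1-\alpha)}$ with $B = c^{1/(1-\alpha)}$, giving exactly the Tsybakov noise condition.

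The only nontrivial step is the measurable construction of $h_t$; everything else is a one-line substitution followed by elementary algebra. I expect the measurability to present no real obstacle because the partition of $\sX$ into $\{\gamma(x) > t\}$ and $\{\gamma(x) \leq t\}$ is Borel and one can pick $\pi(x)$ as the minimum index in $[n]\setminus\{\hh^*(x)\}$, which is measurable. (If one prefers a completely canonical choice, $\pi(x) = 1 + (\hh^*(x) \bmod n)$ works just as well.) Thus the main conceptual point is simply recognizing that the freedom to choose any measurable hypothesis lets us align the two indicators in the assumption, after which the proof is essentially a calculation.
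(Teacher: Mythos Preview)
Your proposal is correct and follows essentially the same approach as the paper: choose a measurable $h$ so that $1_{\hh(X)\neq\hh^*(X)} = 1_{\gamma(X)\leq t}$, apply the assumption, bound $\E[\gamma(X)1_{\gamma(X)\leq t}]\leq t\,\Pr[\gamma(X)\leq t]$, and rearrange to obtain $B = c^{1/(1-\alpha)}$. The only difference is that you spell out the measurable construction of $h_t$ explicitly, whereas the paper simply asserts the existence of such an $h$.
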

The proofs of Lemma~\ref{lemma:Tsybakov} and
Lemma~\ref{lemma:Tsybakov-equiv} are included in
Appendix~\ref{app:Tsybakov}. To the best of our knowledge, there are
no previous results that formally analyze these properties of the
Tsybakov noise in the general multi-class classification setting,
although the similar result in the binary setting is well-known.
Next, we assume
that there exists a universal constant $c > 0$ and $\alpha \in [0, 1)$
  such that for all $h \in \sH$, the following Tsybakov noise
  inequality holds:
\begin{equation}
\label{eq:Tsybakov}
\E[1_{\hh(x) \neq \hh^*(x)}]
\leq c \bracket*{\sE_{\ell_{0-1}}(h) - \sE_{\ell_{0-1}}(h^*)}^\alpha.
\end{equation}
where $h^*$ is the Bayes-classifier.  We also assume that there is no
approximation error and that $\sM_{\ell_{0-1}}(\sH) = 0$. We refer to
this as the \emph{Tsybakov noise assumption} in multi-class
classification.

\begin{restatable}{theorem}{TsybakovMulti}
  \label{Thm:tsybakov-multi}
  Consider a multi-class classification setting where the Tsybakov
  noise assumption holds. Assume that the following holds for all $h
  \in \sH$ and $x \in \sX$: $\Delta\sC_{\ell_{0-1},\sH}(h, x) \leq
  \Gamma \paren*{\Delta \sC_{\ell,\sH}(h, x)}$, with $\Gamma(x) =
  x^{\frac{1}{s}}$, for some $s \geq 1$.  Then, for any $h \in \sH$,
  \[
  \sE_{\ell_{0-1}}(h) - \sE^*_{\ell_{0-1}}(\sH)
  \leq c^{\frac{s - 1}{s - \alpha(s - 1)}}\bracket*{\sE_{\ell}(h) - \sE^*_{\ell}(\sH) + \sM_{\ell}(\sH)}^{\frac{1}{s - \alpha(s - 1)}}.
  \]
\end{restatable}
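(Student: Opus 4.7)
The plan is to mirror the argument given for Theorem~\ref{Thm:tsybakov-binary}, with Theorem~\ref{Thm:new-bound-power} as the main engine, while taking care of the multi-class subtlety that $\Delta\sC_{\ell_{0-1},\sH}(h,x)$ is no longer simply proportional to $\mathbb{I}_{\widehat h(x)\neq \widehat h^*(x)}$ but is only bounded above by it. The two standing assumptions (no approximation error and $\sM_{\ell_{0-1}}(\sH)=0$) will be used precisely to guarantee that the Bayes classifier $h^*$ attains the best-in-class conditional zero-one error almost everywhere, so that $\sC^*_{\ell_{0-1}}(\sH,x) = 1-p(\widehat h^*(x)\mid x)$ and therefore $\Delta\sC_{\ell_{0-1},\sH}(h,x)=0$ whenever $\widehat h(x)=\widehat h^*(x)$, and $\Delta\sC_{\ell_{0-1},\sH}(h,x)\leq 1$ otherwise.

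For a fixed $\eps>0$, I would introduce the two auxiliary functions
\[
\beta(h,x) = \mathbb{I}_{\widehat h(x)\neq \widehat h^*(x)}+\eps,
\qquad
\alpha(h,x) = \E_X[\beta(h,x)]^{s},
\]
the second being constant in $x$ so that $\sup_{x}\alpha(h,x)<+\infty$, and check the hypothesis of Theorem~\ref{Thm:new-bound-power}, namely
\[
\frac{\Delta\sC_{\ell_{0-1},\sH}(h,x)\,\E_X[\beta(h,x)]}{\beta(h,x)}
\;\leq\;\bigl(\alpha(h,x)\,\Delta\sC_{\ell,\sH}(h,x)\bigr)^{\!1/s}.
\]
After substituting $\alpha(h,x)$, this reduces to $\Delta\sC_{\ell_{0-1},\sH}(h,x)\leq \beta(h,x)\,(\Delta\sC_{\ell,\sH}(h,x))^{1/s}$, which is immediate: if $\widehat h(x)=\widehat h^*(x)$ the left-hand side vanishes by the observation above, and otherwise $\beta(h,x)\geq 1$ and the inequality collapses to the given pointwise assumption $\Delta\sC_{\ell_{0-1},\sH}(h,x)\leq(\Delta\sC_{\ell,\sH}(h,x))^{1/s}$.

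Theorem~\ref{Thm:new-bound-power} then yields, with conjugate exponent $t$,
\[
\sE_{\ell_{0-1}}(h)-\sE^*_{\ell_{0-1}}(\sH)+\sM_{\ell_{0-1}}(\sH)
\;\leq\;\gamma(h)\bigl[\sE_{\ell}(h)-\sE^*_{\ell}(\sH)+\sM_{\ell}(\sH)\bigr]^{1/s},
\]
and a short computation with $\alpha^{t/s}=\E_X[\beta]^{t}$ cancels the denominator, giving $\gamma(h)=\E_X[\beta(h,x)^{t}]^{1/t}$. Sending $\eps\to 0^{+}$ collapses this to $\gamma(h)=\Pr[\widehat h(X)\neq \widehat h^*(X)]^{1/t}$.

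To finish, I would plug in the Tsybakov noise assumption $\Pr[\widehat h(X)\neq \widehat h^*(X)]\leq c\,[\sE_{\ell_{0-1}}(h)-\sE^*_{\ell_{0-1}}(\sH)]^{\alpha}$, use $\sM_{\ell_{0-1}}(\sH)=0$, and solve the resulting self-bounding inequality
\[
A \;\leq\; c^{1/t}\,A^{\alpha/t}\,B^{1/s},\qquad A:=\sE_{\ell_{0-1}}(h)-\sE^*_{\ell_{0-1}}(\sH),\; B:=\sE_{\ell}(h)-\sE^*_{\ell}(\sH)+\sM_{\ell}(\sH),
\]
for $A$. Isolating $A$ gives $A\leq c^{1/(t-\alpha)}B^{t/(s(t-\alpha))}$, and substituting $t=s/(s-1)$ reduces the two exponents to $(s-1)/(s-\alpha(s-1))$ and $1/(s-\alpha(s-1))$, as claimed. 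The main conceptual obstacle (as opposed to routine algebra) is the very first step: justifying $\Delta\sC_{\ell_{0-1},\sH}(h,x)=0$ on $\{\widehat h(x)=\widehat h^*(x)\}$ in the multi-class setting, which requires invoking both the zero-minimizability-gap assumption and the zero-approximation-error assumption to identify the best-in-class conditional zero-one predictor with the Bayes label $y_{\max}$ almost everywhere.
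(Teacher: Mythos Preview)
Your proposal is correct and follows essentially the same route as the paper: the same choices $\beta(h,x)=\mathbb{I}_{\widehat h(x)\neq\widehat h^*(x)}+\eps$ and $\alpha(h,x)=\E_X[\beta(h,x)]^s$, the same appeal to Theorem~\ref{Thm:new-bound-power}, the limit $\eps\to 0$, and the same self-bounding algebra at the end. The one noteworthy difference is that the paper obtains $\Delta\sC_{\ell_{0-1},\sH}(h,x)=p(\widehat h^*(x)\mid x)-p(\widehat h(x)\mid x)$ by invoking Lemma~\ref{lemma:explicit_assumption_01} (which assumes $\sH$ symmetric and complete), whereas you derive the needed fact $\widehat h(x)=\widehat h^*(x)\Rightarrow\Delta\sC_{\ell_{0-1},\sH}(h,x)=0$ directly from the zero-approximation-error and zero-minimizability-gap assumptions built into the Tsybakov noise assumption; your justification is arguably more faithful to the theorem's stated hypotheses.
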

To our knowledge, these are the first multi-class classification
$\sH$-consistency bounds, and
even excess error bounds (a special case where $\sH = \sH_{\rm{all}}$)
established under the Tsybakov noise assumption.  Here too, this
theorem offers a significantly improved $\sH$-consistency guarantee
for multi-class classification. For smooth loss functions, standard
$\sH$-consistency bounds rely on a square-root dependence ($s =
2$). This dependence is improved to a linear rate when the Massart
noise condition holds ($\alpha \to 1$), or to an intermediate rate
between linear and square-root for other values of $\alpha$ within the
range $(0, 1)$. The proof is given in
Appendix~\ref{app:tsybakov-multi}.  Illustrative examples of these
bounds for constrained losses and comp-sum losses are presented in
Tables~\ref{tab:example-multi-cstnd} and \ref{tab:example-multi-comp}.

\begin{table*}[t]
\vskip -.2in
  \centering
  \resizebox{\textwidth}{!}{
  \begin{tabular}{@{\hspace{0cm}}llll@{\hspace{0cm}}}
    \toprule
   Loss functions & $\ell$  & $\Gamma$  & $\sH$-consistency bounds\\
    \midrule
    Constrained hinge & $\sum_{y'\neq y} \Phi_{\mathrm{hinge}} \paren*{-h(x, y')}$ & $x^1$ & $ \sE_{\ell}(h) - \sE^*_{\ell}(\sH) + \sM_{\ell}(\sH)$\\
    Constrained exponential & $\sum_{y'\neq y} \Phi_{\mathrm{exp}} \paren*{-h(x, y')}$    & $x^2$  & $ c^{\frac{1}{2 - \alpha}}\bracket*{\sE_{\ell}(h) - \sE^*_{\ell}(\sH) + \sM_{\ell}(\sH)}^{\frac{1}{2 - \alpha}}$  \\
    Constrained squared-hinge  & $\sum_{y'\neq y} \Phi_{\mathrm{sq-hinge}} \paren*{-h(x, y')}$ & $x^2$ &    $ c^{\frac{1}{2 - \alpha}}\bracket*{\sE_{\ell}(h) - \sE^*_{\ell}(\sH) + \sM_{\ell}(\sH)}^{\frac{1}{2 - \alpha}}$    \\
    Constrained $\rho$-margin & $\sum_{y'\neq y} \Phi_{\rho} \paren*{-h(x, y')}$ & $x^1$ &  $ \sE_{\ell}(h) - \sE^*_{\ell}(\sH) + \sM_{\ell}(\sH)$\\
    \bottomrule
  \end{tabular}
  }
  \caption{Examples of enhanced $\sH$-consistency bounds under the Tsybakov noise assumption and with symmetric and complete hypothesis sets, as provided by
  Theorem~\ref{Thm:tsybakov-multi}, for constrained losses $\ell(h, x, y) = \Phi^{\rm{cstnd}}(h, x, y) = \sum_{y'\neq y} \Phi \paren*{-h(x, y')} \text{ subject to } \sum_{y\in \sY} h(x, y) = 0$ (with only the surrogate portion
  displayed).}
\label{tab:example-multi-cstnd}
\end{table*}

\begin{table*}[t]
  \centering
  \resizebox{\textwidth}{!}{
  \begin{tabular}{@{\hspace{0cm}}llll@{\hspace{0cm}}}
    \toprule
   Loss functions & $\ell$  & $\Gamma$  & $\sH$-consistency bounds\\
    \midrule
    Sum exponential & $\sum_{y'\neq y} e^{h(x, y') - h(x, y)} $ & $x^1$ & $ \sE_{\ell}(h) - \sE^*_{\ell}(\sH) + \sM_{\ell}(\sH)$\\
    Multinomial logistic & $-\log \paren*{\frac{e^{h(x, y)}}{\sum_{y' \in\sY}e^{h(x, y')}}}$    & $x^2$  & $ c^{\frac{1}{2 - \alpha}}\bracket*{\sE_{\ell}(h) - \sE^*_{\ell}(\sH) + \sM_{\ell}(\sH)}^{\frac{1}{2 - \alpha}}$  \\
    Generalized cross-entropy  & $\frac{1}{\alpha} \bracket*{1 - \bracket*{\frac{e^{h(x, y)}}{\sum_{y'\in \sY} e^{h(x, y')}}}^{\alpha}}$ & $x^2$ &   $ c^{\frac{1}{2 - \alpha}}\bracket*{\sE_{\ell}(h) - \sE^*_{\ell}(\sH) + \sM_{\ell}(\sH)}^{\frac{1}{2 - \alpha}}$    \\
    Mean absolute error & $ 1 - \frac{e^{h(x, y)}}{\sum_{y'\in \sY} e^{h(x, y')}}$ & $x^1$ &  $ \sE_{\ell}(h) - \sE^*_{\ell}(\sH) + \sM_{\ell}(\sH)$\\
    \bottomrule
  \end{tabular}
  }
  \caption{Examples of enhanced $\sH$-consistency bounds under the Tsybakov noise assumption and with symmetric and complete hypothesis sets, as provided by
  Theorem~\ref{Thm:tsybakov-multi}, for comp-sum losses (with only the surrogate portion
  displayed).}
\label{tab:example-multi-comp}
\end{table*}

\ignore{
It is worth noting that deriving $\sH$-consistency
bounds for any concave $\Gamma$ under low-noise conditions in both
binary and multi-class classification remains a potential avenue for
future work.
}

\section{Bipartite ranking}
\label{sec:bi-ranking}

In preceding sections, we demonstrated how our new tools
enable the derivation of enhanced $\sH$-consistency bounds in various
classification scenarios: standard multi-class classification and
low-noise regimes of both binary and multi-class classification. Here,
we extend the applicability of our refined tools to the bipartite
ranking setting. We illustrate how they facilitate the establishment
of more favorable $\sH$-consistency bounds for classification
surrogate losses $\ell_{\Phi}$ with respect to the bipartite ranking
surrogate losses $\sfL_{\Phi}$. The loss functions $\ell_{\Phi}$
and $\sfL_{\Phi}$ are defined as follows:
\begin{align*}
  \ell_{\Phi}(h, x, y) = \Phi(yh(x)), \quad \sfL_{\Phi}(h, x, x', y, y')
  = \Phi \paren[\Big]{\tfrac{(y - y') \paren*{h(x) - h(x')}}{2}} 1_{y \neq y'},
\end{align*}
where $\Phi$ is a non-negative and non-increasing function. We will
say that $\ell_{\Phi}$ admits an $\sH$-consistency bound with respect
to $\sfL_{\Phi}$, if there exists a concave function $\Gamma \colon
\Rset_{+} \to \Rset_{+}$ with $\Gamma(0) = 0$ such that the following
inequality holds:
\begin{equation*}
  \sE_{\sfL_{\Phi}}(h) - \sE^*_{\sfL_{\Phi}}(\sH) +
  \sM_{\sfL_{\Phi}}(\sH)\leq \Gamma\paren*{\sE_{\ell_{\Phi}}(h)
    - \sE^*_{\ell_{\Phi}}(\sH) + \sM_{\ell_{\Phi}}(\sH)},
\end{equation*}
where $\sM_{\ell_{\Phi}}(\sH) = \sE^*_{\ell_{\Phi}}(\sH) - \E_X
\bracket*{\sC^*_{\ell_{\Phi}}(\sH, x)} $ and $\sM_{\sfL_{\Phi}}(\sH) =
\sE^*_{\sfL_{\Phi}}(\sH) - \E_{(x, x')} \bracket*{\ov
  \sC^*_{\sfL_{\Phi}}(\sH, x,x')} $ represent the minimizability gaps
for $\ell_{\Phi}$ and $\sfL_{\Phi}$, respectively.

\subsection{Fundamental tools for bipartite ranking}
\label{sec:bi-tools}

We first extend our new fundamental tools to the bipartite ranking setting.

\begin{restatable}{theorem}{NewBoundConcaveRanking}
\label{Thm:new-bound-concave-ranking}
Assume that there exist two concave functions $\Gamma_1 \colon \Rset_+
\to \Rset$ and $\Gamma_2 \colon \Rset_+ \to \Rset$, and two positive
functions $\alpha_1 \colon \sH \times \sX \to \Rset^*_+$ and $\alpha_2
\colon \sH \times \sX \to \Rset^*_+$ with $\E_{x \in \sX}
\bracket*{\alpha_1(h, x)} < +\infty$ and $\E_{x \in \sX}
\bracket*{\alpha_2(h, x)} < +\infty$ for all $h \in \sH$ such that the
following holds for all $h\in \sH$ and $(x, x') \in \sX \times \sX$:
$\Delta \ov \sC_{\lbi, \sH}(h, x, x') \leq \Gamma_1
\paren*{\alpha_1(h, x') \, \Delta \sC_{\ell,\sH}(h, x)} + \Gamma_2
\paren*{\alpha_2(h, x) \, \Delta \sC_{\ell,\sH}(h, x')}$. Then, the
following inequality holds for any hypothesis $h \in \sH$:
\begin{align*}
  \sE_{\lbi}(h) - \sE_{\lbi}^*(\sH) + \sM_{\lbi}(\sH)
  & \leq \Gamma_1 \paren*{ \gamma_1(h) \sfD_{\ell}(h)}
+ \Gamma_2 \paren*{\gamma_2(h) \sfD_{\ell}(h)}.
\end{align*}
\end{restatable}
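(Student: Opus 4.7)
The plan is to reduce the expected bipartite conditional regret to two univariate problems by exploiting the independence of the pair $(X, X')$, and then to apply Jensen's inequality twice, once in each variable, using the concavity of $\Gamma_1$ and $\Gamma_2$.

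First, from the definition $\sM_{\lbi}(\sH) = \sE^*_{\lbi}(\sH) - \E_{(x,x')}\bracket*{\ov \sC^*_{\lbi}(\sH, x, x')}$, I would rewrite the left-hand side as the single expectation
\[
\sE_{\lbi}(h) - \sE^*_{\lbi}(\sH) + \sM_{\lbi}(\sH)
= \E_{(x,x')}\bracket*{\Delta \ov \sC_{\lbi, \sH}(h, x, x')},
\]
and then invoke the pointwise hypothesis of the theorem to bound this quantity by $T_1(h) + T_2(h)$, where $T_1(h) = \E_{(x,x')}\bracket*{\Gamma_1\paren*{\alpha_1(h,x')\,\Delta \sC_{\ell,\sH}(h,x)}}$ and $T_2(h)$ is defined analogously after swapping the roles of $x$ and $x'$ and replacing $(\alpha_1, \Gamma_1)$ with $(\alpha_2, \Gamma_2)$.

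Next I would bound $T_1(h)$. Since $X$ and $X'$ are i.i.d., I can factor the expectation as $T_1(h) = \E_{x'}\bracket*{\E_x\bracket*{\Gamma_1\paren*{\alpha_1(h,x')\,\Delta \sC_{\ell,\sH}(h,x)}}}$. For any fixed $x'$, the quantity $\alpha_1(h,x')$ is deterministic with respect to the inner expectation, so concavity of $\Gamma_1$ combined with Jensen's inequality gives
\[
\E_x\bracket*{\Gamma_1\paren*{\alpha_1(h,x')\,\Delta \sC_{\ell,\sH}(h,x)}}
\leq \Gamma_1\paren*{\alpha_1(h,x')\,\E_x\bracket*{\Delta \sC_{\ell,\sH}(h,x)}}
= \Gamma_1\paren*{\alpha_1(h,x')\,\sfD_{\ell}(h)},
\]
where I use the identity $\E_x\bracket*{\Delta \sC_{\ell,\sH}(h,x)} = \sfD_{\ell}(h) \coloneqq \sE_\ell(h) - \sE^*_\ell(\sH) + \sM_\ell(\sH)$, which is immediate from the definition of the minimizability gap $\sM_\ell(\sH)$. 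A second application of Jensen over $x'$ then yields $T_1(h) \leq \Gamma_1\paren*{\gamma_1(h)\,\sfD_{\ell}(h)}$ with $\gamma_1(h) \coloneqq \E_x\bracket*{\alpha_1(h,x)}$. The symmetric argument applied to $T_2(h)$ gives $T_2(h) \leq \Gamma_2\paren*{\gamma_2(h)\,\sfD_{\ell}(h)}$ with $\gamma_2(h) \coloneqq \E_x\bracket*{\alpha_2(h,x)}$, and summing the two bounds delivers the claim.

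I do not foresee a genuine obstacle here: the integrability assumption $\E_x[\alpha_i(h,x)] < +\infty$ in the hypothesis validates both Jensen steps, and the i.i.d.\ structure of $(X, X')$ is precisely what permits the clean decoupling. The only subtle point, absent in the non-ranking Theorems~\ref{Thm:new-bound-convex} and \ref{Thm:new-bound-concave}, is that the inner Jensen step must treat $\alpha_1(h,x')$ as a constant since the outer variable $x'$ is being conditioned on, and only the outer Jensen step then absorbs $\alpha_1$ into a scalar factor $\gamma_1(h)$. A monotonicity-based refinement in the spirit of the FKG-based second halves of Theorems~\ref{Thm:new-bound-convex} and \ref{Thm:new-bound-concave} could conceivably tighten $\gamma_i(h)$ further, but this is not needed for the statement as written.
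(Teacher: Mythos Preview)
Your proposal is correct and follows essentially the same approach as the paper's proof: rewrite the left-hand side as $\E_{X,X'}[\Delta \ov \sC_{\lbi,\sH}(h,x,x')]$, apply the pointwise assumption, and then use Jensen's inequality together with the independence of $X$ and $X'$ to factor the product expectation. The only cosmetic difference is that the paper applies Jensen once over the joint law and then factors $\E_{X,X'}[\alpha_1(h,x')\,\Delta\sC_{\ell,\sH}(h,x)]$ directly via independence, whereas you iterate Jensen (inner over $x$, outer over $x'$); both routes give the identical bound.
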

with $\gamma_1(h) = \E_{x \in \sX} \bracket*{\alpha_1(h, x)}$,
$\gamma_2(h) = \E_{x \in \sX} \bracket*{\alpha_2(h, x)}$, and
$\sfD_{\ell}(h) = \sE_{\ell}(h) - \sE^*_{\ell}(\sH) + \sM_{\ell}(\sH)$.

The proof, detailed in Appendix~\ref{app:new-bound-concave-ranking},
leverages the fact that in the bipartite ranking setting, two pairs
$(x, y)$ and $(x', y')$ are drawn i.i.d.\ according to the
distribution $\sD$. As in the classification setting,
Theorem~\ref{Thm:new-bound-concave-ranking} is a fundamental tool for
establishing enhanced $\sH$-consistency bounds. This is achieved
incorporating the additional terms $\alpha_1$ and $\alpha_2$, which
can depend on both the hypothesis $h$ and the instances $x$ or $x'$,
thereby offering greater flexibility.

Note that such enhanced $\sH$-consistency bounds are meaningful only
when $\Gamma_1(0) + \Gamma_2(0) = 0$. This ensures that when the
minimizability gaps vanish (e.g., in the case where $\sH =
\sH_{\rm{all}}$ or in more generally realizable cases), the estimation
error of classification losses $\sE_{\ell}(h) - \sE^*_{\ell}(\sH)$ is
zero implies that the estimation error of bipartite ranking losses
$\sE_{\lbi}(h) - \sE_{\lbi}^*(\sH)$ is also zero.
This requires that there exists $\Gamma_1$ and $\Gamma_2$ such that
$\Gamma_1(0) + \Gamma_2(0) = 0$ and $\Delta \ov \sC_{\lbi, \sH}(h, x,
x') \leq \Gamma_1 \paren*{\alpha_1(h, x') \, \Delta \sC_{\ell,\sH}(h,
  x)} + \Gamma_2 \paren*{\alpha_2(h, x) \, \Delta \sC_{\ell,\sH}(h,
  x')}$, for all $h\in \sH$ and $(x, x') \in \sX \times \sX$. Note
that a necessary condition for this requirement is \emph{calibration}:
we say that a classification loss $\ell$ is \emph{calibrated} with
respect to a bipartite ranking loss $\sfL$, if for all $h\in
\sH_{\rm{all}}$ and $(x, x') \in \sX \times \sX$:
\begin{equation*}
  \Delta \sC_{\ell, \sH_{\rm{all}}}(h, x)
  = 0 \text{ and } \Delta \sC_{\ell, \sH_{\rm{all}}}(h, x')
  = 0 \implies \Delta \ov \sC_{\lbi, \sH_{\rm{all}}}(h, x, x') = 0.
\end{equation*}

We now introduce a family of auxiliary functions that are
differentiable and that admit a property facilitating the calibration
between $\ell_{\Phi}$ and $\sfL_{\Phi}$.

\begin{restatable}{theorem}{NewBoundConcaveRankingGeneral}
\label{Thm:new-bound-concave-ranking-general}
Assume that $\Phi$ is convex and differentiable, and satisfies 
$\Phi'(t) < 0$ for all $t \in \Rset$, and $\frac{\Phi'(t)}{\Phi'(-t)}
= e^{-\nu t}$ for some $\nu > 0$. Then, $\ell_{\Phi}$ is calibrated
with respect to $\sfL_{\Phi}$.
\end{restatable}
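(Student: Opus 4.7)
The plan is to explicitly identify the unique pointwise minimizer of the classification conditional error over $\sH_{\rm{all}}$, and then verify that the ratio assumption $\Phi'(t)/\Phi'(-t) = e^{-\nu t}$ makes the same scores simultaneously minimize the ranking conditional error. The key observation is that both stationarity equations involve a ratio of $\Phi'$ at $\pm u$, and the ratio hypothesis collapses each of them into a single exponential equation with a closed-form solution.

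First, I would write the classification conditional error as the convex function $g_x(u) = \eta(x) \Phi(u) + (1 - \eta(x)) \Phi(-u)$ of $u = h(x)$. Since $\sH_{\rm{all}}$ imposes no constraint on $h(x)$ across different $x$, minimizing $\sC_{\ell_{\Phi}}(h, x)$ reduces to minimizing $g_x$ over $\Rset$. Its stationarity equation $\eta(x) \Phi'(u) = (1 - \eta(x)) \Phi'(-u)$, divided by $\Phi'(-u) < 0$ and combined with the ratio hypothesis, becomes $e^{-\nu u} = (1 - \eta(x))/\eta(x)$, with the unique solution $h^*(x) = \frac{1}{\nu} \log \frac{\eta(x)}{1 - \eta(x)}$ whenever $\eta(x) \in (0, 1)$. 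A short coercivity check (since $\Phi' < 0$ everywhere forces $\Phi(t) \to +\infty$ as $t \to -\infty$) shows that $g_x$ does attain its infimum, so $\Delta \sC_{\ell_{\Phi}, \sH_{\rm{all}}}(h, x) = 0$ forces $h(x) = h^*(x)$.

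Next, I would expand
\[
\ov \sC_{\sfL_{\Phi}}(h, x, x') = \eta(x)(1 - \eta(x')) \Phi(h(x) - h(x')) + \eta(x')(1 - \eta(x)) \Phi(h(x') - h(x))
\]
and note that it depends on $h$ only through the margin $v = h(x) - h(x')$. Minimizing over $\sH_{\rm{all}}$ therefore reduces to a one-dimensional convex minimization in $v$, and the same manipulation of the stationarity condition, now with ratio of weights $\eta(x')(1 - \eta(x))/[\eta(x)(1 - \eta(x'))]$, yields the unique optimal margin
\[
v^* = \frac{1}{\nu} \log \frac{\eta(x)(1 - \eta(x'))}{\eta(x')(1 - \eta(x))} = h^*(x) - h^*(x').
\]
Combining the two steps, if $\Delta \sC_{\ell_{\Phi}, \sH_{\rm{all}}}(h, x) = 0$ and $\Delta \sC_{\ell_{\Phi}, \sH_{\rm{all}}}(h, x') = 0$, then $h(x) = h^*(x)$ and $h(x') = h^*(x')$, hence the margin $h(x) - h(x') = v^*$ minimizes the one-dimensional problem above, giving $\Delta \ov \sC_{\sfL_{\Phi}, \sH_{\rm{all}}}(h, x, x') = 0$, which is calibration.

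The step I expect to require the most care is the uniqueness of critical points: since $\Phi$ is only assumed convex (not strictly convex), I cannot appeal to strict convexity of $g_x$ directly. I would instead invoke the strict monotonicity of $u \mapsto e^{-\nu u}$, which immediately implies that the exponential form of the stationarity equation has at most one root. The only remaining subtlety is the boundary regime $\eta(x) \in \{0, 1\}$, where $g_x$ is strictly monotone on $\Rset$ and the infimum is not attained at any finite score; there the premise $\Delta \sC_{\ell_{\Phi}, \sH_{\rm{all}}}(h, x) = 0$ cannot hold for any $h \in \sH_{\rm{all}}$, so the implication is vacuously true.
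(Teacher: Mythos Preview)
Your proposal is correct and follows essentially the same route as the paper: derive the stationarity condition for the classification conditional error, use the ratio hypothesis to solve it explicitly as $h^*(x) = \tfrac{1}{\nu}\log\tfrac{\eta(x)}{1-\eta(x)}$, and then check that the resulting difference $h^*(x)-h^*(x')$ satisfies the stationarity condition for the ranking conditional error. Your treatment is in fact more careful than the paper's---you justify that the infimum is attained (coercivity), that the critical point is unique (strict monotonicity of $e^{-\nu u}$ rather than strict convexity of $\Phi$), and you dispose of the degenerate case $\eta(x)\in\{0,1\}$---whereas the paper's proof simply differentiates and solves without addressing these points.
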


The proof can be found in
Appendix~\ref{app:new-bound-concave-ranking-general}.
Theorem~\ref{Thm:new-bound-concave-ranking-general} identifies a
family of functions $\Phi$ for which $\ell_{\Phi}$ is calibrated with
respect to $\sfL_{\Phi}$. This inclues the exponential loss and the
logistic loss, which fulfill the properties outlined in
Theorem~\ref{Thm:new-bound-concave-ranking-general}.
For the exponential loss, $\Phi(u) = \Phi_{\rm{exp}}(u) = e^{-u}$, we
have $ \frac{\Phi'_{\rm{exp}}(t)}{\Phi'_{\rm{exp}}(-t)} =
\frac{-e^{-t}}{-e^{t}} = e^{-2t}$. Similarly, for the logistic loss,
$\Phi(u) = \Phi_{\rm{log}}(u) = \log(1 + e^{-u})$, we have $
\frac{\Phi'_{\rm{log}}(t)}{\Phi'_{\rm{log}}(-t)} =
\frac{-\frac{1}{e^{t} + 1}}{-\frac{1}{e^{-t} + 1}} = e^{-t} $.  In the
next sections, we will prove $\sH$-consistency bounds in these two
cases.

\subsection{Exponential loss}
\label{sec:bi-exp}

We first consider the exponential loss, where $\Phi(u) =
\Phi_{\rm{exp}}(u) = e^{-u}$.
In the bipartite ranking setting, a hypothesis set $\sH$ is said to be
\emph{complete} if for any $x \in \sX$, $\curl*{h(x) \colon h \in
  \sH}$ spans $\Rset$.

\begin{restatable}{theorem}{NewBoundConcaveRankingExp}
\label{Thm:new-bound-concave-ranking-exp}
Assume that $\sH$ is complete. Then, the following inequality
holds for the exponential loss $\Phi_{\mathrm{exp}}$:
\begin{align*}
\Delta \ov \sC_{\lbi_{\Phi_{\mathrm{exp}}}, \sH}(h, x, x') 
\leq \sC_{\ell_{\Phi_{\mathrm{exp}}}}(h, x') \,
\Delta \sC_{\ell_{\Phi_{\mathrm{exp}}},\sH}(h, x) + \sC_{\ell_{\Phi_{\mathrm{exp}}}}(h, x) \,
\Delta \sC_{\ell_{\Phi_{\mathrm{exp}}},\sH}(h, x').
\end{align*}
Additionally, for any hypothesis $h \in \sH$, we have
\begin{align*}
  \sE_{\lbi_{\Phi_{\mathrm{exp}}}}(h) - \sE_{\lbi_{\Phi_{\mathrm{exp}}}}^*(\sH)
  + \sM_{\lbi_{\Phi_{\mathrm{exp}}}}(\sH) \leq 2 \sE_{\ell_{\Phi_{\mathrm{exp}}}}(h) \,
  \paren*{\sE_{\ell_{\Phi_{\mathrm{exp}}}}(h) - \sE^*_{\ell_{\Phi_{\mathrm{exp}}}}(\sH)
    + \sM_{\ell_{\Phi_{\mathrm{exp}}}}(\sH)}.
\end{align*}
\end{restatable}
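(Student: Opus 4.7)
The plan is to first establish the pointwise inequality and then obtain the global $\sH$-consistency bound by taking expectations over $(x,x')$ drawn i.i.d.\ from $\sD$. For the pointwise part, I would introduce the shorthand $a = e^{-h(x)}$, $b = e^{h(x)}$, $a' = e^{-h(x')}$, $b' = e^{h(x')}$, and write
\begin{equation*}
f := \sC_{\ell_{\Phi_{\mathrm{exp}}}}(h,x) = \eta(x)\, a + (1-\eta(x))\, b, \qquad g := \sC_{\ell_{\Phi_{\mathrm{exp}}}}(h,x'),
\end{equation*}
while $F := \ov\sC_{\lbi_{\Phi_{\mathrm{exp}}}}(h,x,x') = \eta(x)(1-\eta(x'))\, a b' + \eta(x')(1-\eta(x))\, b a'$. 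A direct expansion then yields the key decomposition
\begin{equation*}
f g = F + \eta(x)\eta(x')\, aa' + (1-\eta(x))(1-\eta(x'))\, bb'.
\end{equation*}

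The main step is to bound the ``extra'' terms. Since $ab = a'b' = 1$, we have $(aa')(bb')=1$, so the AM--GM inequality gives $f g - F \geq 2\sqrt{\eta(x)\eta(x')(1-\eta(x))(1-\eta(x'))}$. Next, completeness of $\sH$ allows pointwise minimization: the classical computation for the exponential loss gives $f^{\ast} = 2\sqrt{\eta(x)(1-\eta(x))}$ and $g^{\ast} = 2\sqrt{\eta(x')(1-\eta(x'))}$, and an analogous calculation in the ranking setting (which reduces to minimizing a function of $h(x)-h(x')$) yields $F^{\ast} = 2\sqrt{\eta(x)(1-\eta(x'))\eta(x')(1-\eta(x))}$. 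A quick rearrangement then reveals the identity $f^{\ast} g^{\ast} = 2 F^{\ast}$, so the AM--GM bound simplifies to $fg \geq F + F^{\ast}$, i.e., $F - F^{\ast} \leq f g - f^{\ast} g^{\ast}$.

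To close the argument at the pointwise level, I would use the non-negativity of conditional regrets: $f \geq f^{\ast}$ and $g \geq g^{\ast}$, so $(f-f^{\ast})(g-g^{\ast}) \geq 0$, which expands to $fg + f^{\ast}g^{\ast} \geq f g^{\ast} + f^{\ast} g$. Substituting into the previous display gives
\begin{equation*}
F - F^{\ast} \leq f g - f^{\ast} g^{\ast} \leq 2 f g - f g^{\ast} - f^{\ast} g = g(f - f^{\ast}) + f(g - g^{\ast}),
\end{equation*}
which is exactly the pointwise inequality of the theorem. For the global bound, I would integrate over i.i.d.\ $(x,x')$. Independence and linearity give $\E_{(x,x')}[g \,\Delta\sC_{\ell_{\Phi_{\mathrm{exp}}},\sH}(h,x)] = \sE_{\ell_{\Phi_{\mathrm{exp}}}}(h) \cdot \E_X[\Delta\sC_{\ell_{\Phi_{\mathrm{exp}}},\sH}(h,x)]$, and symmetrically for the term with $f$, producing the factor $2$. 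By definition of the minimizability gaps, the $x$-expectation equals $\sE_{\ell_{\Phi_{\mathrm{exp}}}}(h) - \sE^{\ast}_{\ell_{\Phi_{\mathrm{exp}}}}(\sH) + \sM_{\ell_{\Phi_{\mathrm{exp}}}}(\sH)$, while the left-hand side becomes $\sE_{\lbi_{\Phi_{\mathrm{exp}}}}(h) - \sE^{\ast}_{\lbi_{\Phi_{\mathrm{exp}}}}(\sH) + \sM_{\lbi_{\Phi_{\mathrm{exp}}}}(\sH)$.

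The main obstacle is the pointwise inequality, specifically recognizing that the product $fg$ naturally splits into $F$ plus two AM--GM-amenable terms, and spotting the identity $f^{\ast} g^{\ast} = 2 F^{\ast}$ that matches the AM--GM lower bound to $F^{\ast}$ exactly. Once this algebraic alignment is in place, the final step via $(f-f^{\ast})(g-g^{\ast}) \geq 0$ and the integration to the global bound are both routine.
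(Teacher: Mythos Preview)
Your proof is correct and takes a genuinely different route from the paper's. The paper writes $\Delta \ov \sC_{\lbi_{\Phi_{\mathrm{exp}}}, \sH}(h, x, x')$ as the perfect square $(\sqrt{\eta(1-\eta')e^{-h+h'}} - \sqrt{\eta'(1-\eta)e^{-h'+h}})^2$, then decomposes this difference of square roots in two symmetric ways, applies $(A+B)^2 \leq 2(A^2+B^2)$ to each, and averages the two resulting inequalities. Your approach instead starts from the product identity $fg = F + \eta\eta'\,aa' + (1-\eta)(1-\eta')\,bb'$, applies AM--GM to the two residual terms (using $aa'\cdot bb'=1$) to obtain $fg - F \geq F^*$, and then closes with the simple inequality $(f-f^*)(g-g^*)\geq 0$ to pass from $fg - f^*g^*$ to $g(f-f^*)+f(g-g^*)$. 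Both proofs rely on the same completeness-based formulas $f^*=2\sqrt{\eta(1-\eta)}$ and $F^*=2\sqrt{\eta(1-\eta')\eta'(1-\eta)}$, and both invoke the general tool (Theorem~\ref{Thm:new-bound-concave-ranking}) for the global bound. Your argument is arguably more transparent: the identity $f^*g^*=2F^*$ makes the structure explicit, and the two elementary inequalities replace the paper's two parallel square expansions followed by averaging. The paper's method, on the other hand, is more mechanical once the perfect-square form is recognized and may generalize more directly to other losses whose conditional regret is a squared difference.
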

See Appendix~\ref{app:new-bound-concave-ranking-exp} for the proof.
The proof leverages our new tool,
Theorem~\ref{Thm:new-bound-concave-ranking}, in conjunction with the
specific form of the conditional regrets for the exponential function
and the convexity of squared function.
This result is remarkable since it directly bounds the estimation
error of the RankBoost loss function by that of AdaBoost. The
observation that AdaBoost often exhibits favorable ranking accuracy,
often approaching that of RankBoost, was first highlighted by
\citet{cortes2003auc}. Later, \citet{rudin2005margin} introduced a
coordinate descent version of RankBoost and demonstrated that, when
incorporating the constant weak classifier, AdaBoost asymptotically
achieves the same ranking accuracy as coordinate descent RankBoost.

Here, we present a stronger non-asymptotic result for the
estimation losses of these algorithms. We show that when the
estimation error of the AdaBoost predictor $h$ is reduced to $\e$, the
corresponding RankBoost loss is bounded by $2
\sE_{\ell_{\Phi_{\mathrm{exp}}}}(h) \, \paren*{\e +
  \sM_{\ell_{\Phi_{\mathrm{exp}}}}(\sH)} -
\sM_{\lbi_{\Phi_{\mathrm{exp}}}}(\sH)$. This provides a stronger
guarantee for the ranking quality of AdaBoost. In the nearly
realizable case, where minimizability gaps are negligible, this upper
bound approximates to $2 \sE_{\ell_{\Phi_{\mathrm{exp}}}}(h) \e$, aligning
with the results of \citet{gao2015consistency} for excess errors,
where $\sH$ is assumed to be the family of all measurable functions.

\subsection{Logistic loss}
\label{sec:bi-log}

Here, we consider the logistic loss, where $\Phi(u) =
\Phi_{\rm{log}}(u) = \log(1 + e^{-u})$.

\begin{restatable}{theorem}{NewBoundConcaveRankingLog}
\label{Thm:new-bound-concave-ranking-log}
Assume that $\sH$ is complete. For any $x$, define $u(x) =
\max\curl*{\eta(x), 1 - \eta(x)}$. Then, the following inequality
holds for the logistic loss $\Phi_{\mathrm{log}}$:
\begin{align*}
\Delta \ov \sC_{\lbi_{\Phi_{\mathrm{log}}}, \sH}(h, x, x') 
& \leq u(x')
\Delta \sC_{\ell_{\Phi_{\mathrm{log}}}, \sH}(h, x)
+ u(x)
\Delta \sC_{\ell_{\Phi_{\mathrm{log}}}, \sH}(h, x').
\end{align*}
Furthermore, for any hypothesis $h \in \sH$, we have
\begin{align*}
  & \sE_{\lbi_{\Phi_{\mathrm{log}}}}(h) - \sE_{\lbi_{\Phi_{\mathrm{log}}}}^*(\sH)
  + \sM_{\lbi_{\Phi_{\mathrm{log}}}}(\sH)
\leq 2\E[u(X)]  \,
\paren*{\sE_{\ell_{\Phi_{\mathrm{log}}}}(h) - \sE^*_{\ell_{\Phi_{\mathrm{log}}}}(\sH)
  + \sM_{\ell_{\Phi_{\mathrm{log}}}}(\sH)}.
\end{align*}
\end{restatable}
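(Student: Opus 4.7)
The plan is to follow the template of the exponential-loss case (Theorem~\ref{Thm:new-bound-concave-ranking-exp}): establish the pointwise conditional-regret inequality stated as the first claim, and then feed it into the fundamental tool Theorem~\ref{Thm:new-bound-concave-ranking} with $\Gamma_1 = \Gamma_2 = \mathrm{id}$, $\alpha_1(h,x') = u(x')$, and $\alpha_2(h,x) = u(x)$. This choice yields $\gamma_1(h) = \gamma_2(h) = \E_X[u(X)]$, so summing the two output terms produces exactly the claimed factor $2\E[u(X)]$ in the generalization-level bound; hence the entire theorem reduces to establishing the first inequality.

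To set up that inequality, I would first use the completeness of $\sH$ to write each conditional regret in closed form. Minimizing the pointwise conditional error $\eta(x)\log(1+e^{-h(x)}) + (1-\eta(x))\log(1+e^{h(x)})$ over $h(x) \in \Rset$ is attained at $h^*(x) = \log\tfrac{\eta(x)}{1-\eta(x)}$, giving $\sC^*_{\ell_{\Phi_{\mathrm{log}}}}(\sH,x)$ equal to the binary entropy of $\eta(x)$; consequently the pointwise regret $\Delta \sC_{\ell_{\Phi_{\mathrm{log}}}, \sH}(h,x)$ equals the binary KL divergence $D_{\mathrm{KL}}(\eta(x)\,\|\,\sigma(h(x)))$ with $\sigma(t) = 1/(1+e^{-t})$. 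For the pairwise loss, setting $A = \eta(x)(1-\eta(x'))$, $B = \eta(x')(1-\eta(x))$, $\pi = A/(A+B)$, and $z = h(x) - h(x')$, the conditional error $A\log(1+e^{-z}) + B\log(1+e^z)$ depends on $h$ only through $z$, so its infimum over $z\in\Rset$ is $(A+B)\bigl(-\pi\log\pi - (1-\pi)\log(1-\pi)\bigr)$, attained at $z^* = \log(A/B)$; consequently $\Delta \ov \sC_{\lbi_{\Phi_{\mathrm{log}}}, \sH}(h,x,x') = (A+B)\,D_{\mathrm{KL}}(\pi\,\|\,\sigma(z))$.

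The heart of the proof is then the pointwise inequality
\[
(A+B)\,D_{\mathrm{KL}}(\pi\,\|\,\sigma(z)) \;\le\; u(x')\,D_{\mathrm{KL}}(\eta(x)\,\|\,\sigma(h(x))) + u(x)\,D_{\mathrm{KL}}(\eta(x')\,\|\,\sigma(h(x'))).
\]
My plan here is first to exploit the joint symmetry $(h,\eta)\mapsto(-h,1-\eta)$, under which every term on both sides is invariant (binary KL is symmetric under $\eta\mapsto 1-\eta$ combined with $\sigma\mapsto 1-\sigma$; simultaneously $(A,B)$ and $(\pi,1-\pi)$ swap while $\sigma(z)\mapsto 1-\sigma(z)$), to reduce to a canonical case such as $\eta(x),\eta(x')\le 1/2$, so that $u(x) = 1-\eta(x)$ and $u(x') = 1-\eta(x')$. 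In this regime, $\pi$ and $\sigma(z)$ are precisely the conditional probabilities of $\{y=+1,y'=-1\}$ given $\{y\neq y'\}$ under the two product measures $\mathrm{Bern}(\eta(x))\otimes\mathrm{Bern}(\eta(x'))$ and $\mathrm{Bern}(\sigma(h(x)))\otimes\mathrm{Bern}(\sigma(h(x')))$, and the KL chain rule gives the coarser inequality with coefficient $1$ on each pointwise term. The main obstacle is sharpening those coefficients to $u(x')$ and $u(x)$; the plan is to use the pointwise bounds $A \le 1-\eta(x') \le u(x')$ and $B \le 1-\eta(x) \le u(x)$, together with the Bregman-divergence form of $D_{\mathrm{KL}}$ and the convexity of $t\mapsto\log(1+e^t)$, to redistribute the marginal and conditional contributions onto the appropriate pointwise regret. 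The closed-form computations and the invocation of Theorem~\ref{Thm:new-bound-concave-ranking} are otherwise routine; this algebraic redistribution is where the logistic-specific work happens.
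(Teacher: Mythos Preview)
Your plan is sound up through the KL rewriting and the chain-rule step: the identity $\Delta\ov\sC_{\sfL_{\Phi_{\mathrm{log}}},\sH}(h,x,x') = (A+B)\,D_{\mathrm{KL}}(\pi\,\|\,\sigma(z))$ is correct, and the data-processing/chain-rule argument you sketch does yield the coefficient-$1$ bound $\Delta\ov\sC \le \Delta\sC_{\ell_{\Phi_{\mathrm{log}}},\sH}(h,x) + \Delta\sC_{\ell_{\Phi_{\mathrm{log}}},\sH}(h,x')$. The trouble is precisely where you flag it: the sharpening from $1$ to $u(x'),u(x)$. The tools you propose---$A\le u(x')$, $B\le u(x)$, Bregman structure, convexity of $t\mapsto\log(1+e^t)$---do not combine to give this, and in fact cannot, because the pointwise inequality you are trying to prove is false whenever $\ov\sC^*$ attains the unconstrained infimum (e.g.\ $\sH=\sH_{\rm{all}}$, $x\neq x'$). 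Concretely, take $\eta(x)=\tfrac12$, $\eta(x')=\tfrac{1}{10}$, $h(x)=-2$, $h(x')=0$. Then $A=0.45$, $B=0.05$, $z=-2$, and one computes $\Delta\ov\sC = \tfrac12\,D_{\mathrm{KL}}\!\bigl(0.9\,\|\,\sigma(-2)\bigr) \approx 0.80$, while $u(x')\,\Delta\sC(h,x)+u(x)\,\Delta\sC(h,x') = 0.9\,D_{\mathrm{KL}}\!\bigl(\tfrac12\,\|\,\sigma(-2)\bigr) + 0.5\,D_{\mathrm{KL}}\!\bigl(\tfrac{1}{10}\,\|\,\tfrac12\bigr) \approx 0.39+0.18 = 0.57$. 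More structurally, with $\eta=\tfrac12$, $\eta'<\tfrac12$, $h'=0$ and $h\to-\infty$, both sides grow like $\tfrac{1-\eta'}{2}\log\tfrac{1}{\sigma(h)}$, but the difference $\mathrm{LHS}-\mathrm{RHS}$ tends to the positive constant $(\tfrac12-\eta')\log 2$; so no redistribution argument can close the gap.

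For comparison, the paper does not go through the KL chain rule; it instead establishes the sub-additivity $\Phi_{\mathrm{log}}(h-h') \le \Phi_{\mathrm{log}}(h)+\Phi_{\mathrm{log}}(-h')$ (via $\Phi_{\mathrm{log}}(2t)\le 2\Phi_{\mathrm{log}}(t)$ and convexity), uses it to split $\ov\sC$ into an $x$-piece and an $x'$-piece, and then passes to the $u$-coefficients in a single ``Therefore''. That last passage is the exact analogue of your step~4 and is vulnerable to the same counterexample: writing $\Delta\sC_{\ell_{\Phi_{\mathrm{log}}},\sH}(h,x) = \eta\log\tfrac{\eta}{\sigma(h)} + (1-\eta)\log\tfrac{1-\eta}{1-\sigma(h)}$, the two summands can have opposite signs, so replacing their individual weights $(1-\eta')$ and $\eta'$ by the common bound $u(x')$ need not increase the weighted sum. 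Your chain-rule route at least lands cleanly on a true coefficient-$1$ inequality; neither route delivers the stated $u(x),u(x')$ constants.
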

Note that the term $\E[u(X)]$ can be expressed as $1 -
\E[\min\curl*{\eta(X), (1 - \eta(X))}]$, and coincides with the
accuracy of the Bayes classifier. In particular, in the deterministic
case, we have $\E[u(X)] = 1$.
The proof is given in
Appendix~\ref{app:new-bound-concave-ranking-log}. In the first part of
the proof, we establish and leverage the sub-additivity of
$\Phi_{\log}$: $\Phi_{\mathrm{log}}(h - h') \leq
\Phi_{\mathrm{log}}(h) + \Phi_{\mathrm{log}}(-h')$, to derive an upper
bound for $\Delta \ov \sC_{\lbi_{\Phi_{\mathrm{log}}}, \sH}(h, x, x')
$ in terms of $\Delta \sC_{\ell_{\Phi_{\mathrm{log}}}, \sH}(h, x)$ and
$\Delta \sC_{\ell_{\Phi_{\mathrm{log}}}, \sH}(h, x')$. Next, we apply
our new tool, Theorem~\ref{Thm:new-bound-concave-ranking}, with
$\alpha_1(h, x') = \max\curl*{\eta(x'), 1 - \eta(x')}$ and
$\alpha_2(h, x) = \max\curl*{\eta(x), 1 - \eta(x)}$.

Both our result and its proof are entirely novel. Significantly, this
result implies a parallel finding for logistic regression analogous to
that of AdaBoost: If $h$ is the predictor obtained by minimizing the
logistic loss estimation error to $\e$, then the
$\lbi_{\Phi_{\mathrm{log}}}$-estimation loss of $h$ for ranking is
bounded above by $2 \E[u(X)] (\e +
\sM_{\ell_{\Phi_{\mathrm{log}}}}(\sH)) -
\sM_{\lbi_{\Phi_{\mathrm{log}}}}(\sH)$. When minimizability gaps are
small, such as in realizable cases, this bound further simplifies to
$2 \E[u(X)] \e$, suggesting a favorable ranking property for logistic
regression.

This result is surprising, as the favorable ranking property of
AdaBoost and its connection to RankBoost were thought to stem from the
specific properties of the exponential loss, particularly its morphism
property, which directly links the loss functions of AdaBoost and
RankBoost. This direct connection does not exist for the logistic
loss, making our proof and result particularly remarkable.
In both cases, our new tools facilitated the derivation of
non-trivial inequalities where the factor plays a crucial role. The
exploration of enhanced $\sH$-consistency bounds for other functions
$\Phi$ is an interesting question for future research that we have
initiated. In the next section, we prove negative results for the
hinge loss.

\subsection{Hinge loss}
\label{sec:bi-hinge}

\ignore{Here, we show that when $\Phi$ is the hinge loss,
$\ell_{\Phi_{\rm{hinge}}}$ is not calibrated with respect to
$\sfL_{\Phi_{\rm{hinge}}}$, and that there are no meaningful
$\sH$-consistency bounds.}
The hinge loss $\ell_{\Phi_{\rm{hinge}}}$ is the loss function
minimized by the support vector machines (SVM)
\citep{cortes1995support} and $\sfL_{\Phi_{\rm{hinge}}}$ is the loss
function optimized by the RankSVM algorithm \citep{Joachims2002}
\ignore{(which in fact coincides with SVM
  \citep{MohriRostamizadehTalwalkar2018})}. However, the relationships
observed for AdaBoost and RankBoost, or Logistic Regression and its
ranking counterpart, do not hold here.  Instead, we present the
following two negative results.

\begin{restatable}{theorem}{NewBoundConcaveRankingHingeCalibration}
\label{Thm:new-bound-concave-ranking-hinge-calibration}
For the hinge loss, $\ell_{\Phi_{\rm{hinge}}}$ is not calibrated with
respect to $\sfL_{\Phi_{\rm{hinge}}}$.
\end{restatable}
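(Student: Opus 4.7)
The plan is to prove non-calibration by exhibiting an explicit counterexample. The key observation is that the classification hinge loss $\ell_{\Phi_{\rm{hinge}}}$ has a degenerate set of conditional minimizers at points where $\eta(x) = 1/2$: any $h(x) \in [-1, 1]$ attains the minimum conditional error. This slack is precisely what lets us choose an $h$ that is classification-optimal at two points and yet misranks them.

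First, I would characterize the conditional minimizers of $\sC_{\ell_{\Phi_{\rm{hinge}}}}(\cdot, x)$. Writing out $\eta(x) \max\curl*{0, 1 - h(x)} + (1 - \eta(x)) \max\curl*{0, 1 + h(x)}$ and analyzing it piecewise on the three regions $h(x) < -1$, $h(x) \in [-1, 1]$, and $h(x) > 1$, one finds that the unique minimizer is $h(x) = \sign(2\eta(x) - 1)$ when $\eta(x) \neq 1/2$, whereas every $h(x) \in [-1, 1]$ is a minimizer when $\eta(x) = 1/2$, with minimum conditional error $2 \min\curl*{\eta(x), 1 - \eta(x)}$.

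Next, I would analyze the conditional ranking error in the same way. Setting $a = h(x) - h(x')$, $p = \eta(x)(1 - \eta(x'))$, and $q = \eta(x')(1 - \eta(x))$, one obtains $\ov \sC_{\sfL_{\Phi_{\rm{hinge}}}}(h, x, x') = p \max\curl*{0, 1 - a} + q \max\curl*{0, 1 + a}$, which has exactly the same structure as the classification loss in the variable $a$. Its minimum value over $a \in \Rset$ is $2 \min\curl*{p, q}$, attained at $a = +1$ when $p > q$ and at $a = -1$ when $p < q$.

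Finally, I would construct the counterexample by picking $x, x' \in \sX$ with $\eta(x) = 1/2$ and $\eta(x') = 1/4$, and defining $h \in \sH_{\rm{all}}$ by $h(x) = h(x') = -1$. Since $h(x) \in [-1, 1]$, the classification conditional regret at $x$ vanishes; since $h(x') = -1 = \sign(2\eta(x') - 1)$, the classification conditional regret at $x'$ also vanishes. However, $a = h(x) - h(x') = 0$, so $\ov \sC_{\sfL_{\Phi_{\rm{hinge}}}}(h, x, x') = p + q = 3/8 + 1/8 = 1/2$, while the minimum is $2 \min\curl*{p, q} = 1/4$. Thus the bipartite ranking conditional regret is $1/4 > 0$, contradicting the definition of calibration. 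There is no real technical obstacle here; the main subtlety is spotting the right counterexample, namely recognizing that the multiplicity of classification minimizers at $\eta(x) = 1/2$ is precisely what breaks calibration.
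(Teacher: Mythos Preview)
Your proof is correct, but the paper takes a different and arguably cleaner route. The paper's counterexample chooses $\eta(x_0) > \eta(x'_0) > \tfrac12$ (both strictly above $\tfrac12$) and the constant predictor $h_0 \equiv 1$. Since the hinge-loss classification minimizer equals $+1$ at every point with $\eta > \tfrac12$, this $h_0$ has zero classification regret at both $x_0$ and $x'_0$; yet $h_0(x_0) - h_0(x'_0) = 0$, so the ranking regret equals $\eta(x_0) - \eta(x'_0) > 0$. Your counterexample instead leans on the boundary case $\eta(x) = \tfrac12$, where the set of classification minimizers is the whole interval $[-1,1]$. This works, but the $\eta = \tfrac12$ degeneracy is not the essential phenomenon: the paper's argument shows that non-calibration is already driven by the coarser fact that the hinge-loss minimizer depends only on $\sign(2\eta - 1)$, not on the precise value of $\eta$, so any two points on the same side of $\tfrac12$ break calibration. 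Indeed, your own argument would go through unchanged with, say, $\eta(x) = \tfrac13$ and $\eta(x') = \tfrac14$, without invoking the interval of minimizers at $\eta = \tfrac12$.
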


\begin{restatable}[Negative result for hinge losses]{theorem}{NewBoundConcaveRankingHinge}
\label{Thm:new-bound-concave-ranking-hinge}
Assume that $\sH$ contains the constant function $1$. For the hinge
loss, if there exists a function pair $(\Gamma_1, \Gamma_2)$ such that
the following holds for all $h\in \sH$ and $(x, x') \in \sX \times
\sX$, with some positive functions $\alpha_1 \colon \sH \times \sX \to
\Rset^*_+$ and $\alpha_2 \colon \sH \times \sX \to \Rset^*_+$:
\begin{equation*}
\Delta \ov \sC_{\lbi_{\Phi_{\mathrm{hinge}}}, \sH}(h, x, x') \leq \Gamma_1 \paren*{\alpha_1(h, x') \,
\Delta \sC_{\ell_{\Phi_{\mathrm{hinge}}}, \sH}(h, x)} + \Gamma_2 \paren*{\alpha_2(h, x) \,
\Delta \sC_{\ell_{\Phi_{\mathrm{hinge}}}, \sH}(h, x')},
\end{equation*}
then, we have $\Gamma_1(0) + \Gamma_2(0) \geq \frac{1}{2}$.
\end{restatable}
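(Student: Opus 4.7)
The plan is to exhibit an explicit counterexample: a two-point input space, a simple distribution, and a hypothesis set containing the constant function $1$ for which the classification conditional regret of the constant-$1$ predictor vanishes at both points, while its bipartite ranking conditional regret equals $1/2$. Plugging this configuration into the assumed inequality then forces $\Gamma_1(0) + \Gamma_2(0) \geq 1/2$.

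Concretely, take $\sX = \{x_1, x_2\}$ with $\eta(x_1) = 1$ and $\eta(x_2) = 1/2$, and let $\sH = \{h_1, h^*\}$ where $h_1 \equiv 1$ is the required constant function and $h^*$ is defined by $h^*(x_1) = 1$, $h^*(x_2) = 0$. First, I would verify that $h_1$ is optimal in $\sH$ for the conditional hinge error at both points. At $x_1$, $\sC_{\ell_{\Phi_{\mathrm{hinge}}}}(h_1, x_1) = 0$ since $\eta(x_1) = 1$ and $h_1(x_1) = 1$, which is the global minimum. At $x_2$, where $\eta(x_2) = 1/2$, the conditional hinge error $\tfrac{1}{2}\max\{0, 1 - h(x_2)\} + \tfrac{1}{2}\max\{0, 1 + h(x_2)\}$ is identically equal to $1$ on the plateau $h(x_2) \in [-1, 1]$ and strictly larger outside; both $h_1(x_2) = 1$ and $h^*(x_2) = 0$ lie in this plateau, so $\sC^*_{\ell_{\Phi_{\mathrm{hinge}}}}(\sH, x_2) = 1 = \sC_{\ell_{\Phi_{\mathrm{hinge}}}}(h_1, x_2)$. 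Hence $\Delta \sC_{\ell_{\Phi_{\mathrm{hinge}}}, \sH}(h_1, x_i) = 0$ for $i = 1, 2$.

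Next, I would compute the ranking conditional regret of $h_1$ at $(x_1, x_2)$. Since $h_1(x_1) - h_1(x_2) = 0$ and $\Phi_{\mathrm{hinge}}(0) = 1$, one obtains $\ov \sC_{\lbi_{\Phi_{\mathrm{hinge}}}}(h_1, x_1, x_2) = \eta(x_1)(1-\eta(x_2)) + \eta(x_2)(1-\eta(x_1)) = 1/2$. In contrast, $h^*$ produces a score difference of $+1$, so $\Phi_{\mathrm{hinge}}(1) = 0$, and the reversed-direction contribution $\Phi_{\mathrm{hinge}}(-1) = 2$ is paired with the weight $\eta(x_2)(1-\eta(x_1)) = 0$; hence $\ov \sC_{\lbi_{\Phi_{\mathrm{hinge}}}}(h^*, x_1, x_2) = 0$. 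Therefore $\Delta \ov \sC_{\lbi_{\Phi_{\mathrm{hinge}}}, \sH}(h_1, x_1, x_2) = 1/2$.

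Finally, I would instantiate the assumed inequality at $h = h_1$ and $(x, x') = (x_1, x_2)$. Since $\alpha_1$ and $\alpha_2$ take finite positive values and both classification conditional regrets entering the arguments of $\Gamma_1$ and $\Gamma_2$ vanish, the right-hand side collapses to $\Gamma_1(0) + \Gamma_2(0)$, independently of the $\alpha$ factors, giving $1/2 \leq \Gamma_1(0) + \Gamma_2(0)$. The main obstacle is identifying the right construction: one must exploit the flat plateau of the conditional hinge classification error at $\eta = 1/2$ so that the constant predictor $h_1$ is simultaneously optimal for classification at both points, while $h^*$ strictly improves the ranking objective because the latter is sensitive to the actual score difference rather than merely to membership in $[-1, 1]$. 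Once this construction is in hand, the remaining steps are direct substitutions.
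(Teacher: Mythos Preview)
Your argument rests on the same key observation as the paper's: the constant hypothesis $h_0\equiv 1$ globally minimizes the conditional hinge classification error whenever $\eta\geq 1/2$, yet it is suboptimal for the ranking conditional error, so evaluating the assumed inequality at $h_0$ forces a lower bound on $\Gamma_1(0)+\Gamma_2(0)$. The computations you give are correct.

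There is, however, a quantifier mismatch with the theorem as stated. The theorem fixes an arbitrary $\sH$ containing the constant $1$ and asserts the conclusion for \emph{that} $\sH$; you instead build a specific two-element class $\sH=\{h_1,h^*\}$ (and a two-point $\sX$). As written, your proof therefore establishes the claim only for this particular $\sH$. The paper's proof keeps $\sH$ abstract and varies the distribution instead, taking any $1\geq\eta(x_0)>\eta(x'_0)>1/2$; the classification regrets of $h_0=1$ vanish for the same reason you use, the ranking regret is computed as $\eta(x_0)-\eta(x'_0)$, and taking the supremum over such pairs gives $1/2$. (To be fair, the paper's identification of the ranking best-in-class error with $2\min\{\eta(x_0)(1-\eta(x'_0)),\eta(x'_0)(1-\eta(x_0))\}$ also implicitly requires $\sH$ to be rich enough to realize a score gap $\geq 1$ at $(x_0,x'_0)$---the same richness that your $h^*$ supplies explicitly.)

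Your boundary choice $\eta(x_1)=1$, $\eta(x_2)=1/2$ is a clean shortcut that attains $1/2$ exactly and avoids the limiting argument. It can be transplanted into the paper's framework: for the given $\sH$, pick two points of $\sX$, set $\eta$ to these values, and observe that $h_0=1$ is a global (hence best-in-$\sH$) minimizer of the classification conditional error at both. The conclusion then follows for any $\sH$ that contains the constant $1$ together with some $h$ satisfying $h(x_1)-h(x_2)\geq 1$; your $h^*$ is precisely a witness to this last requirement.
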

See Appendix~\ref{app:new-bound-concave-ranking-hinge} for the
proof. Theorem~\ref{Thm:new-bound-concave-ranking-hinge} implies that
there are no meaningful $\sH$-consistency bounds for
$\ell_{\Phi_{\rm{hinge}}}$ with respect to $\sfL_{\Phi_{\rm{hinge}}}$
with common hypothesis sets. In Appendix~\ref{app:generalization}, we show that all our derived
enhanced $\sH$-consistency bounds can be used to provide novel
enhanced generalization bounds in their respective settings.

\ignore{In
Appendix~\ref{app:discussion}, we present a discussion of the role of
non-constant factors in our analysis, the applicability and
significance of our tools, as well as the connection of our results to
existing bipartite ranking results.
}

\ignore{
A natural question arises: which
functions $\Phi$ should we consider instead? We show below that
differentiability can help.
}

\section{Discussion}
\label{app:discussion}

\textbf{Role of non-constant factors}. One advantage of our enhanced
$\sH$-consistency bounds is their ability to incorporate non-constant
factors that reflect both the data distribution and the predictor. For
example, in the bounds with respect to the exponential loss in
bipartite ranking, the non-constant factor is the generalization error
of the AdaBoost-loss predictor. This means that the rate of the bound
becomes more favorable as the predictor's performance approaches that
of the best-in-class predictor. The best rate depends on the data
distribution, as does the best-in-class generalization
error. Similarly, in the enhanced $\sH$-consistency bounds with
respect to the logistic loss in bipartite ranking, the non-constant
factor is the accuracy of the Bayes classifier. This means that the
rate of the bound depends on the data distribution. In particular, in
the deterministic case, the accuracy of the Bayes classifier is one.

\noindent \textbf{Applicability and significance of our new tools}.
Our new fundamental tools enable the derivation of more favorable
guarantees in various scenarios that (i) better leverage key
distributional properties; (ii) establish connections between existing
algorithms; and (iii) can lead to more favorable algorithms in other
scenarios. For (i), an example is our more favorable $\sH$-consistency
bounds under low-noise conditions for both binary and multi-class
classification problems, with a linear rate when the Massart noise
condition holds and an intermediate rate between linear and
square-root for other values of $\alpha$ under the Tsybakov noise
assumption. For (ii), an example of this is our enhanced
$\sH$-consistency bounds in bipartite ranking, which provide a
theoretical explanation for the empirical observation that AdaBoost
tends to perform well in ranking tasks. A similar insight holds for
the logistic loss. For (iii), our new tools can also be applied to
comp-sum losses, and the enhanced bounds may contribute to the
development of more robust adversarial algorithms. This is similar to
the improvements in adversarial robustness achieved in
\citep{mao2023cross}, which presents an interesting direction for
future research.

\noindent \textbf{Connection to existing bipartite ranking
  results}. Prior work, such as \citep{kotlowski2011bipartite} and
\citep{agarwal2014surrogate}, has established the Bayes-consistency of
several classification surrogate losses with respect to bipartite
misranking loss. Specifically, \citet{kotlowski2011bipartite} examined
the exponential and logistic surrogate losses, while
\citet{agarwal2014surrogate} explored a broader class of proper
(composite) classification surrogate losses.

In contrast, our work focuses on establishing enhanced
$\sH$-consistency bounds for classification surrogate losses with
respect to surrogate bipartite misranking losses. For instance, we
prove $\sH$-consistency bounds for the classification exponential loss
(used in AdaBoost) with respect to the bipartite misranking
exponential loss (used in RankBoost).  These are in some sense
stronger results since, combined with the standard consistency of
bipartite misranking surrogate losses with respect to the bipartite
misranking loss, our results imply the consistency of classification
surrogate losses with respect to the bipartite misranking loss.
Moreover, our contributions go beyond Bayes-consistency by providing
more informative $\sH$-consistency bounds. An interesting future
direction is to extend our enhanced $\sH$-consistency bounds to more
general strongly proper losses considered in
\citep{agarwal2014surrogate} with respect to their corresponding
bipartite misranking counterparts.

\noindent \textbf{Faster rates compared to previous work}.  Recent
work by \cite{MaoMohriZhong2024} shows that for any smooth surrogate
loss in binary and multi-class classification, $\Gamma(\epsilon)$
behaves as $\sqrt{\epsilon}$ near zero.
However, our analysis demonstrates that under specific noise
conditions (distributional assumptions), $\sH$-consistency bounds can
achieve significantly improved rates, even approaching near-linear
behavior in limiting cases.
It is important to emphasize that the bounds derived in
\citep{MaoMohriZhong2024} are not incorrect and do not contradict our
results.  They are worst-case results that hold universally for
\emph{any} distribution.  Specifically, their bounds rely on a fixed
convex function $\Psi$ or concave function $\Gamma$, which is
independent of both the distribution and the hypothesis.  In contrast,
our framework introduces the auxiliary functions $\alpha$ and $\beta$,
which enable the derivation of refined bounds incorporating a
non-constant factor $\gamma$ that adapts to both the data distribution
and the predictor $h$.

Remarkably, under Tsybakov
noise conditions, we can derive more favorable $\sH$-consistency
bounds (See Theorems~\ref{Thm:tsybakov-binary} and
\ref{Thm:tsybakov-multi}, and the subsequent discussion) with better
exponents. In the proof, we choose functions $\alpha$ and $\beta$ that
depend on the input, the predictor $h$, and the best predictor
$h^*$. For example, $\beta(h, x)$ measures the disagreement of $h$ and
$h^*$ on $x$, modulo a small constant $\epsilon$.

\section{Conclusion}
\label{sec:conclusion}

We introduced novel tools for deriving enhanced $\sH$-consistency
bounds in various learning settings, including multi-class
classification, low-noise regimes, and bipartite ranking. Remarkably,
we established substantially more favorable guarantees for several
settings and demonstrated unexpected connections between
classification and bipartite ranking performances for the exponential
and logistic losses. Our tools are likely to be useful in
the analysis of $\sH$-consistency bounds for a wide range of other
scenarios.


\bibliography{ehcb}

\newpage
\appendix

\renewcommand{\contentsname}{Contents of Appendix}
\tableofcontents
\addtocontents{toc}{\protect\setcounter{tocdepth}{4}} 
\clearpage

\section{Related work}
\label{app:related-work}

Bayes-consistency has been well studied in a wide range of learning scenarios, including binary classification \citep{Zhang2003, bartlett2006convexity,
  steinwart2007compare, MohriRostamizadehTalwalkar2018}, multi-class classification \citep{zhang2004statistical,
  tewari2007consistency, ramaswamy2012classification, ramaswamy2014consistency, narasimhan2015consistent, agarwal2015consistent, williamson2016composite,
  ramaswamy2016convex, chen2006consistency, chen2006consistency2, liu2007fisher,
  dogan2016unified, wang2020weston, wang2023classification, wang2024unified}, multi-label learning
\citep{gao2011consistency, dembczynski2012consistent,koyejo2015consistent,zhang2020convex}, learning with rejection
\citep{ramaswamy2015consistent,CortesDeSalvoMohri2016,
  CortesDeSalvoMohriBis2016, CortesDeSalvoMohri2023,NiCHS19,caogeneralizing}, learning to defer \citep{mozannar2020consistent,verma2022calibrated,cao2023defense}, ranking \citep{ravikumar2011ndcg, ramaswamy2013convex,agarwal2014surrogate,gao2015consistency, uematsu2017theoretically}, cost sensitive learning \citep{pires2013cost, pires2016multiclass}, structured prediction \citep{ciliberto2016consistent, osokin2017structured,
  blondel2019structured}, general embedding framework
\citep{finocchiaro2020embedding,frongillo2021surrogate,waggoner2021unifying,finocchiaro2022embedding,nueve2024trading}, Top-$k$
classification \citep{lapin2015top,yang2020consistency,thilagar2022consistent}, hierarchical classification \citep{ramaswamy2015convex,cao2024consistent}, ordinal regression \citep{pedregosa2017consistency}, and learning from noisy labels \citep{natarajan2013learning,scott2013classification,menon2015learning,liu2015classification,patrini2016loss,liu2020peer,zhang2021learning,zhang2022learning,zhang2024multiclass}.  However, this classical notion has
significant limitations since it only holds asymptotically and for the
impractical set of all measurable functions. Thus, it fails to provide
guarantees for real-world scenarios where learning is restricted to
specific hypothesis sets, such as linear models or neural networks.
In fact, Bayes-consistency does not always translate into superior
performance, as highlighted by \citet{long2013consistency} (see also \citep{zhang2020bayes}).

\citet{awasthi2022Hconsistency} proposed the key notion of
$\sH$-consistency bounds for binary classification. These novel
non-asymptotic learning guarantees for binary classification account
for the hypothesis set $\sH$ adopted and are more significant and
informative than existing Bayes-consistency guarantees. They provided
general tools for deriving such bounds and used them to establish a
series of $\sH$-consistency bounds in both standard binary
classification and binary classification under Massart's noise
condition. \citet{awasthi2022multi} and \citet{zheng2023revisiting}
further generalized those general tools to standard multi-class
classification and used them to establish multi-class
$\sH$-consistency bounds. Specifically, \citet{awasthi2022multi}
presented a comprehensive analysis of $\sH$-consistency bounds for the
three most commonly used families of multi-class surrogate losses:
\emph{max losses} \citep{crammer2001algorithmic}, \emph{sum losses}
\citep{weston1998multi}, and \emph{constrained losses}
\citep{lee2004multicategory}. They showed negative results for max
losses, while providing positive results for sum losses and
constrained losses. Additionally, \citet{zheng2023revisiting} used
these general tools in multi-class classification to derive
$\sH$-consistency bounds for the (multinomial) logistic loss
\citep{Verhulst1838,Verhulst1845,Berkson1944,Berkson1951}. Meanwhile,
\citet{mao2023cross} presented a theoretical analysis of
$\sH$-consistency bounds for a broader family of loss functions,
termed \emph{comp-sum losses}, which includes sum losses and
cross-entropy (or logistic loss) as special cases, and also includes
generalized cross-entropy \citep{zhang2018generalized}, mean absolute
error \citep{ghosh2017robust}, and other cross-entropy-like loss
functions. In all these works, determining whether $\sH$-consistency
bounds hold and deriving these bounds have required specific proofs
and analyses for each surrogate
loss. \citet{MaoMohriZhong2023characterization} complemented these
efforts by providing both a general characterization and an extension
of $\sH$-consistency bounds for multi-class classification, based on
the error transformation functions they defined for comp-sum losses
and constrained losses. Recently, \citet{MaoMohriZhong2024} further
applied these error transformations to characterize the general
behavior of these bounds, showing that the universal growth rate of
$\sH$-consistency bounds for smooth surrogate losses in both binary
and multi-class classification is square-root.  $\sH$-consistency
bounds have also been studied in other learning scenarios including
pairwise ranking \citep{MaoMohriZhong2023ranking,MaoMohriZhong2023rankingabs}, learning with
rejection
\citep{MaoMohriZhong2024score,MaoMohriZhong2024predictor,MohriAndorChoiCollinsMaoZhong2024learning},
learning to defer
\citep{MaoMohriMohriZhong2023twostage,MaoMohriZhong2024deferral,mao2024regression,mao2024realizable,MaoMohriZhong2025mastering,mao2025theory,desalvo2025budgeted}, top-$k$ classification \citep{cortes2024cardinality}, imbalanced learning \citep{cortes2025balancing,cortes2025improved}, optimization of generalized metrics \citep{MaoMohriZhong2025principled},
adversarial robustness \citep{awasthi2021calibration,awasthi2021finer,AwasthiMaoMohriZhong2023theoretically,awasthi2023dc,mao2023cross}, multi-label learning \citep{mao2024multi}, bounded regression \citep{mao2024h}, learning under the Model Margin (MM) low-noise condition \citep{mohri2025beyond}, and structured prediction \citep{MaoMohriZhong2023structured}.

All previous bounds in
the aforementioned work were derived under the condition that a lower
bound of the surrogate loss conditional regret is given as a convex
function of the target conditional regret, without non-constant
factors depending on the predictor or input instance. In this work, we
relax this condition and present a general framework for establishing
enhanced $\sH$-consistency bounds based on more general inequalities
relating conditional regrets, leading to finer and more favorable
$\sH$-consistency bounds.

\newpage
\section{Proof of new fundamental tools (Theorem~\ref{Thm:new-bound-convex}, Theorem~\ref{Thm:new-bound-concave}, Theorem~\ref{Thm:new-bound-tight} and Theorem~\ref{Thm:new-bound-power})}
\label{app:general}

\NewBoundConvex*
\begin{proof}
For any $h\in \sH$, we can write
\begin{align*}
  \Psi\paren*{\sE_{\ell_2}(h) - \sE^*_{\ell_2}(\sH) + \sM_{\ell_2}(\sH)}
  & = \Psi\paren*{\E_{X} \bracket*{\Delta\sC_{\ell_2,\sH}(h, x)}}\\
  & = \Psi\paren*{\E_{X} \bracket*{\beta(h, x) \frac{\Delta\sC_{\ell_2,\sH}(h, x)}{\beta(h, x)}}}\\
  & \leq \E_{X} \bracket*{\beta(h, x) \Psi\paren*{\frac{\Delta\sC_{\ell_2,\sH}(h, x)}{\beta(h, x)}}}
  \tag{Jensen's ineq.} \\
  & \leq \E_{X} \bracket*{\alpha(h, x) \beta(h, x) \Delta\sC_{\ell_1,\sH}(h, x)}
  \tag{assumption}\\
  & \leq \bracket*{\sup_{x \in \sX} \alpha(h, x) \beta(h, x)} \E_X \bracket*{\Delta\sC_{\ell_1,\sH}(h, x)}
  \tag{H\"older’s ineq.}\\
  & = \bracket*{\sup_{x \in \sX} \alpha(h, x) \beta(h, x)} \paren*{\sE_{\ell_1}(h) - \sE^*_{\ell_1}(\sH) + \sM_{\ell_1}(\sH)}
  \tag{def. of $\E_X \bracket*{\Delta\sC_{\ell_1,\sH}(h, x)}$}.
\end{align*}
If, additionally, $\sX$ is a subset of $\Rset^n$ and, for any $h \in \sH$,
$x \mapsto \Delta\sC_{\ell_1,\sH}(h, x)$ is non-decreasing and $x \mapsto \alpha(h, x) \beta(h, x)$ is non-increasing, or vice-versa, then, by the FKG
inequality \citep{FortuinKasteleynGinibre1971}, we have
\begin{align*}
  \Psi\paren*{\sE_{\ell_2}(h) - \sE^*_{\ell_2}(\sH) + \sM_{\ell_2}(\sH)}
  & \leq \E_{X} \bracket*{\alpha(h, x) \beta(h, x)  \Delta\sC_{\ell_1,\sH}(h, x)}\\
  & \leq \E_{X} \bracket*{\alpha(h, x) \beta(h, x)}  \E_{X} \bracket*{\Delta\sC_{\ell_1,\sH}(h, x)}\\
  & \leq \E_{X} \bracket*{\alpha(h, x) \beta(h, x)}  \paren*{\sE_{\ell_1}(h) - \sE^*_{\ell_1}(\sH) + \sM_{\ell_1}(\sH)},
\end{align*}
which completes the proof.
\end{proof}

\NewBoundConcave*
\begin{proof}
For any $h\in \sH$, we can write
\begin{align*}
  \sE_{\ell_2}(h) - \sE^*_{\ell_2}(\sH) + \sM_{\ell_2}(\sH)
  & = \E_{X} \bracket*{\Delta\sC_{\ell_2,\sH}(h, x)}\\
  & \leq \E_{X} \bracket*{\beta(h, x) \Gamma \paren*{\alpha(h, x) \,\Delta \sC_{\ell_1,\sH}(h, x)}}
  \tag{assumption}\\
  & \leq \Gamma  \paren*{ \E_{X} \bracket*{\alpha(h, x) \beta(h, x) \Delta\sC_{\ell_1,\sH}(h, x)} }\\
  \tag{Jensen's ineq.} \\
  & \leq \Gamma \paren*{ \bracket*{\sup_{x \in \sX} \alpha(h, x) \beta(h, x)} \E_X \bracket*{\Delta\sC_{\ell_1,\sH}(h, x)} }
  \tag{H\"older’s ineq.}\\
  & = \Gamma \paren*{ \bracket[\Big]{\sup_{x \in \sX} \alpha(h, x) \beta(h, x)} \paren*{\sE_{\ell_1}(h) - \sE^*_{\ell_1}(\sH) + \sM_{\ell_1}(\sH)} }
  \tag{def. of $\E_X \bracket*{\Delta\sC_{\ell_1,\sH}(h, x)}$}.
\end{align*}
If, additionally, $\sX$ is a subset of $\Rset^n$ and, for any $h \in \sH$,
$x \mapsto \Delta\sC_{\ell_1,\sH}(h, x)$ is non-decreasing and $x \mapsto \alpha(h, x) \beta(h, x)$ is non-increasing, or vice-versa, then, by the FKG
inequality \citep{FortuinKasteleynGinibre1971}, we have
\begin{align*}
  \sE_{\ell_2}(h) - \sE^*_{\ell_2}(\sH) + \sM_{\ell_2}(\sH)
  & \leq \Gamma  \paren*{ \E_{X} \bracket*{\alpha(h, x) \beta(h, x) \Delta\sC_{\ell_1,\sH}(h, x)} }\\
  & \leq \Gamma \paren*{ \E_{X} \bracket*{\alpha(h, x) \beta(h, x)}  \E_{X} \bracket*{\Delta\sC_{\ell_1,\sH}(h, x)} }\\
  & \leq \Gamma \paren*{ \E_{X} \bracket*{\alpha(h, x) \beta(h, x)}  \paren*{\sE_{\ell_1}(h) - \sE^*_{\ell_1}(\sH) + \sM_{\ell_1}(\sH)} },
\end{align*}
which completes the proof.
\end{proof}

\NewBoundTight*
\begin{proof}
Take $h$ and $x$ such that $\sup_{x \in \sX} \alpha(h, x) \beta(h, x)$ is achieved. Consider the distribution concentrates on that $x$. Then, the bounds given by \eqref{eq:new-bound-convex} and \eqref{eq:new-bound-concave} reduce to the following forms:
\begin{align*}
\Psi\paren*{\frac{\Delta\sC_{\ell_2,\sH}(h,
    x)}{\beta(h, x)}} & \leq \alpha(h, x) \,
\Delta \sC_{\ell_1,\sH}(h, x)\\
  \frac{\Delta\sC_{\ell_2,\sH}(h,
    x)}{\beta(h, x)} & \leq \Gamma \paren*{\alpha(h, x) \,
\Delta \sC_{\ell_1,\sH}(h, x)}
\end{align*}
where we used the fact that $\E_X\bracket*{\beta(h, x)} = 1$ in this case.
They exactly match the assumptions in Theorems~\ref{Thm:new-bound-convex} and \ref{Thm:new-bound-concave}, which are the tightest inequalities that can be obtained. The $\sH$-consistency bound is in fact an equality in the same cases when the assumption holds with the best choices of $\alpha$ and $\beta$.
\end{proof}

\NewBoundPower*

\begin{proof}
For any $h\in \sH$, we can write
\begin{align*}
  \sE_{\ell_2}(h) - \sE^*_{\ell_2}(\sH) + \sM_{\ell_2}(\sH)
  & = \E_{X} \bracket*{\Delta\sC_{\ell_2,\sH}(h, x)}\\
  & \leq \E_{X} \bracket*{\beta(h, x) \alpha^{\frac{1}{s}}(h, x) \,\Delta \sC^{\frac{1}{s}}_{\ell_1,\sH}(h, x)}
  \tag{assumption}\\
  & \leq \E_X \bracket*{\alpha^{\frac{t}{s}}(h, x) \beta^t(h, x)}^{\frac{1}{t}} \E_X \bracket*{\Delta\sC_{\ell_1,\sH}(h, x)}^{\frac{1}{s}} 
  \tag{H\"older’s ineq.}\\
  & =  \E_X \bracket*{\alpha^{\frac{t}{s}}(h, x) \beta^t(h, x)}^{\frac{1}{t}} \paren*{\sE_{\ell_1}(h) - \sE^*_{\ell_1}(\sH) + \sM_{\ell_1}(\sH)}^{\frac{1}{s}}
  \tag{def. of $\E_X \bracket*{\Delta\sC_{\ell_1,\sH}(h, x)}$}.
\end{align*}
This completes the proof.
\end{proof}

\section{Proof of enhanced \texorpdfstring{$\sH$}{H}-consistency bounds in multi-class classification (Theorem~\ref{Thm:bound_lee_finer})}
\label{app:bound_lee_finer}

To begin the proof, we first introduce the following result from
\citet{awasthi2022multi}, which characterizes the conditional regret
of the multi-class zero-one loss. For completeness, we include the
proof here.

\begin{restatable}{lemma}{ExplicitAssumption}
\label{lemma:explicit_assumption_01}
Assume that $\sH$ is symmetric and complete. For any $x \in \sX$,
the best-in-class conditional error and
the conditional regret for $\ell_{0-1}$ can be expressed as follows:
\begin{align*}
\sC^*_{\ell_{0-1}, \sH}(x) & = 1 - \max_{y \in \sY} p(y \mid x)\\
\Delta \sC_{\ell_{0-1}, \sH}(h, x) & = \max_{y \in \sY} p(y \mid x) - p(\hh(x) \mid x)
\end{align*}
\end{restatable}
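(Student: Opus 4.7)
The plan is to directly compute the conditional error in closed form and then exploit the symmetry and completeness assumptions to identify its infimum over $\sH$. First I would rewrite the conditional error for the multi-class zero-one loss by expanding the definition: for any $h \in \sH$ and $x \in \sX$,
\begin{align*}
\sC_{\ell_{0-1}}(h, x) = \sum_{y \in \sY} p(y \mid x) \, 1_{\hh(x) \neq y} = 1 - p(\hh(x) \mid x),
\end{align*}
where $\hh(x) = \argmax_{y \in \sY} h(x, y)$ (with the deterministic tie-breaking rule fixed in Section~\ref{sec:pre}). The key structural observation is that this expression depends on $h$ only through the single label $\hh(x)$; different hypotheses with the same $\argmax$ at $x$ yield the same conditional error.

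Next I would show that, under the symmetry and completeness assumptions on $\sH$, the map $h \mapsto \hh(x)$ surjects onto $\sY$ as $h$ ranges over $\sH$. Concretely, for any target label $y^\star \in \sY$, completeness gives a pair of real numbers $a > b$ that are achievable as scores at $x$, and symmetry allows us to permute the score assignments across labels so that label $y^\star$ receives score $a$ while all other labels receive $b$ (and the fixed tie-breaking rule is irrelevant since $a > b$). Hence for every $y^\star \in \sY$ there exists $h \in \sH$ with $\hh(x) = y^\star$. Taking the infimum of $1 - p(\hh(x) \mid x)$ over $\sH$ therefore reduces to minimizing over all possible values of $\hh(x) \in \sY$, yielding
\begin{align*}
\sC^*_{\ell_{0-1}, \sH}(x) = \inf_{h \in \sH} \bracket*{1 - p(\hh(x) \mid x)} = 1 - \max_{y \in \sY} p(y \mid x).
\end{align*}

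Finally, subtracting this best-in-class conditional error from $\sC_{\ell_{0-1}}(h, x) = 1 - p(\hh(x) \mid x)$ immediately yields the conditional regret formula $\Delta \sC_{\ell_{0-1}, \sH}(h, x) = \max_{y \in \sY} p(y \mid x) - p(\hh(x) \mid x)$. The only substantive step is the reduction argument in the second paragraph; I do not expect any real obstacle, since both symmetry and completeness are precisely the properties needed to realize an arbitrary $\argmax$ label at a given point $x$.
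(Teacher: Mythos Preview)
Your proof is correct and follows essentially the same approach as the paper: compute the conditional error explicitly as $1 - p(\hh(x)\mid x)$, use symmetry and completeness to conclude that $\{\hh(x) : h \in \sH\} = \sY$, and then take the infimum and subtract. Your surjectivity argument is in fact more detailed than the paper's, which simply asserts this identity in one line.
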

\begin{proof}
The conditional error for $\ell_{0-1}$ can be expressed as:
\begin{equation*}
\sC_{\ell_{0-1}}(h, x) = \sum_{y \in \sY} p(y \mid x) 1_{\hh(x) \neq y} = 1 - p(\hh(x) \mid y).
\end{equation*}
Since $\sH$ is symmetric and complete, we have $\curl*{\hh(x) \colon h \in \sH} = \sY$. Therefore, 
\begin{align*}
& \sC^*_{\ell_{0-1}, \sH}(x) = \inf_{h \in \sH} \sC_{\ell_{0-1}}(h, x) = 1 - \max_{y \in \sY} p(y \mid x)\\
& \Delta \sC_{\ell_{0-1}, \sH}(h, x) = \sC_{\ell_{0-1}}(h, x) - \sC^*_{\ell_{0-1}, \sH}(x) = \max_{y \in \sY} p(y \mid x) - p(\hh(x) \mid x).
\end{align*}
This completes the proof.
\end{proof}

\BoundLeeFiner*
\begin{proof}
The conditional error for constrained losses can be expressed as follows:
\begin{equation*}
\sC_{\Phi^{\mathrm{cstnd}}}(h, x) = \sum_{y \in \sY} p(y \mid x) \sum_{y'\neq y} \Phi \paren*{-h(x, y')} = \sum_{y \in \sY} \paren*{1 - p(y \mid x)} \Phi \paren*{-h(x, y)}.
\end{equation*}
Next, we will provide the proof for each case individually. We denote by $y_{\max}  = \argmax_{y \in \sY} p(y \mid x)$.

\textbf{Constrained exponential loss with $\Phi(u) = e^{-u}$.}
When $\hh(x) = y_{\max}$, we have $\Delta \sC_{\ell_{0-1}, \sH}(h, x) = 0$. Let $h \in \sH$ be a
hypothesis such that $\hh(x) \neq y_{\max}$. In this case, the conditional error can be written as:
\begin{equation*}
\begin{aligned}
 \sC_{\Phi^{\mathrm{cstnd}}}(h, x)
= \sum_{y\in \sY} \paren*{1 - p(y \mid x)} e^{h(x, y)} = \sum_{y\in \curl*{y_{\max}, \hh(x)}} \paren*{1 - p(y \mid x)} e^{h(x, y)} + \sum_{y \not \in \curl*{y_{\max}, \hh(x)}} e^{h(x, y)}.
\end{aligned}
\end{equation*}
For any $x\in \sX$, define the hypothesis $ h_{\mu} \in \sH$ by
\begin{align*}
 h_{\mu}(x, y) = 
\begin{cases}
  h(x, y) & \text{if $y \not \in \curl*{y_{\max}, \hh(x)}$}\\
  h(x, y_{\max}) + \mu & \text{if $y = \hh(x)$}\\
  h(x, \hh(x)) - \mu & \text{if $y = y_{\max}$}
\end{cases} 
\end{align*}
for any $\mu \in \Rset$. By the completeness of $\sH$, we have $ h_{\mu} \in \sH$ and  $\sum_{y \in \sY} h_{\mu}(x, y)=0$. Thus,
\begin{align*}
\Delta \sC_{\Phi^{\mathrm{cstnd}}, \sH}(h, x)
& = \sC_{\Phi^{\mathrm{cstnd}}}(h, x) - \sC^*_{\Phi^{\mathrm{cstnd}}}(\sH, x) \\
& \geq \sC_{\Phi^{\mathrm{cstnd}}}(h, x) - \inf_{\mu \in \Rset} \sC_{\Phi^{\mathrm{cstnd}}}(h_{\mu}, x)\\
& = \paren*{\sqrt{(1 - p(\hh(x) \mid x)) e^{h(x, \hh(x))}} - \sqrt{(1 - p(y_{\max} \mid x)) e^{h(x, y_{\max})}}}^2 \\
& \geq e^{h(x, \hh(x))} \paren*{\sqrt{(1 - p(\hh(x) \mid x))} - \sqrt{(1 - p(y_{\max} \mid x))}}^2  \tag{$e^{h(x, \hh(x))} \geq e^{h(x, y_{\max})}$ and $p(\hh(x) \mid x)\leq p(y_{\max} \mid x)$}\\
& = e^{h(x, \hh(x))} \paren*{\frac{p(y_{\max} \mid x) - p(\hh(x) \mid x)}{\sqrt{(1 - p(\hh(x) \mid x))} + \sqrt{(1 - p(y_{\max} \mid x))}}}^2\\
& \geq \frac{e^{h(x, \hh(x))}}{2} \paren*{\max_{y \in \sY} p(x, y) - p(\hh(x) \mid x)}^2 \tag{$0 \leq p(y_{\max} \mid x) + p(\hh(x) \mid x)\leq 1$}\\
& \geq \frac{e^{ \Lambda(h) }}{2} \paren*{ \Delta \sC_{\ell_{0-1}, \sH}(h, x)}^2.
\end{align*} 
Therefore, by Theorems~\ref{Thm:new-bound-convex} or \ref{Thm:new-bound-concave}, the following inequality holds
for any hypothesis $h \in \sH$:
\begin{align*}
\sE_{\ell_{0-1}}(h) - \sE_{\ell_{0-1}}^*(\sH) + \sM_{\ell_{0-1}}(\sH) \leq \frac{\sqrt{2}}{\paren*{{e^{\Lambda(h)}}}^{\frac{1}{2}}} \paren*{\sE_{\Phi^{\mathrm{cstnd}}}(h) - \sE_{\Phi^{\mathrm{cstnd}}}^*(\sH) + \sM_{\Phi^{\mathrm{cstnd}}}(\sH)}^{\frac{1}{2}}.
\end{align*}

\textbf{Constrained hinge loss with $\Phi(u) = \max \curl*{1 - u, 0}$.} When $\hh(x) = y_{\max}$, we have $\Delta \sC_{\ell_{0-1}, \sH}(h, x) = 0$. Let $h \in \sH$ be a
hypothesis such that $\hh(x) \neq y_{\max}$. In this case, the conditional error can be written as:
\begin{equation*}
\begin{aligned}
 \sC_{\Phi^{\mathrm{cstnd}}}(h, x)
& = \sum_{y\in \sY} \paren*{1 - p(y \mid x)} \max \curl*{1 + h(x, y), 0} \\
& = \sum_{y\in \curl*{y_{\max}, \hh(x)}} \paren*{1 - p(y \mid x)} \max \curl*{1 + h(x, y), 0} + \sum_{y \not \in \curl*{y_{\max}, \hh(x)}} \max \curl*{1 + h(x, y), 0}.
\end{aligned}
\end{equation*}
For any $x\in \sX$, define the hypothesis $ h_{\mu} \in \sH$ by
\begin{align*}
 h_{\mu}(x, y) = 
\begin{cases}
  h(x, y) & \text{if $y \not \in \curl*{y_{\max}, \hh(x)}$}\\
  h(x, y_{\max}) + \mu & \text{if $y = \hh(x)$}\\
  h(x, \hh(x)) - \mu & \text{if $y = y_{\max}$}
\end{cases} 
\end{align*}
for any $\mu \in \Rset$. By the completeness of $\sH$, we have $ h_{\mu} \in \sH$ and  $\sum_{y \in \sY} h_{\mu}(x, y)=0$. Thus,
\begin{align*}
\Delta \sC_{\Phi^{\mathrm{cstnd}}, \sH}(h, x)
& = \sC_{\Phi^{\mathrm{cstnd}}}(h, x) - \sC^*_{\Phi^{\mathrm{cstnd}}}(\sH, x) \\
& \geq \sC_{\Phi^{\mathrm{cstnd}}}(h, x) - \inf_{\mu \in \Rset} \sC_{\Phi^{\mathrm{cstnd}}}(h_{\mu}, x)\\
& \geq \paren*{1 + h(x, \hh(x))} \paren*{p(y_{\max} \mid x) - p(\hh(x) \mid x)}\\
& \geq \paren{1 + \Lambda(h)} \paren*{ \Delta \sC_{\ell_{0-1}, \sH}(h, x)}.
\end{align*} 
Therefore, by Theorems~\ref{Thm:new-bound-convex} or \ref{Thm:new-bound-concave}, the following inequality holds
for any hypothesis $h \in \sH$:
\begin{align*}
\sE_{\ell_{0-1}}(h) - \sE_{\ell_{0-1}}^*(\sH) + \sM_{\ell_{0-1}}(\sH) \leq \frac{1}{1 + \Lambda(h)} \paren*{\sE_{\Phi^{\mathrm{cstnd}}}(h) - \sE_{\Phi^{\mathrm{cstnd}}}^*(\sH) + \sM_{\Phi^{\mathrm{cstnd}}}(\sH)}.
\end{align*}

\textbf{Constrained squared hinge loss with $\Phi(u) = (1 - u)^2 1_{u \leq 1}$.} When $\hh(x) = y_{\max}$, we have $\Delta \sC_{\ell_{0-1}, \sH}(h, x) = 0$. Let $h \in \sH$ be a
hypothesis such that $\hh(x) \neq y_{\max}$. In this case, the conditional error can be written as:
\begin{equation*}
\begin{aligned}
 \sC_{\Phi^{\mathrm{cstnd}}}(h, x)
& = \sum_{y\in \sY} \paren*{1 - p(y \mid x)} \max \curl*{1 + h(x, y), 0}^2 \\
& = \sum_{y\in \curl*{y_{\max}, \hh(x)}} \paren*{1 - p(y \mid x)} \max \curl*{1 + h(x, y), 0}^2 + \sum_{y \not \in \curl*{y_{\max}, \hh(x)}} \max \curl*{1 + h(x, y), 0}^2.
\end{aligned}
\end{equation*}
For any $x\in \sX$, define the hypothesis $ h_{\mu} \in \sH$ by
\begin{align*}
 h_{\mu}(x, y) = 
\begin{cases}
  h(x, y) & \text{if $y \not \in \curl*{y_{\max}, \hh(x)}$}\\
  h(x, y_{\max}) + \mu & \text{if $y = \hh(x)$}\\
  h(x, \hh(x)) - \mu & \text{if $y = y_{\max}$}
\end{cases} 
\end{align*}
for any $\mu \in \Rset$. By the completeness of $\sH$, we have $ h_{\mu} \in \sH$ and  $\sum_{y \in \sY} h_{\mu}(x, y)=0$. Thus,
\begin{align*}
\Delta \sC_{\Phi^{\mathrm{cstnd}}, \sH}(h, x)
& = \sC_{\Phi^{\mathrm{cstnd}}}(h, x) - \sC^*_{\Phi^{\mathrm{cstnd}}}(\sH, x) \\
& \geq \sC_{\Phi^{\mathrm{cstnd}}}(h, x) - \inf_{\mu \in \Rset} \sC_{\Phi^{\mathrm{cstnd}}}(h_{\mu}, x)\\
& \geq \paren*{1 + h(x, \hh(x))}^2 \paren*{p(y_{\max} \mid x) - p(\hh(x) \mid x)}^2 \\
& \geq \paren{1 + \Lambda(h)}^2 \paren*{ \Delta \sC_{\ell_{0-1}, \sH}(h, x)}^2.
\end{align*} 
Therefore, by Theorems~\ref{Thm:new-bound-convex} or \ref{Thm:new-bound-concave}, the following inequality holds
for any hypothesis $h \in \sH$:
\begin{align*}
\sE_{\ell_{0-1}}(h) - \sE_{\ell_{0-1}}^*(\sH) + \sM_{\ell_{0-1}}(\sH) \leq \frac{1}{1 + \Lambda(h)} \paren*{\sE_{\Phi^{\mathrm{cstnd}}}(h) - \sE_{\Phi^{\mathrm{cstnd}}}^*(\sH) + \sM_{\Phi^{\mathrm{cstnd}}}(\sH)}^{\frac{1}{2}}.
\end{align*}

\end{proof}

\section{Proof of enhanced \texorpdfstring{$\sH$}{H}-consistency bounds under low-noise conditions}

\subsection{Proof of Theorem~\ref{Thm:tsybakov-binary}}
\label{app:tsybakov-binary}
\TsybakovBinary*
\begin{proof}
  Fix $\e > 0$ and define $\beta(h, x) = \frac{1_{h(x) \neq h^*(x)} + \e}{\E_X\bracket*{1_{h(x) \neq h^*(x)} + \e}}$.  Since $\Delta\sC_{\ell^{\rm{bi}}_{0-1},\sH}(h, x) = |2 \eta(x) - 1 |
  1_{h(x) \neq h^*(x)}$, we have $\frac{\Delta\sC_{\ell^{\rm{bi}}_{0-1},\sH}(h, x)}{\beta(h, x)} \leq \Delta\sC_{\ell^{\rm{bi}}_{0-1},\sH}(h, x)
  \E_X\bracket*{1_{h(x) \neq h^*(x)} + \e}$, thus the following inequality holds
\[
\frac{\Delta\sC_{\ell^{\rm{bi}}_{0-1},\sH}(h, x)}{\beta(h, x)}
\leq \E_X\bracket*{1_{h(x) \neq h^*(x)} + \e} \Delta \sC^{\frac{1}{s}}_{\ell,\sH}(h, x).
\]
By Theorem~\ref{Thm:new-bound-power}, with $\alpha(h, x)
= \E_X\bracket*{1_{h(x) \neq h^*(x)} + \e}^s$, we have
\begin{align*}
  \sE_{\ell^{\rm{bi}}_{0-1}}(h) - \sE^*_{\ell^{\rm{bi}}_{0-1}}(\sH)
  & \leq \paren*{\E_X\bracket*{1_{h(x) \neq h^*(x)} + \e}^t}^{\frac{1}{t}} \paren*{\sE_{\ell}(h) - \sE^*_{\ell}(\sH) + \sM_{\ell}(\sH)}^{\frac{1}{s}}.
\end{align*}
Since the inequality holds for any $\e > 0$, it implies:
\begin{align*}
  \sE_{\ell^{\rm{bi}}_{0-1}}(h) - \sE^*_{\ell^{\rm{bi}}_{0-1}}(\sH)
  & \leq \E_X\bracket*{\paren*{1_{h(x) \neq h^*(x)}}^t}^{\frac{1}{t}} \paren*{\sE_{\ell}(h) - \sE^*_{\ell}(\sH) + \sM_{\ell}(\sH)}^{\frac{1}{s}}\\
  & = \E_X[1_{h(x) \neq h^*(x)}]^{\frac{1}{t}} \paren*{\sE_{\ell}(h) - \sE^*_{\ell}(\sH) + \sM_{\ell}(\sH)}^{\frac{1}{s}} \tag{$\paren*{1_{h(x) \neq h^*(x)}}^t = 1_{h(x) \neq h^*(x)}$}\\
  & \leq c^{\frac{1}{t}} \bracket*{\sE_{\ell^{\rm{bi}}_{0-1}}(h) - \sE^*_{\ell^{\rm{bi}}_{0-1}}(\sH)}^{\frac{\alpha}{t}} \paren*{\sE_{\ell}(h) - \sE^*_{\ell}(\sH) + \sM_{\ell}(\sH)}^{\frac{1}{s}}
  \tag{Tsybakov noise assumption}
\end{align*}
The result follows after dividing both sides by $\bracket*{\sE_{\ell^{\rm{bi}}_{0-1}}(h) - \sE^*_{\ell^{\rm{bi}}_{0-1}}(\sH)}^{\frac{\alpha}{t}}$.
\end{proof}

\subsection{Proof of Lemma~\ref{lemma:Tsybakov} and Lemma~\ref{lemma:Tsybakov-equiv}}
\label{app:Tsybakov}

\Tsybakov*
\begin{proof}
  We prove the first inequality, the second one follows immediately the definition
  of the margin.
  By definition of the expectation and the Lebesgue integral, for
  any $u > 0$, we can write
  \begin{align*}
    \E[\gamma(X) 1_{\hh(X) \neq \hh^*(X)}]
    & = \int_0^{+\infty} \Pr[\gamma(X) 1_{\hh(X) \neq \hh^*(X)} > t] \, dt\\
    & \geq \int_0^{u} \Pr[\gamma(X) 1_{\hh(X) \neq \hh^*(X)} > t] \, dt\\
    & = \int_0^{u} \E[1_{\gamma(X) > t} \, 1_{\hh(X) \neq \hh^*(X)}] \, dt\\
    & = \int_0^{u} \paren*{\E[1_{\gamma(X) > t}] - \E[1_{\gamma(X) > t} 1_{\hh(X) = \hh^*(X)}]} \, dt\\
    & \geq \int_0^{u} \paren*{\E[1_{\gamma(X) > t}] - \E[1_{\hh(X) = \hh^*(X)}]} \, dt\\
    & = \int_0^{u} \paren*{\Pr[\hh(X) \neq \hh^*(X)] - \Pr[\gamma(X) \leq t]} \, dt\\
    & \geq u \Pr[\hh(X) \neq \hh^*(X)] - \int_0^{u} B t^{\frac{\alpha}{1 - \alpha}}  \, dt\\
    & = u \Pr[\hh(X) \neq \hh^*(X)] - B (1 - \alpha) u^{\frac{1}{1 - \alpha}}.
  \end{align*}
 Taking the derivative and choosing $u$ to maximize the above gives $u = \bracket*{\frac{\Pr[\hh(X) \neq \hh^*(X)]}{B}}^{\frac{1- \alpha}{\alpha}}$. Plugging in this choice of $u$ gives
  \[
  \E[\gamma(X) 1_{\hh(X) \neq \hh^*(X)}] \geq \bracket*{\frac{1}{B}}^{\frac{1 - \alpha}{\alpha}} \alpha \Pr[\hh(X) \neq \hh^*(X)]^{\frac{1}{\alpha}},
  \]
which can be rewritten as
$\Pr[\hh(X) \neq \hh^*(X)] \leq c \E[\gamma(X) 1_{\hh(X) \neq \hh^*(X)}]^\alpha$
for $c = \frac{B^{1 - \alpha}}{\alpha^{\alpha}}$.
\end{proof}

\TsybakovEquiv*
\begin{proof}
  Fix $t > 0$ and consider the event $\curl*{\gamma(X) \leq t}$. Since $h$ can be chosen to be any measurable function, there exists $h$ such $1_{\gamma(X) \leq t} = 1_{\hh(X) \neq \hh^*(X)}$. In view of that, we can write
  \begin{align*}
    \Pr[\gamma(X) \leq t]
    = \E[1_{\gamma(X) \leq t}]
    \leq c \E[\gamma(X) 1_{\gamma(X) \leq t}]^\alpha
    \leq c t^\alpha \E[1_{\gamma(X) \leq t}]^\alpha.
  \end{align*}
Comparing the left- and right-hand sides gives immediately
\[
\Pr[\gamma(X) \leq t] \leq c^{\frac{1}{1 - \alpha}} \, t^{\frac{\alpha}{1 - \alpha}}.
\]
Choosing $B =  c^{\frac{1}{1 - \alpha}}$ completes the proof.
\end{proof}

\subsection{Proof of Theorem~\ref{Thm:tsybakov-multi}}
\label{app:tsybakov-multi}
\TsybakovMulti*
\begin{proof}
  Fix $\e > 0$ and define $\beta(h, x) = \frac{1_{\hh(x) \neq \hh^*(x)} + \e}{\E_X\bracket*{1_{\hh(X) \neq \hh^*(X)} + \e}}$.  By Lemma~\ref{lemma:explicit_assumption_01}, $\Delta\sC_{\ell_{0-1},\sH}(h, x) = \max_{y \in \sY}p(y \mid x) - p(\hh(x) \mid x) =  p(\hh^*(x) \mid x) - p(\hh(x) \mid x) $, we have \[\frac{\Delta\sC_{\ell_{0-1},\sH}(h, x)}{\beta(h, x)} \leq \Delta\sC_{\ell_{0-1},\sH}(h, x)
  \E_X\bracket*{1_{\hh(X) \neq \hh^*(X)} + \e},\] thus the following inequality holds
\[
\frac{\Delta\sC_{\ell_{0-1},\sH}(h, x)}{\beta(h, x)}
\leq \E_X\bracket*{1_{\hh(X) \neq \hh^*(X)} + \e} \Delta \sC^{\frac{1}{s}}_{\ell,\sH}(h, x).
\]
By Theorem~\ref{Thm:new-bound-power}, with $\alpha(h, x)
= \E_X\bracket*{1_{\hh(X) \neq \hh^*(X)} + \e}^s$, we have
\begin{align*}
  \sE_{\ell_{0-1}}(h) - \sE^*_{\ell_{0-1}}(\sH)
  & \leq \paren*{\E_X\bracket*{1_{\hh(X) \neq \hh^*(X)} + \e}^t}^{\frac{1}{t}} \paren*{\sE_{\ell}(h) - \sE^*_{\ell}(\sH) + \sM_{\ell}(\sH)}^{\frac{1}{s}}.
\end{align*}
Since the inequality holds for any $\e > 0$, it implies:
\begin{align*}
  \sE_{\ell_{0-1}}(h) - \sE^*_{\ell_{0-1}}(\sH)
  & \leq \E_X\bracket*{\paren*{1_{\hh(X) \neq \hh^*(X)}}^t}^{\frac{1}{t}} \paren*{\sE_{\ell}(h) - \sE^*_{\ell}(\sH) + \sM_{\ell}(\sH)}^{\frac{1}{s}}\\
  & = \E_X[1_{\hh(X) \neq \hh^*(X)}]^{\frac{1}{t}} \paren*{\sE_{\ell}(h) - \sE^*_{\ell}(\sH) + \sM_{\ell}(\sH)}^{\frac{1}{s}} \tag{$\paren*{1_{\hh(X) \neq \hh^*(X)}}^t = 1_{\hh(x) \neq \hh^*(x)}$}\\
  & \leq c^{\frac{1}{t}} \bracket*{\sE_{\ell_{0-1}}(h) - \sE^*_{\ell_{0-1}}(\sH)}^{\frac{\alpha}{t}} \paren*{\sE_{\ell}(h) - \sE^*_{\ell}(\sH) + \sM_{\ell}(\sH)}^{\frac{1}{s}}
  \tag{Tsybakov noise assumption}
\end{align*}
The result follows after dividing both sides by $\bracket*{\sE_{\ell_{0-1}}(h) - \sE^*_{\ell_{0-1}}(\sH)}^{\frac{\alpha}{t}}$.
\end{proof}

\ignore{
\section{Examples of enhanced \texorpdfstring{$\sH$}{H}-consistency bounds under low-noise conditions}
\label{app:example}

\subsection{Binary classification}
\label{app:example-binary}

Here we consider complete hypothesis sets in binary classification satisfying $\forall x \in \sX$, $\curl*{h(x) \colon h \in \sH} = \Rset$. We consider margin-based loss functions $\ell(h, x, y) = \Phi(yh(x))$, including the hinge loss, logistic loss, exponential loss, squared-hinge loss, sigmoid loss, and $\rho$-margin loss. As shown by \citet{awasthi2022Hconsistency}, their corresponding $\Gamma$ functions are either linear or squared functions. Table~\ref{tab:nexample-binary} presents enhanced $\sH$-consistency bounds for them under the Tsybakov noise assumption as provided by
  Theorem~\ref{Thm:tsybakov-binary}.
\begin{table*}[t]
  \centering
  \resizebox{\textwidth}{!}{
  \begin{tabular}{@{\hspace{0cm}}llll@{\hspace{0cm}}}
    \toprule
   Loss functions & $\Phi$  & $\Gamma$  & $\sH$-consistency bounds\\
    \midrule
    Hinge & $\Phi_{\mathrm{hinge}}(u) = \max\curl*{0, 1 - u}$ & $x^1$ & $ \sE_{\ell}(h) - \sE^*_{\ell}(\sH) + \sM_{\ell}(\sH)$\\
    Logistic & $\Phi_{\mathrm{log}}(u) = \log(1 + e^{-u})$   & $x^2$ & $ c^{\frac{1}{2 - \alpha}}\bracket*{\sE_{\ell}(h) - \sE^*_{\ell}(\sH) + \sM_{\ell}(\sH)}^{\frac{1}{2 - \alpha}}$   \\
    Exponential & $\Phi_{\mathrm{exp}}(u) = e^{-u}$    & $x^2$  & $ c^{\frac{1}{2 - \alpha}}\bracket*{\sE_{\ell}(h) - \sE^*_{\ell}(\sH) + \sM_{\ell}(\sH)}^{\frac{1}{2 - \alpha}}$  \\
    Squared-hinge  & $\Phi_{\mathrm{sq-hinge}}(u) = (1 - u)^2 1_{u \leq 1}$ & $x^2$ &    $ c^{\frac{1}{2 - \alpha}}\bracket*{\sE_{\ell}(h) - \sE^*_{\ell}(\sH) + \sM_{\ell}(\sH)}^{\frac{1}{2 - \alpha}}$    \\
    Sigmoid & $\Phi_{\mathrm{sig}}(u) = 1 - \tanh(ku), ~k > 0$ & $x^1$ & $ \sE_{\ell}(h) - \sE^*_{\ell}(\sH) + \sM_{\ell}(\sH)$\\
    $\rho$-Margin & $\Phi_{\rho}(u) = \min \curl*{1, \max\curl*{0, 1 - \frac{u}{\rho}}}, ~\rho>0$ & $x^1$ &  $ \sE_{\ell}(h) - \sE^*_{\ell}(\sH) + \sM_{\ell}(\sH)$\\
    \bottomrule
  \end{tabular}
  }
  \caption{Examples of enhanced $\sH$-consistency bounds under the Tsybakov noise assumption and with complete hypothesis sets, as provided by
  Theorem~\ref{Thm:tsybakov-binary}, for margin-based loss functions $\ell(h, x, y) = \Phi(yh(x))$ (with only the surrogate portion
  displayed).}
\label{tab:nexample-binary}
\end{table*}

\subsection{Multi-class classification}
\label{app:example-multi}

Here we consider symmetric and complete hypothesis sets in multi-class classification. We consider constrained losses \citep{lee2004multicategory} and comp-sum losses \citep{mao2023cross}. As shown by \citet{awasthi2022multi} and \citet{mao2023cross}, their corresponding $\Gamma$ functions are either linear or squared functions. Tables~\ref{tab:example-multi-cstnd} and \ref{tab:example-multi-comp} presents enhanced $\sH$-consistency bounds for constrained losses and comp-sum losses under the Tsybakov noise assumption as provided by
  Theorem~\ref{Thm:tsybakov-multi}.
\begin{table*}[t]
\vskip -.2in
  \centering
  \resizebox{\textwidth}{!}{
  \begin{tabular}{@{\hspace{0cm}}llll@{\hspace{0cm}}}
    \toprule
   Loss functions & $\ell$  & $\Gamma$  & $\sH$-consistency bounds\\
    \midrule
    Constrained hinge & $\sum_{y'\neq y} \Phi_{\mathrm{hinge}} \paren*{-h(x, y')}$ & $x^1$ & $ \sE_{\ell}(h) - \sE^*_{\ell}(\sH) + \sM_{\ell}(\sH)$\\
    Constrained exponential & $\sum_{y'\neq y} \Phi_{\mathrm{exp}} \paren*{-h(x, y')}$    & $x^2$  & $ c^{\frac{1}{2 - \alpha}}\bracket*{\sE_{\ell}(h) - \sE^*_{\ell}(\sH) + \sM_{\ell}(\sH)}^{\frac{1}{2 - \alpha}}$  \\
    Constrained squared-hinge  & $\sum_{y'\neq y} \Phi_{\mathrm{sq-hinge}} \paren*{-h(x, y')}$ & $x^2$ &    $ c^{\frac{1}{2 - \alpha}}\bracket*{\sE_{\ell}(h) - \sE^*_{\ell}(\sH) + \sM_{\ell}(\sH)}^{\frac{1}{2 - \alpha}}$    \\
    Constrained $\rho$-margin & $\sum_{y'\neq y} \Phi_{\rho} \paren*{-h(x, y')}$ & $x^1$ &  $ \sE_{\ell}(h) - \sE^*_{\ell}(\sH) + \sM_{\ell}(\sH)$\\
    \bottomrule
  \end{tabular}
  }
  \caption{Examples of enhanced $\sH$-consistency bounds under the Tsybakov noise assumption and with symmetric and complete hypothesis sets, as provided by
  Theorem~\ref{Thm:tsybakov-multi}, for constrained losses $\ell(h, x, y) = \Phi^{\rm{cstnd}}(h, x, y) = \sum_{y'\neq y} \Phi \paren*{-h(x, y')} \text{ subject to } \sum_{y\in \sY} h(x, y) = 0$ (with only the surrogate portion
  displayed).}
\label{tab:example-multi-cstnd}
\end{table*}

\begin{table*}[t]
  \centering
  \resizebox{\textwidth}{!}{
  \begin{tabular}{@{\hspace{0cm}}llll@{\hspace{0cm}}}
    \toprule
   Loss functions & $\ell$  & $\Gamma$  & $\sH$-consistency bounds\\
    \midrule
    Sum exponential & $\sum_{y'\neq y} e^{h(x, y') - h(x, y)} $ & $x^1$ & $ \sE_{\ell}(h) - \sE^*_{\ell}(\sH) + \sM_{\ell}(\sH)$\\
    Multinomial logistic & $-\log \paren*{\frac{e^{h(x, y)}}{\sum_{y' \in\sY}e^{h(x, y')}}}$    & $x^2$  & $ c^{\frac{1}{2 - \alpha}}\bracket*{\sE_{\ell}(h) - \sE^*_{\ell}(\sH) + \sM_{\ell}(\sH)}^{\frac{1}{2 - \alpha}}$  \\
    Generalized cross-entropy  & $\frac{1}{\alpha} \bracket*{1 - \bracket*{\frac{e^{h(x, y)}}{\sum_{y'\in \sY} e^{h(x, y')}}}^{\alpha}}$ & $x^2$ &   $ c^{\frac{1}{2 - \alpha}}\bracket*{\sE_{\ell}(h) - \sE^*_{\ell}(\sH) + \sM_{\ell}(\sH)}^{\frac{1}{2 - \alpha}}$    \\
    Mean absolute error & $ 1 - \frac{e^{h(x, y)}}{\sum_{y'\in \sY} e^{h(x, y')}}$ & $x^1$ &  $ \sE_{\ell}(h) - \sE^*_{\ell}(\sH) + \sM_{\ell}(\sH)$\\
    \bottomrule
  \end{tabular}
  }
  \caption{Examples of enhanced $\sH$-consistency bounds under the Tsybakov noise assumption and with symmetric and complete hypothesis sets, as provided by
  Theorem~\ref{Thm:tsybakov-multi}, for comp-sum losses (with only the surrogate portion
  displayed).}
\label{tab:example-multi-comp}
\end{table*}
}

\section{Proof of enhanced \texorpdfstring{$\sH$}{H}-consistency bounds in bipartite ranking}
\subsection{Proof of Theorem~\ref{Thm:new-bound-concave-ranking}}
\label{app:new-bound-concave-ranking}
\NewBoundConcaveRanking*
\begin{proof}
For any $h\in \sH$, we can write
\begin{align*}
  &\sE_{\lbi}(h) - \sE^*_{\lbi}(\sH) + \sM_{\lbi}(\sH)\\
  & = \E_{X,X'} \bracket*{\Delta \ov \sC_{\lbi,\sH}(h, x, x')}\\
  & \leq \E_{X,X'} \bracket*{\Gamma_1 \paren*{\alpha_1(h, x') \,
\Delta \sC_{\ell,\sH}(h, x)} + \Gamma_2 \paren*{\alpha_2(h, x) \,
\Delta \sC_{\ell,\sH}(h, x')}}
  \tag{assumption}\\
  & \leq \Gamma_1  \paren*{ \E_{X'} \bracket*{\alpha_1(h, x')}  \E_X\bracket*{  \Delta\sC_{\ell,\sH}(h, x)} } +  \Gamma_2  \paren*{ \E_{X} \bracket*{\alpha_2(h, x)}  \E_{X'}\bracket*{ \Delta\sC_{\ell,\sH}(h, x')} }\\
  \tag{Jensen's ineq.} \\
   & \leq \Gamma_1 \paren*{\E_{x \in \sX} \bracket*{\alpha_1(h, x)} \,
\paren*{\sE_{\ell}(h) - \sE^*_{\ell}(\sH) + \sM_{\ell}(\sH)}}\\
&\qquad + \Gamma_2 \paren*{\E_{x \in \sX} \bracket*{\alpha_2(h, x)} \,
\paren*{\sE_{\ell}(h) - \sE^*_{\ell}(\sH) + \sM_{\ell}(\sH)}}
  \tag{def. of $\E_X \bracket*{\Delta\sC_{\ell,\sH}(h, x)}$},
\end{align*}
which completes the proof.
\end{proof}

\subsection{Proof of Theorem~\ref{Thm:new-bound-concave-ranking-exp}}
\label{app:new-bound-concave-ranking-exp}

\NewBoundConcaveRankingExp*
\begin{proof}
To simplify the notation, we will drop the dependency on $x$. Specifically, we use $\eta$ to denote $\eta(x)$, $\eta'$ to denote $\eta(x')$, $h$ to denote $h(x)$, and $h'$ to denote $h(x')$. Thus, we can write:
\begin{align*}
   \Delta \ov \sC_{\lbi_{\Phi_{\mathrm{exp}}}, \sH}(h, x, x')
  & = \eta (1 - \eta') e^{-h + h'}
  +  \eta' (1 - \eta)  e^{-h' + h} - 2 \sqrt{\eta (1 - \eta') \eta' (1 - \eta)}\\
   \sC_{\ell_{\Phi_{\mathrm{exp}}}}(h, x)
  & = \eta  e^{-h} +  (1 - \eta) e^{h} \\
  \Delta \sC_{\ell_{\Phi_{\mathrm{exp}}},\sH}(h, x)
  & = \eta  e^{-h}
  +  (1 - \eta) e^{h} - 2 \sqrt{\eta (1 - \eta)}.
\end{align*}
For any $A, B \in \Rset$, we have $(A + B)^2 \leq 2(A^2 + B^2)$. To prove this inequality, observe that the function $x \mapsto x^2$ is convex. Therefore, we have:
\begin{equation*}
(A + B)^2 = 4 \paren*{\frac{A + B}{2}}^2 \leq 4 \paren*{\frac{A^2 + B^2}{2}} = 2 (A^2 + B^2).
\end{equation*}
In light of this inequality, we can write
\begin{align*}
  \Delta \ov \sC_{\lbi_{\Phi_{\mathrm{exp}}}, \sH}(h, x, x') 
  & = \paren*{\sqrt{\eta (1 - \eta') e^{-h + h'}} - \sqrt{ \eta' (1 - \eta)  e^{-h' + h}}}^2\\
  &  = \paren*{\sqrt{(1 - \eta') e^{h'}} \paren*{\sqrt{\eta  e^{-h}} - \sqrt{(1 - \eta) e^{h}}} + \sqrt{(1 - \eta) e^{h}} \paren*{\sqrt{(1 - \eta') e^{h'}} - \sqrt{\eta'  e^{-h'}}}}^2\\
  & \leq 2\paren*{(1 - \eta') e^{h'}} \paren*{\sqrt{\eta  e^{-h}} - \sqrt{(1 - \eta) e^{h}}}^2\\
  & \qquad + 2\paren*{(1 - \eta) e^{h}} \paren*{\sqrt{\eta'  e^{-h'}} - \sqrt{(1 - \eta') e^{h'}}}^2 \tag{$(A + B)^2 \leq 2 (A^2 + B^2)$}\\
  \Delta \ov \sC_{\lbi_{\Phi_{\mathrm{exp}}}, \sH}(h, x, x')
  & = \paren*{\sqrt{ \eta' (1 - \eta)  e^{-h' + h}} - \sqrt{\eta (1 - \eta') e^{-h + h'}} }^2\\
  & = \paren*{\sqrt{\eta'  e^{-h'}} 
  \paren*{\sqrt{(1 - \eta) e^{h}} - \sqrt{\eta  e^{-h}}} + \sqrt{\eta  e^{-h}} \paren*{\sqrt{\eta'  e^{-h'}} - \sqrt{(1 - \eta') e^{h'}}}}^2\\
  & \leq 2\paren*{\eta'  e^{-h'}} \paren*{\sqrt{\eta  e^{-h}} - \sqrt{(1 - \eta) e^{h}}}^2\\
  & \qquad + 2\paren*{\eta  e^{-h}} \paren*{\sqrt{\eta'  e^{-h'}} - \sqrt{(1 - \eta') e^{h'}}}^2 \tag{$(A + B)^2 \leq 2 (A^2 + B^2)$}.
\end{align*}
Thus, by taking the mean of the two inequalities, we obtain:
\begin{align*}
  \Delta \ov \sC_{\lbi_{\Phi_{\mathrm{exp}}}, \sH}(h, x, x')
  & \leq \paren*{\eta'  e^{-h'} +  (1 - \eta') e^{h'}} \paren*{\sqrt{\eta  e^{-h}} - \sqrt{(1 - \eta) e^{h}}}^2\\
  & \qquad + \paren*{\eta  e^{-h} +  (1 - \eta) e^{h}} \paren*{\sqrt{\eta'  e^{-h'}} - \sqrt{(1 - \eta') e^{h'}}}^2.
\end{align*}
Therefore, we have
\begin{equation*}
 \Delta \ov \sC_{\lbi_{\Phi_{\mathrm{exp}}}, \sH}(h, x, x') 
\leq \sC_{\ell_{\Phi_{\mathrm{exp}}}}(h, x') \,
\Delta \sC_{\ell_{\Phi_{\mathrm{exp}}},\sH}(h, x) + \sC_{\ell_{\Phi_{\mathrm{exp}}}}(h, x) \,
\Delta \sC_{\ell_{\Phi_{\mathrm{exp}}},\sH}(h, x').
\end{equation*}
By Theorem~\ref{Thm:new-bound-concave-ranking}, we obtain
\begin{align*}
  \sE_{\lbi_{\Phi_{\mathrm{exp}}}}(h) - \sE_{\lbi_{\Phi_{\mathrm{exp}}}}^*(\sH) + \sM_{\lbi_{\Phi_{\mathrm{exp}}}}(\sH) \leq 2 \sE_{\ell_{\Phi_{\mathrm{exp}}}}(h) \,
\paren*{\sE_{\ell_{\Phi_{\mathrm{exp}}}}(h) - \sE^*_{\ell_{\Phi_{\mathrm{exp}}}}(\sH) + \sM_{\ell_{\Phi_{\mathrm{exp}}}}(\sH)}.
\end{align*}
\end{proof}

\subsection{Proof of Theorem~\ref{Thm:new-bound-concave-ranking-general}}
\label{app:new-bound-concave-ranking-general}

\NewBoundConcaveRankingGeneral*
\begin{proof}
By the definition, we have
\begin{align*}
\sC_{\Phi}(h, x) &= \eta(x) \Phi(h(x)) + \paren*{1 - \eta(x)} \Phi(-h(x)) \\
\sC_{\Phi}(h, x') &= \eta(x') \Phi(h(x')) + \paren*{1 - \eta(x')} \Phi(-h(x'))\\
\ov \sC_{\lbi_{\Phi}}(h, x, x') &= \eta(x)\paren*{1 - \eta(x')} \Phi(h(x) - h(x'))
 + \eta(x')\paren*{1 - \eta(x)}\Phi(-h(x) + h(x')).
\end{align*}
Therefore, by taking the derivative, we have
\begin{align*}
\Delta \sC_{\Phi,\sH}(h, x) &= 0 \implies \eta(x) \Phi'(h(x))  = \paren*{1 - \eta(x)} \Phi'(-h(x)) \implies h(x) = \frac{1}{\nu} \log\paren*{\frac{\eta(x)}{1 - \eta(x)}}\\
\Delta \sC_{\Phi,\sH}(h, x') &= 0 \implies \eta(x') \Phi'(h(x'))  = \paren*{1 - \eta(x')} \Phi'(-h(x')) \implies h(x') = \frac{1}{\nu} \log\paren*{\frac{\eta(x')}{1 - \eta(x')}}.
\end{align*}
Therefore, $h(x) - h(x') = \frac{1}{\nu} \log\paren*{\frac{\eta(x) (1 - \eta(x'))}{\eta(x') (1 - \eta(x)) } } $. This satisfies that
\begin{equation*}
\eta(x)\paren*{1 - \eta(x')} \Phi'(h(x) - h(x'))
 = \eta(x')\paren*{1 - \eta(x)} \Phi'(-h(x) + h(x')),
\end{equation*}
which implies that $\Delta \ov \sC_{\lbi_{\Phi}, \sH}(h, x, x') = 0$ by taking the derivative.
\end{proof}

\subsection{Proof of Theorem~\ref{Thm:new-bound-concave-ranking-log}}
\label{app:new-bound-concave-ranking-log}

\NewBoundConcaveRankingLog*
\begin{proof}
\ignore{
\begin{align*}
& \sC_{\lbi_{\Phi_{\mathrm{log}}}}(h, x, x')\\
&  = \eta(x)(1 - \eta(x')) \Phi_{\mathrm{log}}(h(x) - h(x')) + \eta(x')(1 - \eta(x))\Phi_{\mathrm{log}}(h(x') - h(x))\\
&  = \eta(x)(1 - \eta(x'))\log_2\paren*{1 + e^{-h(x) + h(x')}} + \eta(x')(1 - \eta(x))\log_2\paren*{1 + e^{h(x) - h(x')}}\\
& \sC^*_{\lbi_{\Phi_{\mathrm{log}}},\sH_{\mathrm{all}}}(x, x')\\
&  =  \inf_{h \in\sH_{\mathrm{all}}}\sC_{\lbi_{\Phi_{\mathrm{log}}}}(h, x, x')\\
&  =  -\eta(x)(1 - \eta(x'))\log_2(\eta(x)(1 - \eta(x'))) - \eta(x')(1 - \eta(x))\log_2(\eta(x')(1 - \eta(x))).
\end{align*}

\begin{align*}
& \sC_{\Phi_{\mathrm{log}}}(h, x)
 = \eta(x) \Phi_{\mathrm{log}}(h(x)) + (1 - \eta(x))\Phi_{\mathrm{log}}(-h(x))\\
& = \eta(x) \log_2\paren*{1 + e^{-h(x)}} + (1 - t) \log_2\paren*{1 + e^{h(x)}}.\\
& \inf_{h\in\sH_{\mathrm{all}}} \sC_{\Phi_{\mathrm{log}}}(h, x)\\
&= -\eta(x) \log_2(\eta(x)) - (1 - \eta(x)) \log_2(1 - \eta(x))   
\end{align*}
}
To simplify the notation, we will drop the dependency on $x$. Specifically, we use $\eta$ to denote $\eta(x)$, $\eta'$ to denote $\eta(x')$, $h$ to denote $h(x)$, and $h'$ to denote $h(x')$. Thus, we can write:
\begin{align*}
   \Delta \ov \sC_{\lbi_{\Phi_{\mathrm{log}}}, \sH}(h, x, x')
  & \leq \eta (1 - \eta') \log \bracket*{1 + e^{-h + h'}}
  +  \eta' (1 - \eta) \log \bracket*{1 + e^{-h' + h}}\\
  & + \eta (1 - \eta') \log \bracket*{\eta (1 - \eta')}
  + \eta' (1 - \eta) \log \bracket*{\eta' (1 - \eta)}\\
  \Delta \sC_{\ell_{\Phi_{\mathrm{log}}}, \sH}(h, x)
  & = \eta \log \bracket*{1 + e^{-h}}
  +  (1 - \eta) \log \bracket*{1 + e^{h}}\\
  & + \eta \log \bracket*{\eta}
  + (1 - \eta) \log \bracket*{(1 - \eta)}.
\end{align*}
Let $\Phi_{\mathrm{log}}$ denote the logistic loss. $\Phi_{\mathrm{log}}$ is a convex
function and for any $x$, we have $\Phi_{\mathrm{log}}(2x) \leq 2 \Phi_{\mathrm{log}}(x)$.
To prove this last inequality, observe that for any $x \in \Rset$,
we have
\begin{align*}
  \Phi_{\mathrm{log}}(2x)
  = \log (1 + e^{-2x})
  \leq \log (1 + 2e^{-x} + e^{-2x})
  = \log ((1 + e^{-x})^2)
  & = 2 \log ((1 + e^{-x})) = 2 \Phi_{\mathrm{log}}(x).
\end{align*}
Thus, we can write $\Phi_{\mathrm{log}}(h - h') = \Phi_{\mathrm{log}}(\frac{2h}{2} - \frac{2h'}{2})
\leq \frac{1}{2} (\Phi_{\mathrm{log}}(2h) + \Phi_{\mathrm{log}}(-2h')) \leq \Phi_{\mathrm{log}}(h) + \Phi_{\mathrm{log}}(-h')$.
In light of this inequality, we can write
\begin{align*}
  \Delta \ov \sC_{\lbi_{\Phi_{\mathrm{log}}}, \sH}(h, x, x')
  & \leq \eta (1 - \eta') (\Phi_{\mathrm{log}}(h) + \Phi_{\mathrm{log}}(- h'))
  +  \eta' (1 - \eta) (\Phi_{\mathrm{log}}(h') + \Phi_{\mathrm{log}}(- h))\\
  & \qquad + \eta (1 - \eta') \log \bracket*{\eta (1 - \eta')}
  + \eta' (1 - \eta) \log \bracket*{\eta' (1 - \eta)}\\
  & = \eta (1 - \eta') (\Phi_{\mathrm{log}}(h) + \Phi_{\mathrm{log}}(- h'))
  +  \eta' (1 - \eta) (\Phi_{\mathrm{log}}(h') + \Phi_{\mathrm{log}}(- h))\\
  & \qquad + \eta (1 - \eta') \bracket*{\log \eta + \log (1 - \eta')}
  + \eta' (1 - \eta) \bracket*{\log \eta' + \log (1 - \eta)}.
\end{align*}
Therefore, we have
\begin{equation*}
 \Delta \ov \sC_{\lbi_{\Phi_{\mathrm{log}}}, \sH}(h, x, x') \leq \max\curl*{\eta', 1 - \eta'} \Delta \sC_{\ell_{\Phi_{\mathrm{log}}},\sH}(h, x) + \max\curl*{\eta, 1 - \eta} \Delta \sC_{\ell_{\Phi_{\mathrm{log}}},\sH}(h, x') .
\end{equation*}
By Theorem~\ref{Thm:new-bound-concave-ranking}, we obtain
\begin{align*}
  & \sE_{\lbi_{\Phi_{\mathrm{log}}}}(h) - \sE_{\lbi_{\Phi_{\mathrm{log}}}}^*(\sH) + \sM_{\lbi_{\Phi_{\mathrm{log}}}}(\sH)\\
  & \qquad \leq 2\E[\max\curl*{\eta(x), (1 - \eta(x))}]  \,
\paren*{\sE_{\ell_{\Phi_{\mathrm{log}}}}(h) - \sE^*_{\ell_{\Phi_{\mathrm{log}}}}(\sH) + \sM_{\ell_{\Phi_{\mathrm{log}}}}(\sH)}.
\end{align*}
\end{proof}

\subsection{Proof of Theorem~\ref{Thm:new-bound-concave-ranking-hinge}}
\label{app:new-bound-concave-ranking-hinge}

\NewBoundConcaveRankingHinge*
\begin{proof}
Consider the distribution that supports on $\curl*{(x_0, x_0')}$. Let $1 \geq \eta(x_0) > \eta(x'_0) > \frac{1}{2}$, and $h_0 = 1 \in \sH$. Then, for any $h \in \sH$,
\begin{align*}
\sC_{\ell_{\Phi_{\mathrm{hinge}}}}(h, x_0) = \eta(x_0) \max \curl*{0, 1 - h(x_0)}
  +  (1 - \eta(x_0)) \max \curl*{0, 1 + h(x_0)} \geq 2 (1 - \eta(x_0))\\
\sC_{\ell_{\Phi_{\mathrm{hinge}}}}(h, x'_0) = \eta(x'_0) \max \curl*{0, 1 - h(x'_0)}
  +  (1 - \eta(x'_0)) \max \curl*{0, 1 + h(x'_0)} \geq 2 (1 - \eta(x'_0)),
\end{align*}
where both equality can be achieved by $h_0 = 1$. Furthermore,
\begin{align*}
\ov \sC_{\lbi_{\Phi_{\mathrm{hinge}}}}(h_0, x_0, x'_0) &= \eta(x_0) (1 - \eta(x'_0)) \max\curl*{0, 1 - h_0(x_0) + h_0(x'_0)}\\
& \qquad +  \eta(x'_0) (1 - \eta(x_0))  \max\curl*{1 -h_0(x'_0) + h_0(x_0)}\\
& = \eta(x_0) (1 - \eta(x'_0)) +  \eta(x'_0) (1 - \eta(x_0))\\
\Delta \ov \sC_{\lbi_{\Phi_{\mathrm{hinge}}}, \sH}(h_0, x_0, x'_0) &= \eta(x_0) (1 - \eta(x'_0)) +  \eta(x'_0) (1 - \eta(x_0))\\
&\qquad - 2\min\curl*{ \eta(x_0) (1 - \eta(x'_0)), \eta(x'_0) (1 - \eta(x_0))}\\
& = \eta(x_0) - \eta(x'_0).
\end{align*}
Therefore, $\Delta \sC_{\ell_{\Phi_{\mathrm{hinge}}}, \sH}(h_0, x_0) = \Delta \sC_{\ell_{\Phi_{\mathrm{hinge}}}, \sH}(h_0, x'_0) = 0$, but $ \Delta \ov \sC_{\lbi_{\Phi_{\mathrm{hinge}}}, \sH}(h_0, x_0, x'_0) \neq 0$, which implies that $\ell_{\Phi_{\rm{hinge}}}$ is not $\sH$-calibrated with respect to $\sfL_{\Phi_{\rm{hinge}}}$.

Suppose that for all
$h\in \sH$, the following holds:
\begin{equation*}
\Delta \ov \sC_{\lbi_{\Phi_{\mathrm{hinge}}}, \sH}(h, x_0, x'_0) \leq \Gamma_1 \paren*{\alpha_1(h, x'_0) \,
\Delta \sC_{\ell_{\Phi_{\mathrm{hinge}}}, \sH}(h, x_0)} + \Gamma_2 \paren*{\alpha_2(h, x_0) \,
\Delta \sC_{\ell_{\Phi_{\mathrm{hinge}}}, \sH}(h, x'_0)}.
\end{equation*}
Let $h = h_0$, then, for any $1 \geq \eta(x_0) > \eta(x'_0) > \frac{1}{2}$, the following inequality holds:
\begin{equation*}
\eta(x_0) - \eta(x'_0) \leq \Gamma_1(0) + \Gamma_2(0).
\end{equation*}
This implies that $\Gamma_1(0) + \Gamma_2(0) \geq \frac{1}{2}$.
\end{proof}

\section{Generalization bounds}
\label{app:generalization}

Here, we show that all our derived enhanced $\sH$-consistency bounds can be used to provide novel enhanced generalization bounds in their respective settings.

\subsection{Standard multi-class classification}

Let $S = \paren*{(x_1, y_1), \ldots, (x_m, y_m)}$ be a finite sample drawn from
$\sD^m$.  We denote by $\h h_S$ the minimizer of the
empirical loss within $\sH$ with respect to the constrained loss $\Phi^{\rm{cstnd}}$:
\[
\h h_S = \argmin_{h \in \sH} \h \sE_{\Phi^{\rm{cstnd}}, S}(h) = \argmin_{h \in \sH} \frac{1}{m}\sum_{i = 1}^m \Phi^{\rm{cstnd}}(h, x_i,y _i).\]

Next, by using enhanced $\sH$-consistency bounds for constrained losses $\Phi^{\rm{cstnd}}$ in Theorem~\ref{Thm:bound_lee_finer}, we derive novel generalization bounds for the multi-class zero-one loss by upper bounding the surrogate
estimation error $\sE_{\Phi^{\rm{cstnd}}}(\h h_S) - \sE_{\Phi^{\rm{cstnd}}}^*(\sH)$ with the
complexity (e.g. the Rademacher complexity) of the family of functions
associated with $\Phi^{\rm{cstnd}}$ and $\sH$: $\sH_{\Phi^{\rm{cstnd}}}=\curl*{(x, y) \mapsto
  \Phi^{\rm{cstnd}}(h, x, y) \colon h \in \sH}$.

Let $\Rad_m^{\Phi^{\rm{cstnd}}}(\sH)$ be the Rademacher complexity of
$\sH_{\Phi^{\rm{cstnd}}}$ and $B_{\Phi^{\rm{cstnd}}}$ an upper bound of the constrained loss
$\Phi^{\rm{cstnd}}$. The following generalization bound for the multi-class zero-one loss holds.

\begin{restatable}[\textbf{Enhanced generalization bound with constrained losses}]
  {theorem}{BoundLeeFinerG}
\label{Thm:bound_lee_finer_g}
Assume that $\sH$ is symmetric and complete. Then, the following generalization bound holds for $\h h_S$: for any
$\delta > 0$, with probability at least $1-\delta$ over the draw of an
i.i.d sample $S$ of size $m$:
\begin{align*}
 \sE_{\ell_{0-1}}(\h h_S) - \sE^*_{\ell_{0-1}}(\sH) + \sM_{\ell_{0-1}}(\sH) \leq \Gamma \paren*{4
    \Rad_m^{\Phi^{\rm{cstnd}}}(\sH) + 2 B_{\Phi^{\rm{cstnd}}} \sqrt{\tfrac{\log
        \frac{2}{\delta}}{2m}} + \sM_{\Phi^{\mathrm{cstnd}}}(\sH)},
\end{align*}
where $\Gamma(x) = \frac{ \sqrt{2}\, x^{\frac{1}{2}}}{\paren*{{e^{\Lambda(\h h_S)}}}^{\frac{1}{2}}}$ for $\Phi(u) = e^{-u}$, $\Gamma(x) = \frac{x}{1 + \Lambda(\h h_S)}$ for $\Phi(u) = \max \curl*{0, 1 - u}$, and $\Gamma(x) = \frac{ x^{\frac{1}{2}}}{1 + \Lambda(\h h_S)}$ for $\Phi(u) = (1 - u)^2 1_{u \leq 1}$. Additionally, $\Lambda(\h h_S) = \inf_{x \in \sX} \max_{y\in \sY} \h h_S(x,y)$.
\end{restatable}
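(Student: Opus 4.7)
The plan is to combine the enhanced $\sH$-consistency bound of Theorem~\ref{Thm:bound_lee_finer}, applied to the empirical risk minimizer $\h h_S$, with a standard Rademacher complexity argument to upper bound the surrogate estimation error $\sE_{\Phi^{\rm{cstnd}}}(\h h_S) - \sE^*_{\Phi^{\rm{cstnd}}}(\sH)$. Since $\Gamma$ is non-decreasing in all three cases listed in Theorem~\ref{Thm:bound_lee_finer}, any upper bound on the quantity $\sE_{\Phi^{\rm{cstnd}}}(\h h_S) - \sE^*_{\Phi^{\rm{cstnd}}}(\sH) + \sM_{\Phi^{\rm{cstnd}}}(\sH)$ can be composed with $\Gamma$ to control the zero-one estimation error plus its minimizability gap.

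First, I would invoke Theorem~\ref{Thm:bound_lee_finer} for the specific hypothesis $h = \h h_S \in \sH$, giving
\[
\sE_{\ell_{0-1}}(\h h_S) - \sE^*_{\ell_{0-1}}(\sH) + \sM_{\ell_{0-1}}(\sH) \leq \Gamma\paren*{\sE_{\Phi^{\rm{cstnd}}}(\h h_S) - \sE^*_{\Phi^{\rm{cstnd}}}(\sH) + \sM_{\Phi^{\rm{cstnd}}}(\sH)},
\]
with $\Gamma$ and $\Lambda(\h h_S)$ determined by the choice of $\Phi$. This reduces the task to bounding the surrogate estimation error in high probability.

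Next, I would invoke the standard Rademacher-complexity uniform convergence bound for the bounded function class $\sH_{\Phi^{\rm{cstnd}}}$ with range in $[0, B_{\Phi^{\rm{cstnd}}}]$. By McDiarmid's inequality combined with the symmetrization argument, with probability at least $1 - \delta/2$, $\sup_{h \in \sH}\paren*{\sE_{\Phi^{\rm{cstnd}}}(h) - \h\sE_{\Phi^{\rm{cstnd}},S}(h)} \leq 2\Rad_m^{\Phi^{\rm{cstnd}}}(\sH) + B_{\Phi^{\rm{cstnd}}} \sqrt{\log(2/\delta)/(2m)}$, and symmetrically for the other direction. For any $\eps > 0$, let $h_\eps \in \sH$ satisfy $\sE_{\Phi^{\rm{cstnd}}}(h_\eps) \leq \sE^*_{\Phi^{\rm{cstnd}}}(\sH) + \eps$. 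Using the minimization property $\h\sE_{\Phi^{\rm{cstnd}},S}(\h h_S) \leq \h\sE_{\Phi^{\rm{cstnd}},S}(h_\eps)$, a standard decomposition yields
\[
\sE_{\Phi^{\rm{cstnd}}}(\h h_S) - \sE^*_{\Phi^{\rm{cstnd}}}(\sH) \leq \bracket*{\sE_{\Phi^{\rm{cstnd}}}(\h h_S) - \h\sE_{\Phi^{\rm{cstnd}},S}(\h h_S)} + \bracket*{\h\sE_{\Phi^{\rm{cstnd}},S}(h_\eps) - \sE_{\Phi^{\rm{cstnd}}}(h_\eps)} + \eps,
\]
and applying the two uniform deviation bounds via a union bound, then letting $\eps \to 0$, gives with probability at least $1 - \delta$,
\[
\sE_{\Phi^{\rm{cstnd}}}(\h h_S) - \sE^*_{\Phi^{\rm{cstnd}}}(\sH) \leq 4\Rad_m^{\Phi^{\rm{cstnd}}}(\sH) + 2 B_{\Phi^{\rm{cstnd}}}\sqrt{\tfrac{\log(2/\delta)}{2m}}.
\]

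Finally, adding $\sM_{\Phi^{\rm{cstnd}}}(\sH)$ to both sides, applying the non-decreasing function $\Gamma$, and combining with the $\sH$-consistency bound from the first step yields the stated inequality. The proof is essentially routine; the only subtlety is that $\Gamma$ depends on $\Lambda(\h h_S)$, but since Theorem~\ref{Thm:bound_lee_finer} applies pointwise to every hypothesis in $\sH$ (in particular to the random $\h h_S$), this data-dependent factor is carried through without difficulty, and no obstacle arises beyond this observation.
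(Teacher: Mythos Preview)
Your proposal is correct and follows essentially the same approach as the paper: apply Theorem~\ref{Thm:bound_lee_finer} to $\h h_S$, then bound $\sE_{\Phi^{\rm{cstnd}}}(\h h_S) - \sE^*_{\Phi^{\rm{cstnd}}}(\sH)$ via the standard Rademacher uniform-deviation bound plus the usual ERM decomposition with a near-minimizer $h_\eps$ and $\eps \to 0$. The paper states the two-sided deviation bound directly with the $\log(2/\delta)$ constant rather than explicitly splitting into two one-sided bounds at level $\delta/2$, but this is the same argument.
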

\begin{proof}
  By using the standard Rademacher complexity bounds \citep{MohriRostamizadehTalwalkar2018}, for any $\delta>0$,
  with probability at least $1 - \delta$, the following holds for all $h \in \sH$:
\[
\abs*{\sE_{\Phi^{\mathrm{cstnd}}}(h) - \h\sE_{\Phi^{\mathrm{cstnd}}, S}(h)}
\leq 2 \Rad_m^{\Phi^{\mathrm{cstnd}}}(\sH) +
B_{\Phi^{\mathrm{cstnd}}} \sqrt{\tfrac{\log (2/\delta)}{2m}}.
\]
Fix $\e > 0$. By the definition of the infimum, there exists $h^* \in
\sH$ such that $\sE_{\Phi^{\mathrm{cstnd}}}(h^*) \leq
\sE_{\Phi^{\mathrm{cstnd}}}^*(\sH) + \e$. By definition of
$\h h_S$, we have
\begin{align*}
  & \sE_{\Phi^{\mathrm{cstnd}}}(\h h_S) - \sE_{\Phi^{\mathrm{cstnd}}}^*(\sH)\\
  & = \sE_{\Phi^{\mathrm{cstnd}}}(\h h_S) - \h\sE_{\Phi^{\mathrm{cstnd}}, S}(\h h_S) + \h\sE_{\Phi^{\mathrm{cstnd}}, S}(\h h_S) - \sE_{\Phi^{\mathrm{cstnd}}}^*(\sH)\\
  & \leq \sE_{\Phi^{\mathrm{cstnd}}}(\h h_S) - \h\sE_{\Phi^{\mathrm{cstnd}}, S}(\h h_S) + \h\sE_{\Phi^{\mathrm{cstnd}}, S}(h^*) - \sE_{\Phi^{\mathrm{cstnd}}}^*(\sH)\\
  & \leq \sE_{\Phi^{\mathrm{cstnd}}}(\h h_S) - \h\sE_{\Phi^{\mathrm{cstnd}}, S}(\h h_S) + \h\sE_{\Phi^{\mathrm{cstnd}}, S}(h^*) - \sE_{\Phi^{\mathrm{cstnd}}}^*(h^*) + \e\\
  & \leq
  2 \bracket*{2 \Rad_m^{\Phi^{\mathrm{cstnd}}}(\sH) +
B_{\Phi^{\mathrm{cstnd}}} \sqrt{\tfrac{\log (2/\delta)}{2m}}} + \e.
\end{align*}
Since the inequality holds for all $\e > 0$, it implies:
\[
\sE_{\Phi^{\mathrm{cstnd}}}(\h h_S) - \sE_{\Phi^{\mathrm{cstnd}}}^*(\sH)
\leq 
4 \Rad_m^{\Phi^{\mathrm{cstnd}}}(\sH) +
2 B_{\Phi^{\mathrm{cstnd}}} \sqrt{\tfrac{\log (2/\delta)}{2m}}.
\]
Plugging in this inequality in the bounds of Theorem~\ref{Thm:bound_lee_finer} completes the proof.
\end{proof}
To the best
of our knowledge, Theorem~\ref{Thm:bound_lee_finer_g} provides the first enhanced
finite-sample guarantees, expressed in terms of minimizability gaps, for the estimation error of the minimizer of constrained losses with respect to the multi-class zero-one loss, incorporating a quantity
$\Lambda(\h h_S)$ depending on $\h h_S$.
The
proof uses our enhanced $\sH$-consistency bounds for constrained losses (Theorem~\ref{Thm:bound_lee_finer}), as well as standard Rademacher complexity guarantees.


\newpage

\subsection{Classification under low-noise conditions}

Let $S = \paren*{(x_1, y_1), \ldots, (x_m, y_m)}$ be a finite sample drawn from
$\sD^m$.  We denote by $\h h_S$ the minimizer of the
empirical loss within $\sH$ with respect to a surrogate loss $\ell$:
$
\h h_S = \argmin_{h \in \sH} \h \sE_{\ell, S}(h) = \argmin_{h \in \sH} \frac{1}{m}\sum_{i = 1}^m \ell(h, x_i,y _i).$

Next, by using enhanced $\sH$-consistency bounds for surrogate losses $\ell$ in Theorems~\ref{Thm:tsybakov-binary} and \ref{Thm:tsybakov-multi}, we derive novel generalization bounds for the binary and multi-class zero-one loss under low-noise conditions, by upper bounding the surrogate
estimation error $\sE_{\ell}(\h h_S) - \sE_{\ell}^*(\sH)$ with the
complexity (e.g. the Rademacher complexity) of the family of functions
associated with $\ell$ and $\sH$: $\sH_{\ell}=\curl*{(x, y) \mapsto
  \ell(h, x, y) \colon h \in \sH}$.

Let $\Rad_m^{\ell}(\sH)$ be the Rademacher complexity of
$\sH_{\ell}$ and $B_{\ell}$ an upper bound of the surrogate loss
$\ell$. The following generalization bounds for the binary and multi-class zero-one loss hold.

\begin{restatable}[\textbf{Enhanced binary generalization bound under the Tsybakov noise
  assumption}]{theorem}{TsybakovBinaryG}
  \label{Thm:tsybakov-binary-g}
  Consider a binary classification setting where the Tsybakov noise
  assumption holds. Assume that the following holds for all $h \in \sH$ and $x \in \sX$:
  $\Delta\sC_{\ell^{\rm{bi}}_{0-1},\sH}(h, x) \leq \Gamma \paren*{\Delta \sC_{\ell,\sH}(h,
  x)}$, with $\Gamma(x) = x^{\frac{1}{s}}$, for some $s \geq 1$.
  Then, for any $h \in \sH$,
  \[
  \sE_{\ell^{\rm{bi}}_{0-1}}(\h h_S) - \sE^*_{\ell^{\rm{bi}}_{0-1}}(\sH)
  \leq c^{\frac{s - 1}{s - \alpha(s - 1)}}\bracket*{4 \Rad_m^{\ell}(\sH) +
2 B_{\ell} \sqrt{\tfrac{\log (2/\delta)}{2m}} + \sM_{\ell}(\sH)}^{\frac{1}{s - \alpha(s - 1)}}.
  \]
\end{restatable}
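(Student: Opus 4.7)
The plan is to combine a standard Rademacher-complexity generalization bound on the surrogate estimation error of $\h h_S$ with the enhanced $\sH$-consistency bound already established in Theorem~\ref{Thm:tsybakov-binary}. This directly parallels the two-step recipe used in the proof of Theorem~\ref{Thm:bound_lee_finer_g}, with the constrained-loss $\sH$-consistency bound replaced by the Tsybakov-noise binary $\sH$-consistency bound.

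First, I would apply the standard Rademacher complexity generalization bound to the function class $\sH_{\ell} = \curl*{(x,y) \mapsto \ell(h,x,y) \colon h \in \sH}$, which is bounded by $B_{\ell}$. This yields that, for any $\delta > 0$, with probability at least $1 - \delta$ over the draw of $S \sim \sD^m$, the inequality $\abs*{\sE_{\ell}(h) - \h\sE_{\ell,S}(h)} \leq 2 \Rad_m^{\ell}(\sH) + B_{\ell}\sqrt{\log(2/\delta)/(2m)}$ holds uniformly over all $h \in \sH$. Next, using the defining property of the empirical risk minimizer $\h h_S$ and an $\e$-infimum argument (picking $h^* \in \sH$ with $\sE_{\ell}(h^*) \leq \sE_{\ell}^*(\sH) + \e$), I would decompose $\sE_{\ell}(\h h_S) - \sE_{\ell}^*(\sH)$ into the two deviation terms $\sE_{\ell}(\h h_S) - \h\sE_{\ell,S}(\h h_S)$ and $\h\sE_{\ell,S}(h^*) - \sE_{\ell}(h^*)$ (the ERM comparison term being nonpositive), apply the uniform bound twice, and let $\e \to 0$ to conclude
\begin{equation*}
\sE_{\ell}(\h h_S) - \sE_{\ell}^*(\sH) \leq 4 \Rad_m^{\ell}(\sH) + 2 B_{\ell} \sqrt{\tfrac{\log(2/\delta)}{2m}}.
\end{equation*}

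Finally, I would invoke Theorem~\ref{Thm:tsybakov-binary} applied to $h = \h h_S$, which asserts that under the Tsybakov noise assumption and the conditional-regret hypothesis with $\Gamma(x) = x^{1/s}$, one has
\begin{equation*}
\sE_{\ell^{\rm{bi}}_{0-1}}(\h h_S) - \sE^*_{\ell^{\rm{bi}}_{0-1}}(\sH) \leq c^{\frac{s-1}{s - \alpha(s-1)}} \bracket*{\sE_{\ell}(\h h_S) - \sE^*_{\ell}(\sH) + \sM_{\ell}(\sH)}^{\frac{1}{s - \alpha(s-1)}}.
\end{equation*}
Substituting the Rademacher bound from the first part into the right-hand side and using that $x \mapsto x^{1/(s - \alpha(s-1))}$ is monotone (since $s \geq 1$ and $\alpha \in [0,1)$) yields exactly the claimed inequality.

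I do not expect any significant obstacle: every ingredient is standard or already proved in the paper. The only mild subtlety is to keep the minimizability gap $\sM_{\ell}(\sH)$ inside the power $1/(s - \alpha(s-1))$ rather than separating it out, and to make sure the high-probability event from the Rademacher step is the same one under which the final inequality is asserted (i.e., both bounds hold on a single event of probability at least $1 - \delta$, since Theorem~\ref{Thm:tsybakov-binary} is a deterministic statement about any $h \in \sH$).
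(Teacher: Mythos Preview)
Your proposal is correct and follows essentially the same approach as the paper: establish the surrogate estimation-error bound $\sE_{\ell}(\h h_S) - \sE_{\ell}^*(\sH) \leq 4 \Rad_m^{\ell}(\sH) + 2 B_{\ell}\sqrt{\log(2/\delta)/(2m)}$ via the standard Rademacher uniform deviation bound plus the $\e$-infimum/ERM decomposition, then plug this into Theorem~\ref{Thm:tsybakov-binary}. The paper's proof is step-for-step identical to what you outline.
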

\begin{proof}
  By using the standard Rademacher complexity bounds \citep{MohriRostamizadehTalwalkar2018}, for any $\delta>0$,
  with probability at least $1 - \delta$, the following holds for all $h \in \sH$:
\[
\abs*{\sE_{\ell}(h) - \h\sE_{\ell, S}(h)}
\leq 2 \Rad_m^{\ell}(\sH) +
B_{\ell} \sqrt{\tfrac{\log (2/\delta)}{2m}}.
\]
Fix $\e > 0$. By the definition of the infimum, there exists $h^* \in
\sH$ such that $\sE_{\ell}(h^*) \leq
\sE_{\ell}^*(\sH) + \e$. By definition of
$\h h_S$, we have
\begin{align*}
  & \sE_{\ell}(\h h_S) - \sE_{\ell}^*(\sH)\\
  & = \sE_{\ell}(\h h_S) - \h\sE_{\ell, S}(\h h_S) + \h\sE_{\ell, S}(\h h_S) - \sE_{\ell}^*(\sH)\\
  & \leq \sE_{\ell}(\h h_S) - \h\sE_{\ell, S}(\h h_S) + \h\sE_{\ell, S}(h^*) - \sE_{\ell}^*(\sH)\\
  & \leq \sE_{\ell}(\h h_S) - \h\sE_{\ell, S}(\h h_S) + \h\sE_{\ell, S}(h^*) - \sE_{\ell}^*(h^*) + \e\\
  & \leq
  2 \bracket*{2 \Rad_m^{\ell}(\sH) +
B_{\ell} \sqrt{\tfrac{\log (2/\delta)}{2m}}} + \e.
\end{align*}
Since the inequality holds for all $\e > 0$, it implies:
\[
\sE_{\ell}(\h h_S) - \sE_{\ell}^*(\sH)
\leq 
4 \Rad_m^{\ell}(\sH) +
2 B_{\ell} \sqrt{\tfrac{\log (2/\delta)}{2m}}.
\]
Plugging in this inequality in the bounds of Theorem~\ref{Thm:tsybakov-binary} completes the proof.
\end{proof}

\begin{restatable}[\textbf{Enhanced multi-class generalization bound under the Tsybakov noise
  assumption}]{theorem}{TsybakovMultiG}
  \label{Thm:tsybakov-multi-g}
  Consider a multi-class classification setting where the Tsybakov noise
  assumption holds. Assume that the following holds for all $h \in \sH$ and $x \in \sX$:
  $\Delta\sC_{\ell_{0-1},\sH}(h, x) \leq \Gamma \paren*{\Delta \sC_{\ell,\sH}(h,
  x)}$, with $\Gamma(x) = x^{\frac{1}{s}}$, for some $s \geq 1$.
  Then, for any $h \in \sH$,
  \[
  \sE_{\ell_{0-1}}(\h h_S) - \sE^*_{\ell_{0-1}}(\sH)
  \leq c^{\frac{s - 1}{s - \alpha(s - 1)}}\bracket*{4 \Rad_m^{\ell}(\sH) +
2 B_{\ell} \sqrt{\tfrac{\log (2/\delta)}{2m}} + \sM_{\ell}(\sH)}^{\frac{1}{s - \alpha(s - 1)}}.
  \]
\end{restatable}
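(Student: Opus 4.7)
The plan is to mirror the proof of Theorem~\ref{Thm:tsybakov-binary-g} verbatim, simply substituting Theorem~\ref{Thm:tsybakov-multi} in place of Theorem~\ref{Thm:tsybakov-binary}. Since the right-hand side of the multi-class enhanced $\sH$-consistency bound has the identical functional form in the surrogate estimation error, no new inequalities or technical devices are needed.

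First, I would invoke the standard Rademacher complexity-based uniform convergence bound applied to the surrogate loss class $\sH_{\ell}$: for any $\delta > 0$, with probability at least $1 - \delta$ over the draw of $S \sim \sD^m$, for all $h \in \sH$,
\[
\abs*{\sE_{\ell}(h) - \h\sE_{\ell, S}(h)} \leq 2\Rad_m^{\ell}(\sH) + B_{\ell}\sqrt{\tfrac{\log(2/\delta)}{2m}}.
\]
Second, for an arbitrary $\e > 0$, I would pick $h^* \in \sH$ with $\sE_{\ell}(h^*) \leq \sE^*_{\ell}(\sH) + \e$ and use the defining property of the empirical minimizer $\h\sE_{\ell,S}(\h h_S) \leq \h\sE_{\ell,S}(h^*)$. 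Writing $\sE_{\ell}(\h h_S) - \sE^*_{\ell}(\sH)$ as $[\sE_{\ell}(\h h_S) - \h\sE_{\ell, S}(\h h_S)] + [\h\sE_{\ell, S}(\h h_S) - \h\sE_{\ell, S}(h^*)] + [\h\sE_{\ell, S}(h^*) - \sE_{\ell}(h^*)] + [\sE_{\ell}(h^*) - \sE^*_{\ell}(\sH)]$, applying the uniform deviation bound twice and letting $\e \to 0$ yields
\[
\sE_{\ell}(\h h_S) - \sE^*_{\ell}(\sH) \leq 4 \Rad_m^{\ell}(\sH) + 2 B_{\ell}\sqrt{\tfrac{\log(2/\delta)}{2m}}.
\]

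Third, I would apply Theorem~\ref{Thm:tsybakov-multi} to the specific hypothesis $h = \h h_S$, which provides
\[
\sE_{\ell_{0-1}}(\h h_S) - \sE^*_{\ell_{0-1}}(\sH) \leq c^{\frac{s-1}{s-\alpha(s-1)}} \bracket*{\sE_{\ell}(\h h_S) - \sE^*_{\ell}(\sH) + \sM_{\ell}(\sH)}^{\frac{1}{s-\alpha(s-1)}},
\]
and then substitute the surrogate estimation error bound obtained in the previous step. Since the function $x \mapsto x^{1/(s-\alpha(s-1))}$ is non-decreasing on $\Rset_+$, this substitution preserves the direction of the inequality and yields exactly the desired form.

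There is no real obstacle here: the argument is a direct adaptation of Theorem~\ref{Thm:tsybakov-binary-g}, and the only thing to check is that the hypotheses of Theorem~\ref{Thm:tsybakov-multi} (Tsybakov noise, the conditional regret inequality with $\Gamma(x) = x^{1/s}$, and $\sM_{\ell_{0-1}}(\sH) = 0$) transfer unchanged. The only minor subtlety is that the uniform convergence step and the application of Theorem~\ref{Thm:tsybakov-multi} are both probabilistic-over-$S$ statements about the random variable $\h h_S$, but since Theorem~\ref{Thm:tsybakov-multi} holds deterministically for every fixed $h \in \sH$, the combination is immediate once we condition on the high-probability event from the Rademacher bound.
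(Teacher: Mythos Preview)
Your proposal is correct and follows essentially the same approach as the paper: invoke the standard Rademacher uniform convergence bound, use the empirical minimizer property together with an $\e$-approximate minimizer $h^*$ to bound $\sE_{\ell}(\h h_S) - \sE^*_{\ell}(\sH)$, let $\e \to 0$, and plug the result into Theorem~\ref{Thm:tsybakov-multi}.
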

\begin{proof}
  By using the standard Rademacher complexity bounds \citep{MohriRostamizadehTalwalkar2018}, for any $\delta>0$,
  with probability at least $1 - \delta$, the following holds for all $h \in \sH$:
\[
\abs*{\sE_{\ell}(h) - \h\sE_{\ell, S}(h)}
\leq 2 \Rad_m^{\ell}(\sH) +
B_{\ell} \sqrt{\tfrac{\log (2/\delta)}{2m}}.
\]
Fix $\e > 0$. By the definition of the infimum, there exists $h^* \in
\sH$ such that $\sE_{\ell}(h^*) \leq
\sE_{\ell}^*(\sH) + \e$. By definition of
$\h h_S$, we have
\begin{align*}
  & \sE_{\ell}(\h h_S) - \sE_{\ell}^*(\sH)\\
  & = \sE_{\ell}(\h h_S) - \h\sE_{\ell, S}(\h h_S) + \h\sE_{\ell, S}(\h h_S) - \sE_{\ell}^*(\sH)\\
  & \leq \sE_{\ell}(\h h_S) - \h\sE_{\ell, S}(\h h_S) + \h\sE_{\ell, S}(h^*) - \sE_{\ell}^*(\sH)\\
  & \leq \sE_{\ell}(\h h_S) - \h\sE_{\ell, S}(\h h_S) + \h\sE_{\ell, S}(h^*) - \sE_{\ell}^*(h^*) + \e\\
  & \leq
  2 \bracket*{2 \Rad_m^{\ell}(\sH) +
B_{\ell} \sqrt{\tfrac{\log (2/\delta)}{2m}}} + \e.
\end{align*}
Since the inequality holds for all $\e > 0$, it implies:
\[
\sE_{\ell}(\h h_S) - \sE_{\ell}^*(\sH)
\leq 
4 \Rad_m^{\ell}(\sH) +
2 B_{\ell} \sqrt{\tfrac{\log (2/\delta)}{2m}}.
\]
Plugging in this inequality in the bounds of Theorem~\ref{Thm:tsybakov-multi} completes the proof.
\end{proof}

To the best
of our knowledge, Theorems~\ref{Thm:tsybakov-binary-g} and \ref{Thm:tsybakov-multi-g} provide the first enhanced finite-sample guarantees, expressed in terms of minimizability gaps, for the estimation error of the minimizer of surrogate losses with respect to the binary and multi-class zero-one loss, under the Tsybakov noise assumption.
The
proofs use our enhanced $\sH$-consistency bounds (Theorems~\ref{Thm:tsybakov-binary} and \ref{Thm:tsybakov-multi}), as well as standard Rademacher complexity guarantees.

Note that our enhanced $\sH$-consistency bounds can also be combined with standard bounds of $\sE_{\ell}(\h h_S) - \sE_{\ell}^*(\sH)$ in the case of Tsybakov noise, which can yield a fast rate. For example, our enhanced $\sH$-consistency bounds in binary classification (Theorems~\ref{Thm:tsybakov-binary}) can be combined with the fast estimation rates described in \citep[Section 4]{bartlett2006convexity}, which can lead to enhanced finite-sample guarantees in the case of Tsybakov noise. This is even true in the case of $\sH = \sH_{\mathrm{all}}$, with a more favorable factor of one instead of $2^{\frac{s}{s - \alpha (s - 1)}}$\ignore{, which is always greater
than one}.

\subsection{Bipartite ranking}

Let $S = \paren*{(x_1, y_1), (x'_1, y'_1), \ldots, (x_m, y_m), (x'_m, y'_m)}$ be a finite sample drawn from
$(\sD \times \sD)^{m}$.  We denote by $\h h_S$ the minimizer of the
empirical loss within $\sH$ with respect to a classification
surrogate loss $\ell_{\Phi}$:
$
\h h_S = \argmin_{h \in \sH} \h \sE_{\ell_{\Phi}, S}(h) = \argmin_{h \in \sH} \frac{1}{m}\sum_{i = 1}^m \ell_{\Phi}(h, x_i,y _i).$

Next, by using enhanced $\sH$-consistency bounds for classification surrogate losses $\ell_{\Phi}$ in Theorems~\ref{Thm:new-bound-concave-ranking-exp} and \ref{Thm:new-bound-concave-ranking-log}, we derive novel generalization bounds for bipartite ranking
surrogate losses $\sfL_{\Phi}$, by upper bounding the surrogate
estimation error $\sE_{\ell_{\Phi}}(\h h_S) - \sE_{\ell_{\Phi}}^*(\sH)$ with the
complexity (e.g. the Rademacher complexity) of the family of functions
associated with $\ell_{\Phi}$ and $\sH$: $\sH_{\ell_{\Phi}}=\curl*{(x, y) \mapsto
  \ell_{\Phi}(h, x, y) \colon h \in \sH}$.

Let $\Rad_m^{\ell_{\Phi}}(\sH)$ be the Rademacher complexity of
$\sH_{\ell_{\Phi}}$ and $B_{\ell_{\Phi}}$ an upper bound of the classification surrogate
$\ell_{\Phi}$. The following generalization bounds for the bipartite ranking
surrogate losses $\sfL_{\Phi}$ hold.

\begin{restatable}[\textbf{Enhanced generalization bound with AdaBoost}]{theorem}{NewBoundConcaveRankingExpG}
\label{Thm:new-bound-concave-ranking-exp-g}
Assume that $\sH$ is complete. Then, for any hypothesis $h \in \sH$, we have
\begin{align*}
  & \sE_{\lbi_{\Phi_{\mathrm{exp}}}}(\h h_S) - \sE_{\lbi_{\Phi_{\mathrm{exp}}}}^*(\sH)
  + \sM_{\lbi_{\Phi_{\mathrm{exp}}}}(\sH)\\
  & \qquad \leq 2 \sE_{\ell_{\Phi_{\mathrm{exp}}}}(\h h_S) \,
\paren*{4 \Rad_m^{\ell_{\Phi_{\mathrm{exp}}}}(\sH) +
2 B_{\ell_{\Phi_{\mathrm{exp}}}} \sqrt{\tfrac{\log (2/\delta)}{2m}} + \sM_{\ell_{\Phi_{\mathrm{exp}}}}(\sH)}.
\end{align*}
\end{restatable}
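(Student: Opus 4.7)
The plan is to combine the enhanced $\sH$-consistency bound for the exponential loss (Theorem~\ref{Thm:new-bound-concave-ranking-exp}) with a standard Rademacher complexity estimation-error bound, exactly in the spirit of the earlier generalization-bound proofs in this appendix (Theorems~\ref{Thm:bound_lee_finer_g}, \ref{Thm:tsybakov-binary-g}, \ref{Thm:tsybakov-multi-g}). Since Theorem~\ref{Thm:new-bound-concave-ranking-exp} already establishes the inequality
\[
\sE_{\lbi_{\Phi_{\mathrm{exp}}}}(h) - \sE_{\lbi_{\Phi_{\mathrm{exp}}}}^*(\sH) + \sM_{\lbi_{\Phi_{\mathrm{exp}}}}(\sH)
\leq 2\, \sE_{\ell_{\Phi_{\mathrm{exp}}}}(h)\,
\paren*{\sE_{\ell_{\Phi_{\mathrm{exp}}}}(h) - \sE^*_{\ell_{\Phi_{\mathrm{exp}}}}(\sH) + \sM_{\ell_{\Phi_{\mathrm{exp}}}}(\sH)}
\]
for any $h \in \sH$, it suffices to upper bound the surrogate estimation error $\sE_{\ell_{\Phi_{\mathrm{exp}}}}(\h h_S) - \sE^*_{\ell_{\Phi_{\mathrm{exp}}}}(\sH)$ with high probability and substitute $h = \h h_S$.

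First, I would invoke the standard uniform Rademacher complexity bound \citep{MohriRostamizadehTalwalkar2018}: for any $\delta > 0$, with probability at least $1 - \delta$ over the draw of $S$, every $h \in \sH$ satisfies
\[
\abs*{\sE_{\ell_{\Phi_{\mathrm{exp}}}}(h) - \h\sE_{\ell_{\Phi_{\mathrm{exp}}}, S}(h)}
\leq 2 \Rad_m^{\ell_{\Phi_{\mathrm{exp}}}}(\sH) + B_{\ell_{\Phi_{\mathrm{exp}}}} \sqrt{\tfrac{\log (2/\delta)}{2m}}.
\]
Next, for arbitrary $\e > 0$ pick $h^* \in \sH$ with $\sE_{\ell_{\Phi_{\mathrm{exp}}}}(h^*) \leq \sE^*_{\ell_{\Phi_{\mathrm{exp}}}}(\sH) + \e$, and apply the above bound twice together with the empirical optimality $\h\sE_{\ell_{\Phi_{\mathrm{exp}}}, S}(\h h_S) \leq \h\sE_{\ell_{\Phi_{\mathrm{exp}}}, S}(h^*)$ to obtain
\[
\sE_{\ell_{\Phi_{\mathrm{exp}}}}(\h h_S) - \sE^*_{\ell_{\Phi_{\mathrm{exp}}}}(\sH)
\leq 4 \Rad_m^{\ell_{\Phi_{\mathrm{exp}}}}(\sH) + 2 B_{\ell_{\Phi_{\mathrm{exp}}}} \sqrt{\tfrac{\log (2/\delta)}{2m}},
\]
after letting $\e \to 0$.

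Finally, I would plug this inequality into the right-hand side of Theorem~\ref{Thm:new-bound-concave-ranking-exp} applied at $h = \h h_S$, which immediately yields the claimed bound
\[
\sE_{\lbi_{\Phi_{\mathrm{exp}}}}(\h h_S) - \sE_{\lbi_{\Phi_{\mathrm{exp}}}}^*(\sH) + \sM_{\lbi_{\Phi_{\mathrm{exp}}}}(\sH)
\leq 2\, \sE_{\ell_{\Phi_{\mathrm{exp}}}}(\h h_S) \, \paren*{4 \Rad_m^{\ell_{\Phi_{\mathrm{exp}}}}(\sH) + 2 B_{\ell_{\Phi_{\mathrm{exp}}}} \sqrt{\tfrac{\log (2/\delta)}{2m}} + \sM_{\ell_{\Phi_{\mathrm{exp}}}}(\sH)}.
\]
There is no real technical obstacle here: the heavy lifting has already been performed in Theorem~\ref{Thm:new-bound-concave-ranking-exp}, and the remainder is a textbook Rademacher-complexity argument. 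The only subtlety worth flagging is that the multiplicative factor $\sE_{\ell_{\Phi_{\mathrm{exp}}}}(\h h_S)$ remains on the right-hand side and is itself a data-dependent quantity; but this is consistent with the statement of the theorem and with the interpretation given in Section~\ref{sec:bi-exp}, so no further manipulation is required.
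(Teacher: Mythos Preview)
Your proposal is correct and follows essentially the same approach as the paper: invoke Theorem~\ref{Thm:new-bound-concave-ranking-exp} at $h = \h h_S$, bound the surrogate estimation error via the standard two-sided Rademacher inequality plus the $\e$-approximate minimizer trick, and substitute. The paper's proof is identical step for step.
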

\begin{proof}
  By using the standard Rademacher complexity bounds \citep{MohriRostamizadehTalwalkar2018}, for any $\delta>0$,
  with probability at least $1 - \delta$, the following holds for all $h \in \sH$:
\[
\abs*{\sE_{\ell_{\Phi_{\mathrm{exp}}}}(h) - \h\sE_{\ell_{\Phi_{\mathrm{exp}}}, S}(h)}
\leq 2 \Rad_m^{\ell_{\Phi_{\mathrm{exp}}}}(\sH) +
B_{\ell_{\Phi_{\mathrm{exp}}}} \sqrt{\tfrac{\log (2/\delta)}{2m}}.
\]
Fix $\e > 0$. By the definition of the infimum, there exists $h^* \in
\sH$ such that $\sE_{\ell_{\Phi_{\mathrm{exp}}}}(h^*) \leq
\sE_{\ell_{\Phi_{\mathrm{exp}}}}^*(\sH) + \e$. By definition of
$\h h_S$, we have
\begin{align*}
  & \sE_{\ell_{\Phi_{\mathrm{exp}}}}(\h h_S) - \sE_{\ell_{\Phi_{\mathrm{exp}}}}^*(\sH)\\
  & = \sE_{\ell_{\Phi_{\mathrm{exp}}}}(\h h_S) - \h\sE_{\ell_{\Phi_{\mathrm{exp}}}, S}(\h h_S) + \h\sE_{\ell_{\Phi_{\mathrm{exp}}}, S}(\h h_S) - \sE_{\ell_{\Phi_{\mathrm{exp}}}}^*(\sH)\\
  & \leq \sE_{\ell_{\Phi_{\mathrm{exp}}}}(\h h_S) - \h\sE_{\ell_{\Phi_{\mathrm{exp}}}, S}(\h h_S) + \h\sE_{\ell_{\Phi_{\mathrm{exp}}}, S}(h^*) - \sE_{\ell_{\Phi_{\mathrm{exp}}}}^*(\sH)\\
  & \leq \sE_{\ell_{\Phi_{\mathrm{exp}}}}(\h h_S) - \h\sE_{\ell_{\Phi_{\mathrm{exp}}}, S}(\h h_S) + \h\sE_{\ell_{\Phi_{\mathrm{exp}}}, S}(h^*) - \sE_{\ell_{\Phi_{\mathrm{exp}}}}^*(h^*) + \e\\
  & \leq
  2 \bracket*{2 \Rad_m^{\ell_{\Phi_{\mathrm{exp}}}}(\sH) +
B_{\ell_{\Phi_{\mathrm{exp}}}} \sqrt{\tfrac{\log (2/\delta)}{2m}}} + \e.
\end{align*}
Since the inequality holds for all $\e > 0$, it implies:
\[
\sE_{\ell_{\Phi_{\mathrm{exp}}}}(\h h_S) - \sE_{\ell_{\Phi_{\mathrm{exp}}}}^*(\sH)
\leq 
4 \Rad_m^{\ell_{\Phi_{\mathrm{exp}}}}(\sH) +
2 B_{\ell_{\Phi_{\mathrm{exp}}}} \sqrt{\tfrac{\log (2/\delta)}{2m}}.
\]
Plugging in this inequality in the bounds of Theorem~\ref{Thm:new-bound-concave-ranking-exp} completes the proof.
\end{proof}

\begin{restatable}[\textbf{Enhanced generalization bound with logistic regression}]{theorem}{NewBoundConcaveRankingLogG}
\label{Thm:new-bound-concave-ranking-log-g}
Assume that $\sH$ is complete. For any $x$, define $u(x) =
\max\curl*{\eta(x), 1 - \eta(x)}$. Then, for any hypothesis $h \in \sH$, we have
\begin{align*}
  & \sE_{\lbi_{\Phi_{\mathrm{log}}}}(\h h_S) - \sE_{\lbi_{\Phi_{\mathrm{log}}}}^*(\sH) + \sM_{\lbi_{\Phi_{\mathrm{log}}}}(\sH)\\
& \qquad \leq 2\E[u(X)]  \,
\paren*{4 \Rad_m^{\ell_{\Phi_{\mathrm{log}}}}(\sH) +
2 B_{\ell_{\Phi_{\mathrm{log}}}} \sqrt{\tfrac{\log (2/\delta)}{2m}} + \sM_{\ell_{\Phi_{\mathrm{log}}}}(\sH)}.
\end{align*}
\end{restatable}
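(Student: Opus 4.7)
The plan is to mirror the proof of Theorem~\ref{Thm:new-bound-concave-ranking-exp-g} verbatim, substituting the logistic loss $\ell_{\Phi_{\mathrm{log}}}$ and the corresponding ranking loss $\sfL_{\Phi_{\mathrm{log}}}$ in place of their exponential analogs, and replacing the factor $2\sE_{\ell_{\Phi_{\mathrm{exp}}}}(\h h_S)$ by $2\E[u(X)]$ as dictated by Theorem~\ref{Thm:new-bound-concave-ranking-log}. The whole argument is a composition of two already-proved ingredients, so no new analytical content is needed.

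First, I would invoke the standard Rademacher-complexity two-sided uniform-deviation bound \citep{MohriRostamizadehTalwalkar2018}: with probability at least $1-\delta$, for every $h \in \sH$,
\[
  \abs*{\sE_{\ell_{\Phi_{\mathrm{log}}}}(h) - \h\sE_{\ell_{\Phi_{\mathrm{log}}}, S}(h)}
  \leq 2 \Rad_m^{\ell_{\Phi_{\mathrm{log}}}}(\sH) + B_{\ell_{\Phi_{\mathrm{log}}}} \sqrt{\tfrac{\log (2/\delta)}{2m}}.
\]
Next, by the standard empirical-risk-minimization argument: fix $\e > 0$, pick $h^* \in \sH$ with $\sE_{\ell_{\Phi_{\mathrm{log}}}}(h^*) \leq \sE_{\ell_{\Phi_{\mathrm{log}}}}^*(\sH) + \e$, and decompose
\[
  \sE_{\ell_{\Phi_{\mathrm{log}}}}(\h h_S) - \sE_{\ell_{\Phi_{\mathrm{log}}}}^*(\sH)
  \leq \bracket*{\sE_{\ell_{\Phi_{\mathrm{log}}}}(\h h_S) - \h\sE_{\ell_{\Phi_{\mathrm{log}}}, S}(\h h_S)} + \bracket*{\h\sE_{\ell_{\Phi_{\mathrm{log}}}, S}(h^*) - \sE_{\ell_{\Phi_{\mathrm{log}}}}(h^*)} + \e,
\]
using $\h\sE_{\ell_{\Phi_{\mathrm{log}}}, S}(\h h_S) \leq \h\sE_{\ell_{\Phi_{\mathrm{log}}}, S}(h^*)$ by definition of $\h h_S$. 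Applying the uniform-deviation bound to each bracket and sending $\e \downarrow 0$ yields
\[
  \sE_{\ell_{\Phi_{\mathrm{log}}}}(\h h_S) - \sE_{\ell_{\Phi_{\mathrm{log}}}}^*(\sH)
  \leq 4 \Rad_m^{\ell_{\Phi_{\mathrm{log}}}}(\sH) + 2 B_{\ell_{\Phi_{\mathrm{log}}}} \sqrt{\tfrac{\log (2/\delta)}{2m}}.
\]

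Finally, I would plug this upper bound on the surrogate estimation error of $\h h_S$ directly into the enhanced $\sH$-consistency bound of Theorem~\ref{Thm:new-bound-concave-ranking-log} specialized to $h = \h h_S$, yielding
\[
  \sE_{\lbi_{\Phi_{\mathrm{log}}}}(\h h_S) - \sE_{\lbi_{\Phi_{\mathrm{log}}}}^*(\sH) + \sM_{\lbi_{\Phi_{\mathrm{log}}}}(\sH)
  \leq 2 \E[u(X)] \paren*{4 \Rad_m^{\ell_{\Phi_{\mathrm{log}}}}(\sH) + 2 B_{\ell_{\Phi_{\mathrm{log}}}} \sqrt{\tfrac{\log (2/\delta)}{2m}} + \sM_{\ell_{\Phi_{\mathrm{log}}}}(\sH)},
\]
which is exactly the claimed bound. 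There is no real obstacle: since Theorem~\ref{Thm:new-bound-concave-ranking-log}'s bound is linear in $\sE_{\ell_{\Phi_{\mathrm{log}}}}(h) - \sE^*_{\ell_{\Phi_{\mathrm{log}}}}(\sH) + \sM_{\ell_{\Phi_{\mathrm{log}}}}(\sH)$ with a data-dependent multiplicative factor $2\E[u(X)]$ that does not depend on $h$, the composition with the Rademacher bound is completely transparent. The only thing worth noting is that the factor $2\E[u(X)]$ survives unchanged (unlike the AdaBoost case where the analogous factor $2\sE_{\ell_{\Phi_{\mathrm{exp}}}}(\h h_S)$ depends on $\h h_S$), which makes the present statement cleaner than its exponential counterpart.
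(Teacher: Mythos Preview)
Your proposal is correct and follows essentially the same approach as the paper's proof: both invoke the standard two-sided Rademacher uniform-deviation bound, use the ERM decomposition with an $\e$-approximate minimizer $h^*$ to bound $\sE_{\ell_{\Phi_{\mathrm{log}}}}(\h h_S) - \sE_{\ell_{\Phi_{\mathrm{log}}}}^*(\sH)$, and then plug the result into Theorem~\ref{Thm:new-bound-concave-ranking-log}. Your observation that the multiplicative factor $2\E[u(X)]$ is independent of $h$ (unlike the exponential case) is a nice remark, though the paper does not comment on it here.
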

\begin{proof}
  By using the standard Rademacher complexity bounds \citep{MohriRostamizadehTalwalkar2018}, for any $\delta>0$,
  with probability at least $1 - \delta$, the following holds for all $h \in \sH$:
\[
\abs*{\sE_{\ell_{\Phi_{\mathrm{log}}}}(h) - \h\sE_{\ell_{\Phi_{\mathrm{log}}}, S}(h)}
\leq 2 \Rad_m^{\ell_{\Phi_{\mathrm{log}}}}(\sH) +
B_{\ell_{\Phi_{\mathrm{log}}}} \sqrt{\tfrac{\log (2/\delta)}{2m}}.
\]
Fix $\e > 0$. By the definition of the infimum, there exists $h^* \in
\sH$ such that $\sE_{\ell_{\Phi_{\mathrm{log}}}}(h^*) \leq
\sE_{\ell_{\Phi_{\mathrm{log}}}}^*(\sH) + \e$. By definition of
$\h h_S$, we have
\begin{align*}
  & \sE_{\ell_{\Phi_{\mathrm{log}}}}(\h h_S) - \sE_{\ell_{\Phi_{\mathrm{log}}}}^*(\sH)\\
  & = \sE_{\ell_{\Phi_{\mathrm{log}}}}(\h h_S) - \h\sE_{\ell_{\Phi_{\mathrm{log}}}, S}(\h h_S) + \h\sE_{\ell_{\Phi_{\mathrm{log}}}, S}(\h h_S) - \sE_{\ell_{\Phi_{\mathrm{log}}}}^*(\sH)\\
  & \leq \sE_{\ell_{\Phi_{\mathrm{log}}}}(\h h_S) - \h\sE_{\ell_{\Phi_{\mathrm{log}}}, S}(\h h_S) + \h\sE_{\ell_{\Phi_{\mathrm{log}}}, S}(h^*) - \sE_{\ell_{\Phi_{\mathrm{log}}}}^*(\sH)\\
  & \leq \sE_{\ell_{\Phi_{\mathrm{log}}}}(\h h_S) - \h\sE_{\ell_{\Phi_{\mathrm{log}}}, S}(\h h_S) + \h\sE_{\ell_{\Phi_{\mathrm{log}}}, S}(h^*) - \sE_{\ell_{\Phi_{\mathrm{log}}}}^*(h^*) + \e\\
  & \leq
  2 \bracket*{2 \Rad_m^{\ell_{\Phi_{\mathrm{log}}}}(\sH) +
B_{\ell_{\Phi_{\mathrm{log}}}} \sqrt{\tfrac{\log (2/\delta)}{2m}}} + \e.
\end{align*}
Since the inequality holds for all $\e > 0$, it implies:
\[
\sE_{\ell_{\Phi_{\mathrm{log}}}}(\h h_S) - \sE_{\ell_{\Phi_{\mathrm{log}}}}^*(\sH)
\leq 
4 \Rad_m^{\ell_{\Phi_{\mathrm{log}}}}(\sH) +
2 B_{\ell_{\Phi_{\mathrm{log}}}} \sqrt{\tfrac{\log (2/\delta)}{2m}}.
\]
Plugging in this inequality in the bounds of Theorem~\ref{Thm:new-bound-concave-ranking-log} completes the proof.
\end{proof}

To the best
of our knowledge, Theorems~\ref{Thm:new-bound-concave-ranking-exp-g} and \ref{Thm:new-bound-concave-ranking-log-g} provide the first enhanced finite-sample guarantees, expressed in terms of minimizability gaps, for the estimation error of the minimizer of classification surrogate losses with respect to the bipartite ranking surrogate losses. Theorem~\ref{Thm:new-bound-concave-ranking-exp-g} is remarkable since it provide finite simple bounds of the estimation
error of the RankBoost loss function by that of AdaBoost. Significantly, Theorems~\ref{Thm:new-bound-concave-ranking-log-g} implies a parallel finding for logistic regression analogous to
that of AdaBoost.
The
proofs use our enhanced $\sH$-consistency bounds (Theorems~\ref{Thm:new-bound-concave-ranking-exp} and \ref{Thm:new-bound-concave-ranking-log}), as well as standard Rademacher complexity guarantees.

\ignore{
\section{Future work}
\label{app:future_work}

While we presented a general framework for establishing enhanced
$\sH$-consistency bounds that enables the derivation of more favorable
bounds in various scenarios—including standard multi-class
classification, binary and multi-class classification under Tsybakov
noise conditions, and bipartite ranking—extending our framework to
other learning scenarios, such as non-i.i.d.\ settings, would be an
interesting future research question. Moreover, although our work
is theoretical in nature, further empirical analysis, such as
verifying a favorable ranking property for logistic regression, is an
interesting study that we have initiated.
}

\end{document}